\newcommand{\eqdef}{\coloneqq}
\DeclareMathOperator{\prox}{prox}
\newcommand{\Exp}[1]{\mathbb{E}\left[#1\right]}
\def\<#1,#2>{\langle #1,#2\rangle}
\newcommand{\cO}{\mathcal O}
\newcommand{\cD}{\mathcal D}
\newcommand{\cQ}{\mathcal Q}
\newcommand{\cS}{\mathcal S}
\newcommand{\R}{\mathbb R}
 \newcommand{\squeeze}{} 
\definecolor{bgcolor}{rgb}{0.93,0.99,1}
\definecolor{bgcolor2}{rgb}{0.8,1,0.8}
\definecolor{bgcolor3}{rgb}{0.50,0.90,0.50}
\definecolor{mydarkgreen}{RGB}{39,130,67}
\definecolor{mydarkred}{RGB}{192,25,25}
\newcommand{\green}{\color{mydarkgreen}}
\newcommand{\red}{\color{mydarkred}}
\newcommand{\cmark}{\green\ding{51}}%
\newcommand{\xmark}{\red\ding{55}}%
\newcommand{\algname}[1]{{\sf\red\relscale{0.90}#1}\xspace}
\newcommand{\dataset}[1]{{\tt\color{blue}#1}\xspace}
\newtheorem{theorem}{Theorem}
\newtheorem{corollary}{Corollary}
\newtheorem{lemma}{Lemma}
\newtheorem{assumption}[theorem]{Assumption}
\newcommand{\MM}{W}
 \title{\bf Variance Reduced ProxSkip: Algorithm, Theory \\ and Application to Federated Learning}
\author{
\bf Grigory Malinovsky \\ KAUST \\  \texttt{grigory.malinovsky@kaust.edu.sa}  \and
\bf Kai Yi \\ KAUST \\  \texttt{kai.yi@kaust.edu.sa} \and
\bf Peter Richt\'{a}rik \\ KAUST \\  \texttt{peter.richtarik@kaust.edu.sa}   
} 
\date{May 26, 2022 \\ (revised on July 9, 2022)}
\begin{document}

\maketitle

\begin{abstract}

We study distributed optimization methods based on the {\em local training (LT)} paradigm: achieving communication efficiency by performing richer local gradient-based training on the clients before  parameter averaging. 
Looking back at the progress of the field, we {\em identify 5 generations of LT methods}: 1) heuristic, 2) homogeneous, 3) sublinear, 4) linear, and 5) accelerated. The 5${}^{\rm th}$ generation, initiated by the \algname{ProxSkip}  method of \citet{ProxSkip} and its analysis, is characterized by the first theoretical confirmation that  LT is a communication  acceleration mechanism. Inspired by this recent progress, we contribute to the 5${}^{\rm th}$ generation of LT methods by showing that it is possible to enhance them further using {\em variance reduction}. While all previous theoretical results for LT methods ignore the cost of local work altogether, and are framed purely in terms of the number of communication rounds, we show that our methods can be substantially faster in terms of the {\em total training cost} than the state-of-the-art method \algname{ProxSkip} in theory and practice in the regime when local computation is sufficiently expensive. We characterize this threshold theoretically, and confirm our theoretical predictions with empirical results.
\end{abstract}

{\tableofcontents}

\section{Introduction}

Announced in April 2017 in a Google AI blog \citep{FLblog2017}, and citing four foundational papers \citep{FEDLEARN, FEDOPT, FL2017-AISTATS, FL-secure_aggreg} of what was to become a new and rapidly growing interdisciplinary field, {\em federated learning} (FL) constitutes a novel paradigm for training supervised machine learning models. The key idea  is the acknowledgement that increasing amounts of data are being captured and stored on edge devices, such as mobile phones, sensors and hospital workstations, and that moving the data to a  datacenter for centralized processing may be infeasible or undesirable for various reasons, including high energy costs and  data privacy concerns~\citep{FL-big,FL_survey_2020}. FL faces a multitude of challenges which are being actively addressed by the research community.

\subsection{Formalism}
We study the standard optimization formulation of federated learning \citep{FEDOPT,FL2017-AISTATS,FL-big,FieldGuide2021} given by
\begin{equation}\label{eq:P}
\squeeze \min\limits_{x\in \R^{d'}} \phi(x), \qquad  \phi(x)\eqdef  \sum\limits_{i=1}^M \frac{n_i}{n} \phi_i(x),  \qquad \phi_i(x) \eqdef 
\frac{1}{n_i} \sum\limits_{j=1}^{n_i} \phi_{ij}(x),
\end{equation}
where $M$ is the number of clients (devices, machines, workers), $n_i$ is the number of training data points on client $i\in \{1,2,\dots, M\}$, and  $n\eqdef \sum_{i=1}^M n_i$ is the total number of training data points collectively owned by this federation of $M$ clients. Note that $\phi$ is the empirical risk over the federated dataset. 
Perhaps conceptually the simplest method for solving (\ref{eq:P}) is {\em gradient descent (\algname{GD})},
\begin{equation}\squeeze \label{eq:GD}x_{t+1} = x_t - \gamma \nabla \phi(x_t) = x_t - \gamma \sum\limits_{i=1}^M \frac{ n_i}{n} \nabla \phi_i(x_t)  = \sum\limits_{i=1}^M \frac{ n_i}{n}  \left(x_t - \gamma  \nabla \phi_{i}(x_t) \right), \end{equation}
where $\gamma>0$ is the stepsize. It will be useful to describe how \algname{GD}  would be implemented in a federated environment. First, all clients $i\in \{1,\dots, M\}$ in parallel perform a single local gradient step starting from the current global model $x_t$, arriving at the local models $x_{it} \eqdef x_t - \gamma  \nabla \phi_{i}(x_t)$, $i\in \{1,\dots,M\}$. These local models are then communicated to the {\em orchestrating server}, which aggregates them via weighted averaging, arriving at the new global model $x_{t+1} =\sum_{i=1}^M \frac{n_i}{n} x_{it}$. This new model is then broadcast back to all clients, and the process is repeated until a model of sufficient quality is found.

\subsection{Federated averaging}\label{sec:FedAvg}

Proposed by \citet{Povey2015,SparkNet2016,FL2017-AISTATS}, federated averaging (\algname{FedAvg}) is arguably the most  popular method for solving the standard FL formulation (\ref{eq:P}). Motivated by the specific constraints of federated environments, \algname{FedAvg} can be seen as a practical enhancement of \algname{GD} via the simultaneous application of three techniques: a) data sampling (DS), b) client sampling (CS), and c) local training (LT). That is, $$\text{\algname{FedAvg} = \algname{GD}  + (DS + CS + LT)}.$$ We will now briefly describe each of these three \algname{GD}-enhancing techniques separately. 


\begin{itemize} 
\item [(a)] {\bf \algname{GD}  + Data Sampling.} In situations when the local datasets are so large that the computation of the exact local gradients  becomes a bottleneck, it makes sense to approximate them via data sampling. That is, instead of passing through all local data  to compute the local gradient $\nabla \phi_i(x_t)$, each client $i$ computes the gradients $\nabla \phi_{ij}(x_t)$ for $j\in \cD_{it}$ only, where $\cD_{it}$ is a suitably chosen  small-enough subset of the local dataset $\{1,\dots, n_i\}$. These gradients are then used to form gradient estimators $g_{i}(x_t)\approx \nabla \phi_i(x_t)$ which are used to perform  a local \algname{SGD} step on all clients. The rest of the procedure is the same as in the case of \algname{GD}. That is, the local models obtained in this way are sent to the orchestrating server, the server aggregates them via weighted averaging and broadcasts the resulting model back to all clients. Combination of \algname{GD} and DS can be seen as a particular version of \algname{SGD}, where the stochastic gradient estimator is formed from the gradients $\nabla \phi_{ij}(x_t)$ associated with the datapoints $(i,j)$ where $j\in \cup_{i=1}^M \cD_{it}$.
While DS is still an active area of research, it has been studied for a long time, and is in general well  understood \citep{pegasos2,Li2014,csiba2016importance,SGD-AS,nonconvex_arbitrary,ES-SGD-nonconvex}.

\item [(b)] {\bf \algname{GD} + Client Sampling.} 
 In practical federated environments, and especially in cross-device FL \citep{FL-big}, the number of clients  is enormous, they are not all available at all times, and the orchestrating server has limited compute and memory capacity. For these and other reasons, practical FL methods need to work in an environment in which a small subset $\cS_t\subseteq \{1,\dots,M\}$ of the clients is sampled (``participates'') in each communication/aggregation/training round only. Since only the participating clients $i\in \cS_t$ perform a local \algname{GD} step and  communicate the resulting local model to the orchestrating server for aggregation, this induces an error compared to \algname{GD}, which has an adverse effect on the convergence rate. Combination of \algname{GD} and CS can be seen as a particular version of \algname{SGD}, where the stochastic gradient estimator is formed from the gradients $\nabla \phi_{ij}(x_t)$ associated with the datapoints $(i,j)$ where $j\in \cup_{i=1}^{\cS_t} \{1,\dots,n_i\}$. While  CS is still an active area of research, since CS is a special type of DS, much was known about CS long before \algname{FedAvg} was proposed~\citep{SGD-AS,nonconvex_arbitrary}.  Still, CS poses new challenges tackled by the community \cite{Eichner2019semi-cyclicSGD_for_FL,OptClientSampling2020,SGD-AS,ClientSelection-Gauri,Cohort2021}.

\item [(c)] {\bf \algname{GD} + Local Training.} 
In federated learning, the cost of communication between the clients and the orchestrating server forms the key bottleneck. Indeed, in their \algname{FedAvg} paper,  which introduced LT to the world of federated learning, \citet{FL2017-AISTATS} wrote: \begin{quote}{\em \footnotesize ``In contrast\footnote{to datacenter optimization}, in federated optimization communication costs dominate''.}\end{quote} LT is a conceptually simple and surprisingly powerful communication-acceleration technique. The basic idea behind LT is for the clients to perform {\em multiple} local \algname{GD} steps instead of a single step (which is how \algname{GD} operates) before communication and aggregation takes place. The intuitive reasoning used in virtually all papers on this topic is: performing  multiple local \algname{GD} steps results in ``richer'' and ultimately more useful local training in  the sense that fewer communication rounds will {\em hopefully} suffice to finish the training. \citet{FL2017-AISTATS} supported this intuition with ample empirical evidence, and credited LT as the critical component behind the success of \algname{FedAvg}: \begin{quote}{\em\footnotesize ``Thus, our goal is to use additional computation in order to decrease the number of rounds of communication needed to train a model\dots'' ``Communication costs are the principal constraint, and we show a reduction in required communication rounds by 10--100$\times$ as compared to synchronized stochastic gradient descent.'' ``\dots the speedups we achieve are due primarily to adding more computation on each client''. }\end{quote}


 \end{itemize}

\section{Five Generations of Local Training Methods}

\begin{table*}[t]
    \centering
    \scriptsize
    \caption{\footnotesize Five generations of local training (LT) methods summarizing the progress made by the ML/FL community over the span of 7+ years in the understanding of the {\em communication acceleration properties of LT}. }
    \label{tab:comparison}
    \begin{threeparttable}
\begin{tabular}{lc  lll}
{\bf Generation}\tnote{\color{blue}(a)}  & \bf Theory & \bf Assumptions & {\bf Comm.\ Complexity}\tnote{\color{blue}(b)} & \bf Selected Key References \\
\hline
\multirow{3}{*}{1. Heuristic} 
& \xmark & --- & empirical results only & \algname{LocalSGD} \citep{Povey2015}   \\ 
& \xmark & --- & empirical results only & \algname{SparkNet} \citep{SparkNet2016}  \\ 
& \xmark & --- & empirical results only & \algname{FedAvg} \citep{FL2017-AISTATS} \\
\hline
\multirow{2}{*}{2. Homogeneous} 
& \cmark & bounded gradients & sublinear &   \algname{FedAvg} \citep{Li-local-bounded-grad-norms--ICLR2020}\\
& \cmark & bounded grad.\ diversity\tnote{\color{blue}(c)}  & linear but worse than \algname{GD} & \algname{LFGD} \citep{LocalDescent2019}  \\
\hline
\multirow{2}{*}{3. Sublinear} 
& \cmark & standard\tnote{\color{blue}(d)}  & sublinear &  \algname{LGD} \citep{localGD} \\
& \cmark & standard                                    & sublinear &  \algname{LSGD} \citep{localSGD-AISTATS2020}  \\
\hline
\multirow{3}{*}{4. Linear} 
& \cmark & standard & linear but worse than \algname{GD}  & \algname{Scaffold} \citep{SCAFFOLD} \\ 
& \cmark & standard & linear but worse than \algname{GD}  & \algname{S-Local-GD} \citep{LSGDunified2020} \\ 
& \cmark & standard & linear but worse than \algname{GD}  & \algname{FedLin} \citep{FEDLIN} \\
\hline
\multirow{2}{*}{5. Accelerated} 
& \cmark & standard & linear \& better than \algname{GD} &  \algname{ProxSkip}/\algname{Scaffnew} \citep{ProxSkip} \\ 
&\cellcolor{bgcolor2}\cmark & \cellcolor{bgcolor2}standard & \cellcolor{bgcolor2}linear \& \cellcolor{bgcolor2}better than \algname{GD} &  \cellcolor{bgcolor2}\algname{ProxSkip-VR} {\bf [THIS WORK]} \\
\hline
\end{tabular}
  \begin{tablenotes}
        {\tiny
        \item [{\color{blue}(a)}]  Since client sampling (CS) and data sampling (DS) can only {\em worsen}  theoretical communication complexity, our historical breakdown of the literature into 5 generations of LT methods focuses on the full client participation (i.e., no CS) and exact local gradient (i.e., no DS) setting. While some of the referenced methods incorporate CS and DS techniques, these are irrelevant for our purposes. Indeed, from the viewpoint of communication complexity, all these algorithms enjoy best theoretical performance  in the no-CS and no-DS regime.        
        \item [{\color{blue}(b)}] For the purposes of this table, we consider problem (\ref{eq:P}) in the {\em smooth} and {\em strongly convex} regime only. This is because the literature on LT methods struggles to understand even in this simplest (from the point of view of optimization) regime.  
        \item [{\color{blue}(c)}] {\em Bounded gradient diversity} is a uniform bound on a specific notion of gradient variance depending on client sampling probabilities. However, this assumption (as all homogeneity assumptions) is very restrictive. For example, it is not satisfied the standard class of smooth and strongly convex functions. 
        \item [{\color{blue}(d)}]  The notorious FL challenge of handling non-i.i.d.\ data by LT methods was solved  by \citet{localGD} (from the viewpoint of {\em optimization}). From generation 3 onwards, there was no need to invoke any data/gradient homogeneity assumptions.  Handling non-i.i.d.\ data remains a challenge from the point of view of {\em generalization}, typically by considering {\em personalized} FL models.         
        }
    \end{tablenotes}
    \end{threeparttable}
\end{table*}

We now offer several historical comments on the most important developments related to the {\em theoretical} understanding of LT.
To this end,  we have identified 5 distinct generations of LT methods, each with its unique challenges and characteristics. To make the narrative simple, and since we focus on this regime in our paper, we limit our overview to loss functions $\phi_i$ that are $\mu$-strongly convex and $L$-smooth. This is arguably the most  studied class of functions in continuous optimization \citep{NesterovBook}, and for this reason, it presents a valuable litmus test for any theory of LT.


\subsection{Generation 1: Heuristic Age} 
While LT ideas were used in several machine learning domains before \citep{Povey2015,SparkNet2016}, LT truly rose to prominence as a practically potent communication acceleration technique due to the seminal paper of \citet{FL2017-AISTATS}  which introduced the \algname{FedAvg} algorithm. However, no theory was provided in their work, nor in any prior work. LT-based heuristics, i.e., methods without any theoretical guarantees, dominated the initial development of the field up to, and including, the \algname{FedAvg} paper. 

\subsection{Generation 2: Homogeneous Age} 
The first theoretical results for LT methods offering explicit convergence rates relied on various data/gradient {\em homogeneity}\footnote{We use the term {\em homogeneity} to refer to various related assumptions used in the literature, including  {\em bounded gradient norms} \citep{Li-local-bounded-grad-norms--ICLR2020}, {\em bounded gradient variance} \citep{Li2019-local-homogeneous,Yu-local-homogeneous-2019} and {\em bounded gradient diversity} \citep{LocalDescent2019}.} 
assumptions.
The intuitive rationale behind such assumptions comes from the following thought process. In the extreme case when all the local functions $\phi_i$ are {\em identical} (this is often referred to as the {\em homogeneous} or {\em i.i.d.\ data} regime), there is a very simple approach to making \algname{GD}  communication-efficient: push the idea of LT to its extreme by running \algname{GD} on all clients, independently and in parallel, without any communication/synchronization/averaging whatsoever.
Extrapolating from this, it is reasonable to assume that as we increase heterogeneity, taking multiple local steps should still be beneficial as long as we do not take too many steps. Several authors analyzed various LT methods under such assumptions, and obtained rates \citep{LocalDescent2019,Yu-local-homogeneous-2019,Li2019-local-homogeneous,Li-local-bounded-grad-norms--ICLR2020}.  However, bounded dissimilarity assumptions are highly problematic. First, they do not seem to be satisfied even for some of the simplest function classes, such as strongly convex quadratics \citep{localGD,localSGD-AISTATS2020}, and moreover, it is well known that practical FL datasets are highly heterogeneous/non-i.i.d. \cite{FL2017-AISTATS,FL-big}. So, analyses relying on such strong assumptions are both mathematically questionable, and practically irrelevant.

\subsection{Generation 3: Sublinear Age}
The third generation of LT methods is characterized by the successful removal of the bounded dissimilarity assumptions from the convergence theory.  \citet{localGD} first achieved this breakthrough by studying the simplest LT method: local gradient descent (\algname{LGD}) (i.e., a simple combination of \algname{GD}  and LT). While works belonging to this generation elevated LT to the same theoretical footing as \algname{GD}  in terms of the assumptions, which marked an important milestone in our understanding of LT,  unfortunately, the obtained communication complexity theory of \algname{LGD}  is pessimistic when compared to vanilla \algname{GD}. Indeed, the inclusion of LT did {\em not} lead to an improvement upon the communication complexity of vanilla \algname{GD}. Moreover, while \algname{GD} enjoys a linear communication complexity (in the smooth and strongly convex regime), the communication complexity of \algname{LGD} is {\em sublinear}. In a follow-up work, \citet{localSGD-AISTATS2020} later analyzed \algname{LGD} in combination with DS as well. \citet{woodworth2020minibatch} and \citet{glasgow2022sharp} provided lower bounds for \algname{LGD} with DS showing that it is not better than minibatch \algname{SGD} in heterogeneous setting. See the work of \citet{LFPM} for a fixed-point theory viewpoint.

\subsection{Generation 4: Linear Age} 
The fourth generation of LT methods is characterized by the effort to design {\em linearly} converging variants of LT algorithms. In order to achieve this, it was important to tame the adverse effect of the so-called {\em client drift} \citep{SCAFFOLD}, which was identified as the culprit of the worse-than-\algname{GD} theoretical performance of the previous generation of LT methods. The first LT-based method that successfully tamed  client drift, and as a result obtained a linear convergence rate, was \algname{Scaffold} \citep{SCAFFOLD}. Several alternative approaches to obtaining the same effect were later proposed by \cite{LSGDunified2020} and  \citet{FEDLIN} . While obtaining a linear rate for LT methods under standard assumptions  was a major achievement, the communication complexity of these methods is still somewhat worse\footnote{Both \algname{GD}, and LT methods such as \algname{Scaffold} \citep{SCAFFOLD},  \algname{S-Local-GD} \citep{LSGDunified2020} and  \algname{FedLin} \citep{FEDLIN} enjoy the linear rate $\cO(\kappa \log \frac{1}{\varepsilon})$, where $\kappa$ is a condition number. However, this condition number is in general slightly worse for the LT methods.} than that of vanilla \algname{GD}, and is at best equal to that of \algname{GD}.

\subsection{Generation 5: Accelerated Age} Finally, the fifth generation of LT methods was initiated recently by \citet{ProxSkip} with their \algname{ProxSkip} method which enjoys {\em accelerated  communication complexity}. Acceleration comes from the LT steps  coupled with a new client drift reduction technique and a probabilistic approach to deciding whether communication takes place or not.  \citet{ProxSkip} first reformulate  \eqref{eq:P}  into the equivalent consensus form 
\begin{equation} \label{eq:Composite}\min_{x\in \R^{d}} f(x) + r(x),\end{equation}
where $d=Md'$, $x=(x_1,\dots,x_M)\in \R^{d}$, and 
\begin{equation}\label{eq:consensus} 
\squeeze f(x) \eqdef \sum\limits_{i=1}^M \frac{n}{n_i} \phi_i(x_i), \qquad r(x) = \begin{cases} 0 & \text{ if } x_1=\dots=x_M, \\ +\infty & \text{ otherwise.} \end{cases}\end{equation}
The \algname{ProxSkip} method is a randomized variant of proximal gradient descent (\algname{ProxGD})~\citep{Nesterov_composite2013,beck-book-first-order}  for solving \eqref{eq:Composite}, 
with the proximity operator of $r$, given by $$\prox_{r}(x) \eqdef \arg \min_{y} \left(r(y) + \frac{1}{2}\|y-x\|^2\right),$$ being	 evaluated in each iteration with probability $p\in (0,1]$ only. Remarkably, \citet{ProxSkip} showed that it is possible to choose $p$ as low as $1/\sqrt{\kappa}$, where $\kappa$ is the condition number of $f$, without this worsening the rate of its parent method \algname{ProxGD}. In summary, \algname{ProxSkip} lets the $M$ clients perform $\sqrt{\kappa}$ local gradient steps in expectation, followed by the evaluation of the prox of $r$, which in the case of the  consensus reformulation of \eqref{eq:P}  means averaging across all $M$ nodes, i.e., communication.

\section{ProxSkip-VR: A General Variance Reduction Framework for ProxSkip}\label{sec:main_theory}
\begin{algorithm*}[t]
	\caption{\algname{ProxSkip-VR}}
	\label{alg:ProxSkip-VR}
	\begin{algorithmic}[1]
		\STATE {\bf Parameters:} stepsize $\gamma > 0$, probability $p\in (0,1]$, initial iterate $x_0\in \R^d$, {\red initial control vector $y_0\in \R^d$}, {\color{blue} initial gradient shift $h_0 \in \R^d$}, number of iterations $T\geq 1$
		\FOR{$t=0,1,\dotsc,T-1$}
		\STATE {\red$g_t = g(x_t,y_t,\xi_t)$}\hfill $\diamond$  Sample  $\xi_t$ and construct an unbiased estimator of $\nabla f(x_t)$ 
		\STATE $\hat x_{t+1} = x_t - \gamma ( {\red g_t} - {\color{blue} h_t})$ \hfill $\diamond$ Take a gradient-type step adjusted via the {\color{blue} shift $h_t$}
		\STATE {\red Construct new control vector  $y_{t+1}$}
		\STATE Flip a coin $\theta_t \in \{0,1\}$ where $\mathop{\rm Prob}(\theta_t =1) = p$ \hfill $\diamond$ Decides whether to skip the prox or not
		\IF{$\theta_t=1$} 
		\STATE  $x_{t+1} = \prox_{\frac{\gamma}{p}r}\bigl(\hat x_{t+1} - \frac{\gamma}{p}{\color{blue}  h_t} \bigr)$ \hfill $\diamond$ Apply prox, but only with probability $p$
		\ELSE
		\STATE $x_{t+1} = \hat x_{t+1}$ \hfill $\diamond$ Skip the prox!
		\ENDIF
		\STATE ${\color{blue} h_{t+1}} = {\color{blue} h_t} + \frac{p}{\gamma}(x_{t+1} - \hat x_{t+1})$ \hfill $\diamond$ Update the {\color{blue} shift $h_t$}
		\ENDFOR
	\end{algorithmic}
\end{algorithm*}

In this work we contribute to the fifth generation of LT methods by extending the work of \citet{ProxSkip} to allow for a very large family    of gradient estimators, including variance reduced (VR) ones \citep{SVRG, SAGA, L-SVRG, DIANA}. 

Like \algname{ProxSkip}, our method  \algname{ProxSkip-VR} (Algorithm~\ref{alg:ProxSkip-VR}) is aimed to solve the composite problem \eqref{eq:Composite} in a more general setting (see Assumptions~\ref{ass:L-smoothness}--\ref{ass:Reg}), with the special structure \eqref{eq:consensus} coming from the consensus reformulation  being a special case only. Our method differs from \algname{ProxSkip} in that we replace the gradient $\nabla f(x_t)$ by an unbiased estimator $g_t = g(x_t,{\red y_t},\xi_t)$, where $\xi_t$ is the source of randomness controlling unbiasedness and  {\red $y_t$ is a control vector} whose role is to progressively reduce the variance of the estimator, so that $$\Exp{g_t \;|\; x_t, {\red y_t}} = \nabla f(x_t).$$

There are several motivations behind this endeavor. First, it is a-priori not clear whether the novel proof technique employed by \citet{ProxSkip} can be combined with the proof techniques used in the analysis of VR methods, and hence it is scientifically significant to investigate the possibility of such a merger of two strands of the literature. We show  that this is possible. Second, marrying VR estimators with \algname{ProxSkip} can lead to novel system architectures  which are more elaborate than the simplistic client-server architecture (see Section~\ref{sec:tree}).  Lastly, while researchers contributing to generations 1--4 of LT methods were preoccupied with trying to close the gap on \algname{GD} in terms of communication efficiency, they {\em ignored} the number of the local steps appearing in their algorithms, and reported their bounds primarily in terms of the number of communication rounds. Bounds reported this way make complete sense in the scenario when the cost of local work (e.g., one \algname{SGD} step w.r.t.\ a single data point), say $\delta$, is negligible compared to the cost of communication, which can w.l.o.g.\ assume to be 1, and when the number of local steps is small. With the advent of the fifth generation of LT methods, we can (to a large degree) stop worrying about communication efficiency, and can now ask more refined questions, such as: \begin{quote}{\em Are there gradient estimators which, when combined with \algname{ProxSkip}, lead to faster algorithms in terms of the total cost, which includes the communication cost as well as the cost of local training?} \end{quote} We give an affirmative answer to the question in Sections~\ref{sec:tree} and \ref{sec:experiments}.


\subsection{Standard assumptions}

We assume throughout that $f$ is differentiable, and let $D_f(x,y) \eqdef f(x) - f(y) - \langle \nabla f(y), x-y \rangle $ denote the Bregman divergence of $f$. Throughout the work we make the following assumptions:

\begin{assumption}[$L$-smoothness]
	\label{ass:L-smoothness}
	There exists $L > 0$ such that $2D_f(x,y) \leq L \left\|x-y\right\|^{2}$ for all $x,y \in \mathbb{R}^{d}
$.  \end{assumption}

\begin{assumption}[$\mu$-convexity]
	\label{ass:mu-strongly-convex}
	There exists $\mu > 0$ such that $\mu \left\|x-y\right\|^{2} \leq 2D_f(x,y)$ for all $x,y \in \mathbb{R}^{d}$. 	
  \end{assumption}

\begin{assumption}
	\label{ass:Reg}
	The regularizer $r:\R^d\to \R\cup \{+\infty\}$ is proper, closed and convex. \end{assumption}

Under the above assumptions, \eqref{eq:Composite} has a unique minimizer $x_{\star}$. Let $h_{\star} \eqdef \nabla f(x_{\star})$.


\subsection{Modelling variance reduced gradient estimators}

Our next assumption, initially introduced by~\citet{gorbunov2020unified}, postulates several parametric inequalities characterizing the behavior and ultimately the quality of a gradient estimator.  Similar assumptions appeared later in~\citep{LSGDunified2020, gorbunov2020linearly}.
\begin{assumption}
	\label{sigma_t}
	Let $\{x_t\}$ be iterates produced by \algname{ProxSkip-VR}. First, we assume that the stochastic gradients $g_{t}=g(x_t,y_t,\xi_t)$ are unbiased for all $t\geq 0$, namely \begin{equation}\label{eq:unbiased}\Exp{g_t \;|\; x_t, y_t} =\nabla f(x_{t}) .\end{equation}
Second, we assume that there exist non-negative constants $A, B, C, \tilde{A}, \tilde{B}, \tilde{C}$, with $\tilde{B}<1$, and a nonnegative mapping $y_t \mapsto \sigma(y_t) \eqdef \sigma_t$  such that the following two relations hold for all $t\geq 0$,
\begin{eqnarray}
	\Exp{ \left\|g_{t}-\nabla f(x_{*})\right\|^{2} \;|\; x_t, y_t} &\leq& 2 A D_{f} (x_{t}, x_{*})+B \sigma_{t} +C,\label{eq:sigma-1}\\
	\Exp{ \sigma_{t+1} \;|\; x_t, y_t } &\leq & 2 \tilde{A} D_{f}(x_{t}, x_{*}) +\tilde{B} \sigma_{t} +\tilde{C}.\label{eq:sigma_2}
\end{eqnarray}
\end{assumption}

Assumption~\ref{sigma_t} covers a very large collection of  gradient estimators, including an infinite variety of subsampling/minibatch estimators, gradient sparsification and quantization estimators, and their combinations; see \citep{gorbunov2020unified} for examples. VR estimators are characterized by $C=\tilde{C}=0$; most non-VR estimators by $\tilde{A}=\tilde{B}=\tilde{C}=B=0$ and $C>0$~\citep{gower2019sgd}. 

\subsection{Main result}

We are now ready to formulate our main result.

\begin{theorem}\label{thm:main}
Let Assumptions~\ref{ass:mu-strongly-convex} and \ref{ass:Reg} hold, and let $g_t$ be a gradient estimator satisfying Assumption~\ref{sigma_t}. If $B>0$, choose any  $\MM >\nicefrac{B}{(1-\tilde{B})}$ and $\beta= \nicefrac{(B + \MM\tilde{B})}{\MM}$. If $B=0$, let $\MM=0$ and $\beta=\tilde{B}$. Choose stepsize  $
	0<\gamma \leq \min \left\{\nicefrac{1}{\mu}, \nicefrac{1}{(A+\MM \tilde{A}) }\right\}.$
	Then the iterates of \algname{ProxSkip-VR} for any $p\in (0,1]$ satisfy
	\begin{equation}\label{eq:main_thm}
	\squeeze	\Exp{ \Psi_{T} } \leq \max \left\{(1-\gamma \mu)^{T},\beta^{T},(1-p^2)^T\right\} \Psi_{0}+\frac{\left(C+\MM \tilde{C}\right) \gamma^{2}}{\min \left\{\gamma \mu,p^2, 1 - \beta \right\}},
	\end{equation}
	where the Lyapunov function  is defined by $$\squeeze	\Psi_{t} \eqdef \|x_{t} - x_{\star}\|^2 + \frac{\gamma^2}{p^2}\|h_t - h_{\star}\|^2 +\gamma^{2}  \MM \sigma_{t}.$$
\end{theorem}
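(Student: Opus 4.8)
The plan is to establish a one-step inequality of the form $\mathbb{E}[\Psi_{t+1}\mid x_t,y_t,h_t]\le \rho\,\Psi_t + \gamma^2(C+\MM\tilde C)$, where the effective rate $\rho$ acts with a different factor on each of the three pieces of $\Psi_t$, and then to unroll this recursion and sum the resulting geometric series. The two sources of randomness in one iteration---the coin $\theta_t$ and the gradient seed $\xi_t$---are handled in separate stages: first I would average over $\theta_t$ (the ProxSkip part), then over $\xi_t$ (the variance-reduced SGD part).

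For the ProxSkip part, I would condition on $\hat x_{t+1}$ and $h_t$ and average over $\theta_t$ only. The key structural facts are that the optimum is a fixed point, $x_\star=\prox_{(\gamma/p)r}(x_\star-\tfrac{\gamma}{p}h_\star)$ with $-h_\star\in\partial r(x_\star)$ (from $0\in\nabla f(x_\star)+\partial r(x_\star)$ and $h_\star=\nabla f(x_\star)$), and that the shift update rearranges to $\hat x_{t+1}-x_{t+1}=\tfrac{\gamma}{p}(h_t-h_{t+1})$, so that on the event $\theta_t=1$ one gets $-h_{t+1}\in\partial r(x_{t+1})$. Writing $P=\prox_{(\gamma/p)r}$ and applying it to $a=\hat x_{t+1}-\tfrac{\gamma}{p}h_t$ and $b=x_\star-\tfrac{\gamma}{p}h_\star$, the residuals $(I-P)a$ and $(I-P)b$ are exactly $-\tfrac{\gamma}{p}\hat h_{t+1}$ and $-\tfrac{\gamma}{p}h_\star$, where $\hat h_{t+1}$ is the would-be shift. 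Firm nonexpansiveness of the prox then yields
\[
\bigl\|P a - P b\bigr\|^2 + \tfrac{\gamma^2}{p^2}\bigl\|\hat h_{t+1}-h_\star\bigr\|^2 \le \bigl\|(\hat x_{t+1}-x_\star)-\tfrac{\gamma}{p}(h_t-h_\star)\bigr\|^2 .
\]
Taking the convex combination with weights $p$ and $1-p$ over the outcomes of $\theta_t$, the cross terms combine so that the $h$-term contracts by exactly $1-p^2$, and after substituting $\hat x_{t+1}=x_t-\gamma(g_t-h_t)$ and $h_\star=\nabla f(x_\star)$ the leading square simplifies:
\[
\mathbb{E}_{\theta_t}\!\Bigl[\|x_{t+1}-x_\star\|^2+\tfrac{\gamma^2}{p^2}\|h_{t+1}-h_\star\|^2\Bigr]\le \bigl\|(x_t-x_\star)-\gamma(g_t-\nabla f(x_\star))\bigr\|^2+(1-p^2)\tfrac{\gamma^2}{p^2}\|h_t-h_\star\|^2 .
\]

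Next, for the SGD part, I average the leading square over $\xi_t$. Expanding and using unbiasedness \eqref{eq:unbiased} turns the cross term into $-2\gamma\langle x_t-x_\star,\nabla f(x_t)-\nabla f(x_\star)\rangle$, which by the Bregman identity equals $-2\gamma\bigl(D_f(x_t,x_\star)+D_f(x_\star,x_t)\bigr)$; bounding $D_f(x_\star,x_t)\ge\tfrac{\mu}{2}\|x_t-x_\star\|^2$ via Assumption~\ref{ass:mu-strongly-convex} produces the factor $(1-\gamma\mu)$ on $\|x_t-x_\star\|^2$ and leaves $-2\gamma D_f(x_t,x_\star)$. Inequality \eqref{eq:sigma-1} controls $\gamma^2\mathbb{E}_{\xi_t}\|g_t-\nabla f(x_\star)\|^2$, and adding the weighted recursion $\gamma^2\MM\,\mathbb{E}[\sigma_{t+1}]$ from \eqref{eq:sigma_2} completes $\Psi_{t+1}$. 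Collecting terms, the $D_f(x_t,x_\star)$ coefficient is $-2\gamma\bigl(1-\gamma(A+\MM\tilde A)\bigr)\le 0$ by the stepsize bound and is discarded; the $\sigma_t$ coefficient collapses to $\gamma^2(B+\MM\tilde B)=\beta\,\gamma^2\MM$ by the choice of $\beta$; and the residual constant is $\gamma^2(C+\MM\tilde C)$. This leaves the three per-term factors $1-\gamma\mu$, $1-p^2$, $\beta$, each strictly below $1$ (using $\gamma\le1/\mu$, $p>0$, and $\MM>B/(1-\tilde B)$), so with $\rho=\max\{1-\gamma\mu,1-p^2,\beta\}$ one has $\mathbb{E}[\Psi_{t+1}\mid\cdots]\le\rho\,\Psi_t+\gamma^2(C+\MM\tilde C)$. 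Unrolling and bounding the geometric series by $1/(1-\rho)=1/\min\{\gamma\mu,p^2,1-\beta\}$, together with $\rho^T=\max\{(1-\gamma\mu)^T,\beta^T,(1-p^2)^T\}$, delivers \eqref{eq:main_thm}.

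The main obstacle I anticipate is the ProxSkip part: correctly identifying the prox residuals with the shift vectors and verifying that, after the $\theta_t$-averaging, the firm-nonexpansiveness inequality produces exactly the $(1-p^2)$ factor with no leftover cross terms. The variance-reduced bookkeeping with $A,B,C,\tilde A,\tilde B,\tilde C$ is comparatively routine once the Lyapunov weight $\MM$ and the definition of $\beta$ are fixed; the only genuine care needed there is ensuring the $D_f$ terms cancel through the stepsize bound and the $\sigma_t$ terms telescope into the $\beta$ rate.
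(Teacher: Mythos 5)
Your proposal is correct and follows essentially the same route as the paper: a firm-nonexpansiveness lemma averaging over the coin $\theta_t$ (yielding the $\|\hat w_t - w_\star\|^2 + (1-p^2)\tfrac{\gamma^2}{p^2}\|h_t-h_\star\|^2$ bound), then expansion of the leading square over $\xi_t$ using unbiasedness, the Bregman identity, strong convexity, and inequalities \eqref{eq:sigma-1}--\eqref{eq:sigma_2}, with the stepsize killing the $D_f$ term and $\beta$ collecting the $\sigma_t$ terms before unrolling the one-step contraction. Your identification of the prox residuals $(I-P)a$, $(I-P)b$ with $-\tfrac{\gamma}{p}\hat h_{t+1}$, $-\tfrac{\gamma}{p}h_\star$ is a slightly cleaner way of packaging the paper's STEP~3--5 algebra, but the argument is the same.
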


\begin{table*}[t]
    \centering
    \scriptsize
    \caption{\footnotesize Special cases of \algname{ProxSkip-VR}, depending on the choice of the gradient estimator $g_t$.}
    \label{tab:comparison2}
    \begin{threeparttable}
\begin{tabular}{lllc}
 \bf Estimator of $\nabla f$ & \begin{tabular}{c}\bf Communication Complexity\\\bf of \algname{ProxSkip-VR} \end{tabular}&\begin{tabular}{c}\bf Iteration Complexity\\\bf of \algname{ProxSkip-VR} \end{tabular}&\bf \begin{tabular}{c}\bf Corollaries\\\bf of Theorem~\ref{thm:main} \end{tabular} \\
\hline
 \algname{GD}\tnote{\color{blue}(b)} & $\mathcal{O}\left(\sqrt{\nicefrac{L}{\mu}}\log\nicefrac{1}{\varepsilon}\right)$ &$\mathcal{O}\left(\nicefrac{L}{\mu}\log\nicefrac{1}{\varepsilon}\right)$&Theorem~\ref{thm:proxskip}  \\ 
\hline
 \algname{SGD}\tnote{\color{blue}(c)} & $\mathcal{O} \left(\left(\sqrt{\nicefrac{A}{\mu}}+\sqrt{\nicefrac{2 C}{\varepsilon \mu^{2}}}\right) \log \nicefrac{1}{\varepsilon}\right)$  &$\mathcal{O} \left(\left(\nicefrac{A}{\mu}+\nicefrac{2 C}{\varepsilon \mu^{2}}\right) \log \nicefrac{1}{\varepsilon}\right)$&Theorem~\ref{thm:sproxskip}  \\ 
 \hline
\cellcolor{bgcolor2}\algname{HUB} {\bf [NEW]} &\cellcolor{bgcolor2}$\mathcal{O}\left(\sqrt{\nicefrac{L_{\max}}{\mu}\left(1+\nicefrac{\omega}{\tau}\right)}\log\nicefrac{1}{\varepsilon}\right)$ &\cellcolor{bgcolor2}$\mathcal{O}\left(\nicefrac{L_{\max}}{\mu}\left(1+\nicefrac{\omega}{\tau}\right)\log\nicefrac{1}{\varepsilon}\right)$ &\cellcolor{bgcolor2}Theorem~\ref{thm:QLSVRG}  \\ 
\cellcolor{bgcolor2}\algname{LSVRG} {\bf [NEW]} &\cellcolor{bgcolor2}$\mathcal{O}\left(\sqrt{\nicefrac{L(\tau)}{\mu}}\log\nicefrac{1}{\varepsilon}\right)$ &\cellcolor{bgcolor2}$\mathcal{O}\left(\nicefrac{L(\tau)}{\mu}\log\nicefrac{1}{\varepsilon}\right)$&\cellcolor{bgcolor2}Corollary~\ref{thm:lsvrg-proxskip}  \\ 
 \cellcolor{bgcolor2}\algname{Q} {\bf [NEW]} & \cellcolor{bgcolor2}$\mathcal{O}\left(\sqrt{\nicefrac{L_{\max}}{\mu}\left(1+\nicefrac{\omega}{M}\right)}\log\nicefrac{1}{\varepsilon}\right)$ &\cellcolor{bgcolor2}$\mathcal{O}\left(\nicefrac{L_{\max}}{\mu}\left(1+\nicefrac{\omega}{M}\right)\log\nicefrac{1}{\varepsilon}\right)$ &\cellcolor{bgcolor2}Corollary~\ref{thm:rand-diana-proxskip}  \\ 
\hline
\end{tabular}
  \begin{tablenotes}
        {\tiny
        \item [{\color{blue}(a)}]  Any estimator satisfying Assumption~\ref{sigma_t}
        \item [{\color{blue}(b)}]   \algname{ProxSkip-VR} with the \algname{GD} estimator reduces to the \algname{ProxSkip} method of \citet{ProxSkip}
         \item [{\color{blue}(c)}]     \algname{ProxSkip-VR} with the \algname{SGD} estimator satisfying Assumption~\ref{Expected_smoothness} reduces to the \algname{SProxSkip} method of \citet{ProxSkip}
         \item [{\color{blue}(d)}]   $L(\tau) \eqdef \frac{m-\tau}{\tau(m-1)} L_{\max}+\frac{m(\tau-1)}{\tau(m-1)} L$, where $\tau$ is the mini-batch size and $m$ is the number of clients belonging to one hub 
        }
    \end{tablenotes}
    \end{threeparttable}
\end{table*}
 
\subsection{Two examples of gradient estimators}
Here we give two illustrating examples of  estimators satisfying 
Assumption~\ref{sigma_t}. 
\begin{theorem}[\algname{GD} estimator]
	\label{thm:proxskip}
Let  Assumption~\ref{ass:L-smoothness}, \ref{ass:mu-strongly-convex} and \ref{ass:Reg} hold. Then for the trivial estimator $g_t=\nabla f(x_t)$,  Assumption~\ref{sigma_t} holds with the following parameters:
\begin{align*}
	A = L, \quad B = 0, \quad C = 0, \quad \tilde{A} = 0, \quad \tilde{B} = 0,  \quad \tilde{C} = 0, \quad \sigma_t \equiv 0.
\end{align*}
Choose a stepsize satisfying $	0<\gamma \leq  \nicefrac{1}{L}.$ Then the iterates of \algname{ProxSkip-VR} for any  $p\in (0,1]$ satisfy 
	\begin{equation}\label{eq:GD-xx}
	\Exp{\Psi_{T}} \leq \max \left\{(1-\gamma \mu)^{T},(1-p^2)^T\right\} \Psi_{0},
\end{equation}
where $$\squeeze	\Psi_{t} \eqdef \|x_{t} - x_{\star}\|^2 + \frac{\gamma^2}{p^2}\|h_t - h_{\star}\|^2.$$ Let $\gamma = \nicefrac{1}{L}$ and $p = \sqrt{\nicefrac{\mu}{L}}$ then the communication and iteration complexities of \algname{ProxSkip-VR} are 
$$\squeeze \# comms = \mathcal{O}\left(\sqrt{\frac{L}{\mu}}\log\frac{1}{\varepsilon}\right), \qquad \# iters = \mathcal{O}\left(\frac{L}{\mu}\log\frac{1}{\varepsilon}\right),$$ 
respectively.
\end{theorem}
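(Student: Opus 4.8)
The plan is to derive Theorem~\ref{thm:proxskip} as a direct specialization of the master Theorem~\ref{thm:main}, so the work splits into two parts: (i) certifying that the deterministic estimator $g_t = \nabla f(x_t)$ verifies Assumption~\ref{sigma_t} with the claimed constants, and (ii) substituting those constants into the conclusion~\eqref{eq:main_thm} and simplifying.

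First I would check Assumption~\ref{sigma_t}. Unbiasedness~\eqref{eq:unbiased} is immediate, since $g_t$ is deterministic and equals $\nabla f(x_t)$. For~\eqref{eq:sigma-1}, the left-hand side collapses to $\norm{\nabla f(x_t) - \nabla f(x_\star)}^2$, and the key is the smoothness/co-coercivity inequality $\norm{\nabla f(x) - \nabla f(x_\star)}^2 \leq 2L D_f(x, x_\star)$, valid for convex $L$-smooth $f$; this is exactly where Assumption~\ref{ass:L-smoothness} enters. Note that Theorem~\ref{thm:main} itself lists only Assumptions~\ref{ass:mu-strongly-convex} and~\ref{ass:Reg} and never invokes Assumption~\ref{ass:L-smoothness} directly, so smoothness is absorbed into the estimator constant $A$. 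The inequality yields $A = L$, $B = 0$, $C = 0$. Taking $\sigma_t \equiv 0$ makes~\eqref{eq:sigma_2} read $0 \leq 0$, so $\tilde A = \tilde B = \tilde C = 0$ works trivially.

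Next I would feed these constants into Theorem~\ref{thm:main}. Since $B = 0$, the theorem prescribes $\MM = 0$ and $\beta = \tilde B = 0$. The stepsize constraint $0 < \gamma \leq \min\{\nicefrac{1}{\mu}, \nicefrac{1}{(A + \MM\tilde A)}\}$ simplifies to $0 < \gamma \leq \min\{\nicefrac{1}{\mu}, \nicefrac{1}{L}\} = \nicefrac{1}{L}$, using $\mu \leq L$ (which follows from combining Assumptions~\ref{ass:L-smoothness} and~\ref{ass:mu-strongly-convex}); this matches the hypothesis. Because $\MM = 0$, the term $\gamma^2 \MM \sigma_t$ drops out of the Lyapunov function, recovering the $\Psi_t$ of the statement; because $C = \tilde C = 0$ (and $\MM = 0$) the additive error term in~\eqref{eq:main_thm} vanishes; and because $\beta = 0$ we have $\beta^T = 0$ for $T \geq 1$, so the $\max$ collapses to $\max\{(1-\gamma\mu)^T, (1-p^2)^T\}$, which is precisely~\eqref{eq:GD-xx}.

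Finally, for the complexities I would set $\gamma = \nicefrac{1}{L}$ and $p = \sqrt{\nicefrac{\mu}{L}}$, so that both contraction factors coincide at $(1 - \nicefrac{\mu}{L})^T$. Requiring $(1-\nicefrac{\mu}{L})^T \leq \varepsilon$ and using $\log\frac{1}{1-\mu/L} \geq \frac{\mu}{L}$ gives $\# iters = \cO\!\left(\frac{L}{\mu}\log\frac{1}{\varepsilon}\right)$; since the prox (i.e.\ one communication round) is triggered with probability $p$ per iteration, the expected number of communications is $p \cdot \# iters = \cO\!\left(\sqrt{\frac{L}{\mu}}\log\frac{1}{\varepsilon}\right)$. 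No step is genuinely hard here, as the statement is a corollary of Theorem~\ref{thm:main}; the one place demanding care is the co-coercivity bound $\norm{\nabla f(x) - \nabla f(x_\star)}^2 \leq 2L D_f(x, x_\star)$, which is the sole substantive use of Assumption~\ref{ass:L-smoothness} and the mechanism that pins the estimator constant to $A = L$.
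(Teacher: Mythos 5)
Your proposal is correct and follows essentially the same route as the paper's proof: verify Assumption~\ref{sigma_t} for $g_t=\nabla f(x_t)$ via the co-coercivity bound $\|\nabla f(x_t)-\nabla f(x_\star)\|^2\leq 2LD_f(x_t,x_\star)$ (giving $A=L$ and all other constants zero), then specialize Theorem~\ref{thm:main} with $\MM=0$ and balance the two contraction factors $(1-\gamma\mu)^T$ and $(1-p^2)^T$ to read off the iteration and expected communication complexities. The only cosmetic difference is that the paper derives $p=\sqrt{\mu/L}$ by solving $\frac{pL}{\mu}=\frac{1}{p}$, whereas you take the prescribed $p$ and check that the factors coincide; these are equivalent.
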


This recovers the result obtained by~\citet{ProxSkip} for their \algname{ProxSkip} method.  The next assumption holds for virtually all (non-VR) estimators based on subsampling~\citep{gower2019sgd}. 
   \begin{assumption}[Expected smoothness]
   	\label{Expected_smoothness}
   	We say that an unbiased estimator $g(x;\xi):\R^d\to\R^d$ of the gradient $\nabla f(x)$ satisfies the expected smoothness inequality if there exists $A^{\prime\prime} >0$ such that
   	\begin{align*}
   		\Exp{\|g(x;\xi) - g(x_{\star};\xi)\|^2} \leq 2A^{\prime\prime}D_f(x,x_{\star}),\quad \forall x\in \mathbb{R}^d.
   	\end{align*}	   	
   \end{assumption}

\begin{theorem}[\algname{SGD} estimator]
	\label{thm:sproxskip}
	Let	$g(x,\xi)$ satisfy Assumption~\ref{Expected_smoothness} and define $g_t \eqdef g(x_t,\xi_t)$, where $\xi_t$ is chosen independently at time $t$.  Then Assumption~\ref{sigma_t} holds with the following parameters:
	\begin{align*}
		A = 2A^{\prime\prime}, \quad B = 0, \quad C = 2{\rm Var}(g(x_{\star},\xi)), \quad \tilde{A} = 0, \quad \tilde{B} = 0,  \quad \tilde{C} = 0, \quad \sigma_t \equiv 0.
	\end{align*}
 Moreover, assume that Assumption~\ref{ass:mu-strongly-convex} holds.	 Choose stepsize $	0<\gamma \leq \min \left\{\nicefrac{1}{\mu}, \nicefrac{1}{A}\right\}.$ Then the iterates of \algname{ProxSkip-VR} for any  probability $p\in (0,1]$ satisfy 
	\begin{equation}
		\squeeze
		\Exp{\Psi_{T}} \leq \max \left\{(1-\gamma \mu)^{T},(1-p^2)^T\right\} \Psi_{0} + \gamma^2 \frac{2{\rm Var}(g(x_{\star},\xi))}{\min\left\lbrace \gamma\mu,p^2 \right\rbrace},\label{eq:GD-yy}
	\end{equation}
	where the Lyapunov function is defined by $$\squeeze	\Psi_{t} \eqdef \|x_{t} - x_{\star}\|^2 + \frac{\gamma^2}{p^2}\|h_t - h_{\star}\|^2.$$ If we choose $\gamma=\min \left\{\nicefrac{1}{A}, \nicefrac{\varepsilon \mu}{2 C}\right\}$ and $p=\sqrt{\gamma \mu}$ then the communication and iteration complexities of  \algname{ProxSkip-VR}  are 
	$$\squeeze \# comms = \mathcal{O} \left(\left(\sqrt{\frac{A}{\mu}}+\sqrt{\frac{C}{\varepsilon \mu^{2}}}\right) \log \frac{1}{\varepsilon}\right), \qquad \# iters = \mathcal{O} \left(\left(\frac{A}{\mu}+\frac{C}{\varepsilon \mu^{2}}\right) \log \frac{1}{\varepsilon}\right),$$ 
	respectively.
\end{theorem}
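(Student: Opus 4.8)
The plan is to proceed in three stages: first verify that the plain \algname{SGD} estimator $g_t = g(x_t,\xi_t)$ fits the abstract template of Assumption~\ref{sigma_t} with exactly the claimed constants; then specialize the master bound \eqref{eq:main_thm} of Theorem~\ref{thm:main} to those constants; and finally tune $\gamma$ and $p$ to read off the two complexities.

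For the first stage I would establish the two parametric inequalities \eqref{eq:sigma-1}--\eqref{eq:sigma_2}. Since this estimator carries no control mechanism, the natural choice is $\sigma_t\equiv 0$, which makes \eqref{eq:sigma_2} hold trivially with $\tilde{A}=\tilde{B}=\tilde{C}=0$ and forces $B=0$. Unbiasedness \eqref{eq:unbiased} is inherited from the assumption on $g$. To bound the second moment $\Exp{\|g(x_t,\xi_t)-\nabla f(x_\star)\|^2}$, I would insert and subtract $g(x_\star,\xi_t)$ and apply $\|a+b\|^2\leq 2\|a\|^2+2\|b\|^2$. The first resulting piece, $2\Exp{\|g(x_t,\xi_t)-g(x_\star,\xi_t)\|^2}$, is controlled directly by Assumption~\ref{Expected_smoothness}, giving $4A'' D_f(x_t,x_\star)$ and hence $A=2A''$. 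The second piece, $2\Exp{\|g(x_\star,\xi_t)-\nabla f(x_\star)\|^2}$, equals $2{\rm Var}(g(x_\star,\xi))$ since $\nabla f(x_\star)=\Exp{g(x_\star,\xi)}$ by unbiasedness; this is the constant $C$.

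For the second stage I would substitute $B=0$ (so $\MM=0$ and $\beta=\tilde{B}=0$) and $\tilde{C}=0$ into \eqref{eq:main_thm}. The Lyapunov term $\gamma^2\MM\sigma_t$ vanishes, the factor $\beta^T=0$ drops out of the maximum, and $\min\{\gamma\mu,p^2,1-\beta\}$ collapses to $\min\{\gamma\mu,p^2\}$ because $1-\beta=1$ while $\gamma\mu\leq 1$ and $p^2\leq 1$. This reproduces \eqref{eq:GD-yy} verbatim, with the stepsize restriction $0<\gamma\leq\min\{1/\mu,1/A\}$ inherited directly.

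For the final stage, choosing $p=\sqrt{\gamma\mu}$ balances the two contraction factors so that $\min\{\gamma\mu,p^2\}=\gamma\mu$ and the error floor simplifies to $\gamma C/\mu$; picking $\gamma=\min\{1/A,\varepsilon\mu/(2C)\}$ then pushes this floor below $\varepsilon/2$. Forcing the contraction term below $\varepsilon/2$ yields $T=\mathcal{O}\big((1/(\gamma\mu))\log(1/\varepsilon)\big)$, and expanding $1/(\gamma\mu)=\max\{A/\mu,2C/(\varepsilon\mu^2)\}\leq A/\mu+2C/(\varepsilon\mu^2)$ gives the iteration count. Multiplying by the communication probability $p=\sqrt{\gamma\mu}$ and using $\sqrt{a+b}\leq\sqrt{a}+\sqrt{b}$ produces the communication count. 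Given that Theorem~\ref{thm:main} does all the heavy analytic lifting, I expect the only genuine subtlety to be the constant bookkeeping in the variance split of the first stage: tracking the factor of two from $\|a+b\|^2\leq 2\|a\|^2+2\|b\|^2$ so that one lands on $A=2A''$ and $C=2{\rm Var}(g(x_\star,\xi))$ rather than their undoubled counterparts. Everything downstream is routine specialization.
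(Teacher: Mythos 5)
Your proposal is correct and follows essentially the same route as the paper's own proof: the same insert-and-subtract of $g(x_\star,\xi_t)$ with Young's inequality and expected smoothness to get $A=2A''$, $C=2{\rm Var}(g(x_\star,\xi))$, the same specialization of Theorem~\ref{thm:main} with $\MM=0$, $\beta=0$, and the same tuning $p=\sqrt{\gamma\mu}$, $\gamma=\min\{1/A,\varepsilon\mu/(2C)\}$ splitting the target accuracy into a contraction part and an error-floor part. No gaps; the constant bookkeeping you flagged as the only subtlety is handled exactly as in the paper.
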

This recovers the result obtained by~\citet{ProxSkip} for their stochastic variant of \algname{ProxSkip}, which they call \algname{SProxSkip}.

\section{New FL Architecture: Regional Hubs Connecting the Clients to the Server} \label{sec:tree}

We  now illustrate the versatility of our \algname{ProxSkip-VR} framework by designing a new ``FL architecture'' and proposing an algorithm that can efficiently operate in this setting.

In particular, we consider the situation where the clients are clustered (e.g., based on region), and where {\em a hub} is placed in between each cluster and the central server. Clients communicate with their regional hub only, which can communicate with the central server (see Figure~\ref{logo}). There are $M$ hubs, hub $i$ handles $n_i$ clients, and client $j$ associated with hub $i$ owns loss function $\phi_{ij}$. 

 \begin{figure}[!h]
	\centering
	\includegraphics[width=3in]{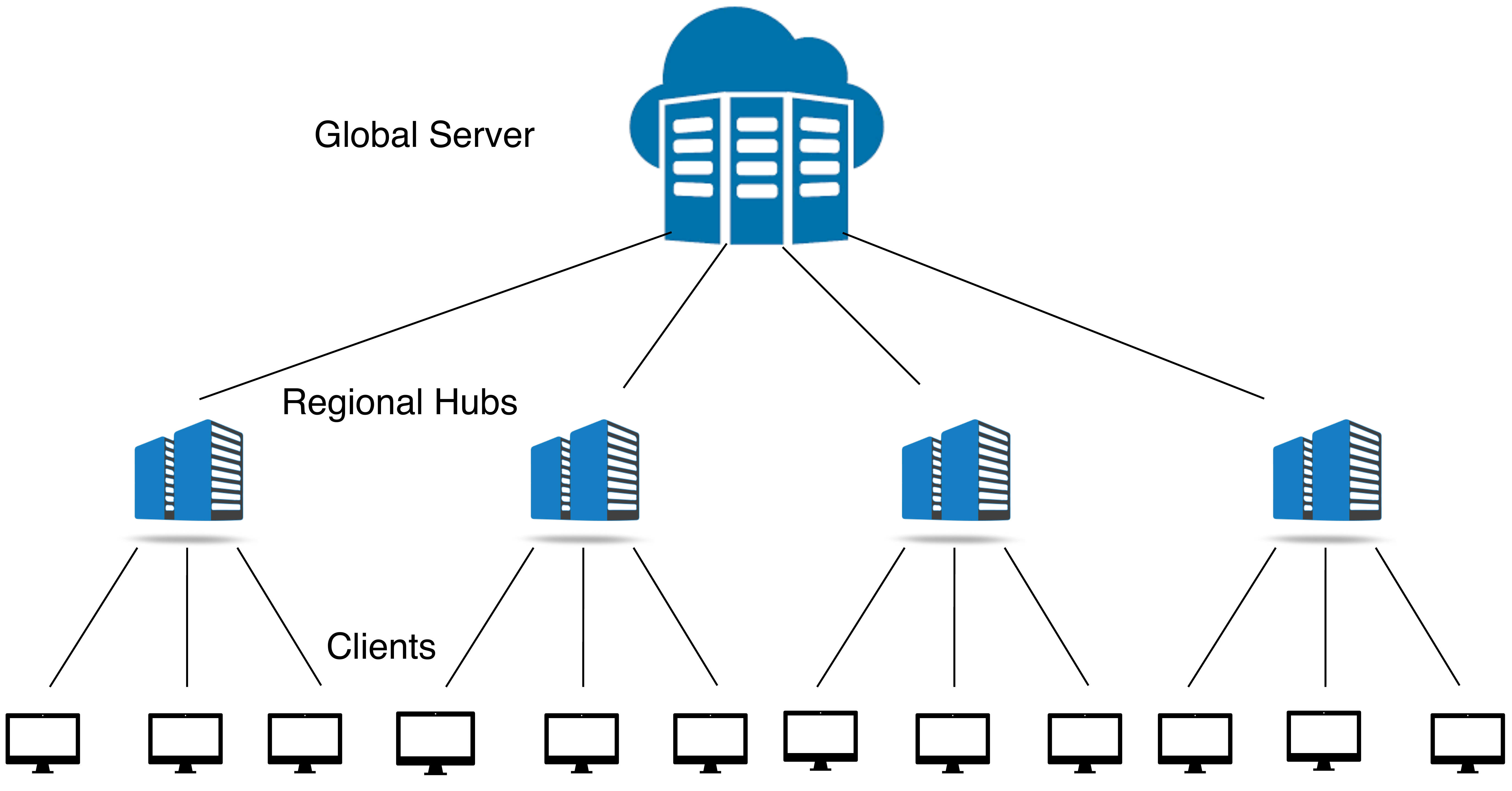}
	\caption{Server-hubs-clients FL architecture with 4 hubs and 12 clients.}
	\label{logo}
\end{figure}

Mathematically, this can be modeled by problem \eqref{eq:P}. In this situation, we care about two sources of communication cost: the server and the hubs, and between the hubs and the clients. We propose to handle this via {\em local training (LT)}  between the server and the hubs, and via {\em client sampling (CS)} and  {\em compressed communication (CC)}  between the hubs and the clients. Algorithmically, from the server-hubs perspective, we are applying a particular variant of \algname{ProxSkip-VR}  to \eqref{eq:Composite}--\eqref{eq:consensus}, where $\phi_i$ is the aggregate loss handles by hub $i$. This takes care of communication efficiency between the server and the hub. Note also that we need not worry about partial participation of hubs, as these are designed to be always available. However, in this situation, it is costly for hub $i$ to compute the gradient of $\phi_i$  as this involves communication with all the clients it handles. 

\subsection{Handling of  client sampling (CS) and compressed communication (CC)}
In order to alleviate this burden, we propose a combination of CS and CC. However, we need to be very careful about how to do this. Indeed, both CS and CC, even when applied in isolation, and without \algname{ProxSkip} in the mix, can lead to a substantial slowdown in convergence. For example, one will typically lose linear convergence in the strongly convex regime. However, techniques for preserving linear convergence in the presence of CS and CC exist: this is what variance reduction strategies are designed to do. For example,  \algname{LSVRG} \citep{hofmann2015variance,L-SVRG}
is a VR technique for reducing the variance due to CS, and \algname{DIANA} \citep{DIANA} is a VR technique for reducing the variance due to CC. However, we are not aware of any VR method that combines CS (applied first) and CC (applied second). 

We now propose such a technique. In iteration $t$, every hub $i\in \{1,2,\dots,M\}$ selects a random subset $\cS_t^i\subseteq \{1,2,\dots,n_i\}$ of the clients it handles of cardinality $\tau_i$, chosen uniformly at random, and estimates the hub gradient via
 \begin{equation}
 	\label{eq:hub_grad}
 \squeeze	\nabla \phi_i(x_t) \approx g_t^i \eqdef \frac{1}{|\cS_t^i|} \sum \limits_{j \in \cS_t^i} \cQ_t^{ij} \left( \nabla \phi_{ij}(x_t) - \nabla \phi_{ij}(y_t) \right) + \nabla \phi_i(y_t),
 \end{equation}
where $\cQ_t^{ij}:\R^{d'}\to\R^{d'}$ is a randomized compression (e.g., sparsification or quantization) operator~\citep{alistarh2017qsgd,DCGD,DIANA2,Cnat,Artemis2020}, i.e., a mapping satisfying 	
 \begin{align}
 	\label{compress}
		\Exp{Q_t^{ij}(x)}=x, \quad \Exp{ \|Q_t^{ij}(x)-x\|^{2} } \leq \omega\left\|x\right\|^{2}, \quad \forall x\in \R^{d'},
	\end{align}  
and the control vector $y_t$ is updated probabilistically as follows:
 \begin{equation}\label{eq:LSVRG-step}
	y_{t+1}=\left\{\begin{array}{lll}
		x_{t} & \text { with probability } & q \\
		y_{t} & \text { with probability } & 1-q
	\end{array}\right..
\end{equation}
The global gradient estimator (a vector in $\R^{Md'}$), which we call \algname{HUB}, is construcuted as a concatenation of the above hub estimators:
\begin{equation}\squeeze \nabla f(x_t) \eqdef \left(\frac{n_i}{n} \nabla \phi_i(x_t)\right)_{i=1}^M \approx g_t \eqdef g(x_t,y_t,\xi_t)\eqdef \left(\frac{n_i}{n} g_t^i\right)_{i=1}^M,\label{eq:U*G(SY*(S*Yddd}\end{equation}
where $\xi_t$ represents the combined randomness from the compressors $\{\cQ_t^{ij}\}$ and random sets $\{\cS_t^i\}$. 

In order to analyze \algname{ProxSkip-VR} in the consensus form, from now on we assume that $n_i=m = \nicefrac{n}{M}$ and $\tau_i=\tau \in \{1,2,\dots,m\}$ for all $i$, and rely on a slightly different, more general reformulation: 
\begin{equation*}
\squeeze	\min \limits_{x \in \mathbb{R}^{d}} \frac{1}{m}\sum\limits_{j=1}^{m}\widetilde{\phi}_j(x)+r(x), \quad \widetilde{\phi}_j(x)\eqdef\frac{1}{M}\sum\limits_{i=1}^{M}  \phi_{ij}\left(x_{i}\right), \quad r(x)\eqdef \begin{cases}0 & \text { if } x_{1}=\cdots=x_{M}, \\ +\infty & \text { otherwise.}\end{cases}
\end{equation*}

 Our proposed method \algname{ProxSkip-HUB} is \algname{ProxSkip-VR} combined with the novel \algname{HUB} estimator \eqref{eq:U*G(SY*(S*Yddd}, applied to the above reformulation; see Algorithm~\ref{alg:ProxSkip-HUB}. 
 
 \begin{algorithm*}[!th]
	\caption{\algname{ProxSkip-HUB}}
	\label{alg:ProxSkip-HUB}
	\begin{algorithmic}[1]
		\STATE {\bf Input}: stepsize $\gamma > 0$, probabilities $p>0$, $q>0$, initial iterate $x_0\in \R^d$, initial shift ${\red y_0}\in \R^d$, initial control variate ${\color{blue} h_0} \in \R^d$, number of iterations $T\geq 1$
		\FOR{$t=0,1,\dotsc,T-1$}
		\STATE broadcast $x_t$ to all clients
		\FOR{$i \in S_t$}
		\STATE $ \hat{\Delta}_{t}^{i}=Q\left(\nabla f_i(x_t) - \nabla f_i({\red y_t})\right) $ \hfill $\diamond$ Apply compression operator
		\ENDFOR
		\STATE $\hat{\Delta}_{t}=\frac{1}{\tau} \sum_{i \in S_t} \hat{\Delta}_{t}^{i}$
		\STATE $\hat{g}=\hat{\Delta}_{t} + \nabla f({\red y_t})$
		
		\STATE $\hat x_{t+1} = x_t - \gamma (\hat{g}_t - {\color{blue} h_t})$ \hfill $\diamond$ Take a gradient-type step adjusted via the control variate ${\red h_t}$
		\STATE Flip a coin $\theta_t \in \{0,1\}$ where $\mathop{\rm Prob}(\theta_t =1) = p$ \hfill $\diamond$ To decide whether to skip the prox or not
		\IF{$\theta_t=1$} 
		\STATE  $x_{t+1} = \prox_{\frac{\gamma}{p}\psi}\bigl(\hat x_{t+1} - \frac{\gamma}{p}{\color{blue} h_t} \bigr)$ \hfill $\diamond$ Apply prox, but only very rarely! (with probability $p$)
		\ELSE
		\STATE $x_{t+1} = \hat x_{t+1}$ \hfill $\diamond$ Skip the prox!
		\ENDIF
		\STATE ${\red h_{t+1}} = {\red h_t} + \frac{p}{\gamma}(x_{t+1} - \hat x_{t+1})$ \hfill $\diamond$ Update the control variate ${\color{blue} h_t}$
		
		\STATE $y_{t+1}=\left\{\begin{array}{lll}
			x_{t} & \text { with probability } & q \\
			y_{t} & \text { with probability } & 1-q
		\end{array}\right.$ \hfill $\diamond$ Update the shift ${\red y_t}$
		\ENDFOR
	\end{algorithmic}
\end{algorithm*}

\subsection{Theory for ProxSkip-HUB}
In the following result we first claim that the above estimator satisfies Assumption~\ref{sigma_t} with $C=\tilde{C}=0$ (i.e., it is variance-reduced), and the rest of the claim follows by application of our general theorem Theorem~\ref{thm:main}.

\begin{theorem}\label{thm:QLSVRG}
Assume that $\nabla \widetilde{\phi}_j$ is $L_j$-smooth for all $j$ and let Assumptions~\ref{ass:mu-strongly-convex} and \ref{ass:Reg} hold. Then for the gradient estimator \eqref{eq:U*G(SY*(S*Yddd}, Assumption~\ref{sigma_t} holds with the following constants:
\begin{align*}
\squeeze 		
A = 4\left(L(\tau)+\frac{\omega}{\tau}L_{\max}\right) , 
\quad B = 4\left(1+\frac{\omega}{\tau}\right), \quad C = 0, \quad \tilde{A} = q L_{\max},\quad \tilde{B} = 1-q, \quad \tilde{C} = 0,
\end{align*}
and $\sigma_{t} \eqdef \sigma(y_t)$, $\sigma(y) \eqdef \frac{1}{m}\sum_{j=1}^{m} \|\nabla \widetilde{\phi}_j(y) - \nabla \widetilde{\phi}_j(x_{\star})\|^2$,	
 $L_{\max} \eqdef \max_j L_j $.  Set $\MM=\nicefrac{2B}{(1-\tilde{B})}$ and $
	0<\gamma \leq \min \left\{\nicefrac{1}{\mu}, \nicefrac{1}{(A+\MM\tilde{A}) }\right\}.$
Then the iterates of \algname{ProxSkip-VR} for any $p\in (0,1]$ satisfy 
		\begin{align*}
	\squeeze 	\Exp{\Psi_{T}} \leq \max \left\{(1-\gamma \mu)^{T},(1-p^2)^T,\left(1-\frac{q}{2}\right)^{T}\right\} \Psi_{0},
	\end{align*}
	where the Lyapunov function is defined by $$\squeeze	\Psi_{t} \eqdef \|x_{t} - x_{\star}\|^2 + \frac{\gamma^2}{p^2}\|h_t - h_{\star}\|^2 + \gamma^{2} \frac{8}{q}\left(1+\frac{\omega}{\tau}\right) \sigma_{t}.$$
\end{theorem}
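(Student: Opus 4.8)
The plan is to prove the statement in two stages: first verify that the \algname{HUB} estimator \eqref{eq:U*G(SY*(S*Yddd} satisfies Assumption~\ref{sigma_t} with the six advertised constants, and then feed these constants into Theorem~\ref{thm:main} as a black box. Essentially all the work lies in the first stage. There I would check the three requirements of Assumption~\ref{sigma_t} separately: unbiasedness \eqref{eq:unbiased}, the second-moment bound \eqref{eq:sigma-1}, and the $\sigma$-recursion \eqref{eq:sigma_2}. Unbiasedness is immediate: conditioning on the sample sets $\{\cS_t^i\}$ and integrating over the compressors first (each $\cQ_t^{ij}$ is unbiased by \eqref{compress}) collapses the inner average to $\frac{1}{\tau}\sum_{j\in\cS_t^i}(\nabla\phi_{ij}(x_t)-\nabla\phi_{ij}(y_t))+\nabla\phi_i(y_t)$, after which averaging over the uniform size-$\tau$ sample replaces $\frac1\tau\sum_{j\in\cS_t^i}$ by $\frac1m\sum_{j=1}^m$, the $y_t$ terms cancel, and each block returns $\nabla\phi_i(x_t)$.

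The second-moment bound is the technical heart. Writing $\Delta_t^{ij}\eqdef\nabla\phi_{ij}(x_t)-\nabla\phi_{ij}(y_t)$, I would use bias--variance together with the law of total expectation over the two independent noise sources. Since $g_t$ is unbiased, $\Exp{\norm{g_t-h_\star}^2}=\Exp{\norm{g_t-\nabla f(x_t)}^2}+\norm{\nabla f(x_t)-h_\star}^2$, and the second summand is at most $2L\,D_f(x_t,x_\star)$ by $L$-smoothness and convexity. For the variance I would condition on $\{\cS_t^i\}$ and peel off the compression noise, which by independence of the $\cQ_t^{ij}$ and \eqref{compress} is controlled blockwise by $\frac{\omega}{\tau^2}\sum_{j\in\cS_t^i}\norm{\Delta_t^{ij}}^2$; averaging over the sample turns this into an $\frac{\omega}{\tau}$-weighted population average of $\norm{\Delta_t^{ij}}^2$. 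The residual sampling variance of $\frac1\tau\sum_{j\in\cS_t^i}\Delta_t^{ij}$ around $\frac1m\sum_{j=1}^m\Delta_t^{ij}$ is handled by the standard variance formula for uniform sampling-without-replacement of size $\tau$, which is exactly what manufactures the interpolated expected-smoothness constant $L(\tau)$ of footnote~(d). At this point every contribution is a multiple of a population average of $\norm{\Delta_t^{ij}}^2$; splitting $\norm{\Delta_t^{ij}}^2\le 2\norm{\nabla\phi_{ij}(x_t)-\nabla\phi_{ij}(x_\star)}^2+2\norm{\nabla\phi_{ij}(y_t)-\nabla\phi_{ij}(x_\star)}^2$ sends the $x_t$-part into a $D_f(x_t,x_\star)$ term (the $A$ constant) via smoothness and the $y_t$-part into exactly $\sigma_t$ (the $B$ constant). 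Both sampling and compression feed into both $A$ and $B$, assembling into $A=4(L(\tau)+\frac{\omega}{\tau}L_{\max})$ and $B=4(1+\frac{\omega}{\tau})$, with $C=0$ since the estimator is exact at $x_\star$ once $y_t=x_\star$.

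The $\sigma$-recursion is short. The L-SVRG-type update \eqref{eq:LSVRG-step} gives $\Exp{\sigma_{t+1}\mid x_t,y_t}=q\,\sigma(x_t)+(1-q)\sigma_t$, and $\sigma(x_t)=\frac1m\sum_j\norm{\nabla\widetilde\phi_j(x_t)-\nabla\widetilde\phi_j(x_\star)}^2\le 2L_{\max}D_f(x_t,x_\star)$ by smoothness and convexity of each $\widetilde\phi_j$ together with linearity of the Bregman divergence in $f=\frac1m\sum_j\widetilde\phi_j$; this yields $\tilde A=qL_{\max}$, $\tilde B=1-q$, $\tilde C=0$. For the second stage I substitute into Theorem~\ref{thm:main}: taking $\MM=\nicefrac{2B}{(1-\tilde B)}=\nicefrac{8}{q}(1+\nicefrac{\omega}{\tau})$ makes the Lyapunov weight $\gamma^2\MM$ on $\sigma_t$ match the statement, and since $1-\tilde B=q$ a direct computation gives $\beta=\nicefrac{(B+\MM\tilde B)}{\MM}=\nicefrac{q}{2}+(1-q)=1-\nicefrac{q}{2}$, producing the $(1-\nicefrac{q}{2})^T$ factor; as $C=\tilde C=0$ the additive term in \eqref{eq:main_thm} vanishes, leaving the claimed purely linear contraction.

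The step I expect to be the main obstacle is the second-moment bound, specifically pinning down the constants while the two noise sources interact. One must keep the sampling and compression expectations in the correct nested order, apply the without-replacement variance formula cleanly to extract the sharp $L(\tau)$ rather than a cruder $L_{\max}$ bound, and ensure the compression term only ever sees the per-client constant $L_{\max}$ (since compression acts on individual $\Delta_t^{ij}$) whereas the sampling term enjoys the averaged constant. Tracking these factors so that they combine into exactly $A=4(L(\tau)+\frac{\omega}{\tau}L_{\max})$ and $B=4(1+\frac{\omega}{\tau})$ is the delicate part; the remaining recursion and the application of Theorem~\ref{thm:main} are essentially bookkeeping.
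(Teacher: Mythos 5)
Your proposal is correct in outline and would prove the theorem, but the core second-moment bound takes a genuinely different route from the paper's, and it does \emph{not} produce the constants you expect it to. The paper never centers the estimator at $\nabla f(x_t)$: writing $\hat{\Delta}_t$ and $\Delta_t$ for the compressed and uncompressed sampled differences, it inserts the ``smart zero'' $\Delta_t-\Delta_t$ and applies Young's inequality \eqref{youngs}, splitting $\Exp{\|\hat g_t-\nabla f(x_\star)\|^2}$ into (i) compression noise $2\Exp{\|\hat{\Delta}_t-\Delta_t\|^2}$ and (ii) the second moment $2\Exp{\|\Delta_t+\nabla f(y_t)-\nabla f(x_\star)\|^2}$ of the uncompressed \algname{LSVRG}-type estimator; term (ii) is then bounded, after another Young split into $x_t$- and $y_t$-parts, by the dedicated minibatch lemma (Lemma~\ref{tau-nice}), i.e.\ by the \emph{uncentered} bound $\Exp{\|\frac1\tau\sum_{j\in\cS_t}(\nabla\widetilde\phi_j(x_t)-\nabla\widetilde\phi_j(x_\star))\|^2}\le 2L(\tau)D_f(x_t,x_\star)$, which is the only place $L(\tau)$ is born (it interpolates the $\frac{m-\tau}{\tau(m-1)}$-weighted variance part, controlled by $L_{\max}$, with the mean part, controlled by $L$). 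Your bias--variance decomposition $\Exp{\|g_t-\nabla f(x_\star)\|^2}=\Exp{\|g_t-\nabla f(x_t)\|^2}+\|\nabla f(x_t)-\nabla f(x_\star)\|^2$ instead keeps the mean term as $2LD_f$ with coefficient one and applies the without-replacement variance formula only to the \emph{centered} quantity, so $L(\tau)$ never materializes as a single constant; carried out exactly, your route yields roughly $A'=L+2L_{\max}\bigl(\frac{\omega}{\tau}+\frac{m-\tau}{\tau(m-1)}\bigr)$ and $B'=2\bigl(\frac{\omega}{\tau}+\frac{m-\tau}{\tau(m-1)}\bigr)$, not the advertised $A$ and $B$. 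This is not a fatal flaw: one checks $A'\le A$ and $B'\le B$ (for $\tau\ge 2$ use $\frac{4m(\tau-1)}{\tau(m-1)}\ge 1$; for $\tau=1$ use $2L_{\max}\ge L$), and since inequalities \eqref{eq:sigma-1}--\eqref{eq:sigma_2} with smaller constants imply them with larger ones, Assumption~\ref{sigma_t} still holds with the stated $A,B$, after which your stage two coincides with the paper's ($\MM=\nicefrac{2B}{(1-\tilde B)}=\frac{8}{q}(1+\frac{\omega}{\tau})$, $\beta=\nicefrac{B}{\MM}+\tilde B=1-\nicefrac{q}{2}$, additive term killed by $C=\tilde C=0$). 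So your argument is valid and in fact buys marginally tighter constants, while the paper's Young-splitting buys the clean closed form $A=4(L(\tau)+\frac{\omega}{\tau}L_{\max})$, $B=4(1+\frac{\omega}{\tau})$; the one place you would get stuck is your stated goal of making the bookkeeping ``combine into exactly'' those constants --- for that you must bound the uncentered sampled second moment \`a la Lemma~\ref{tau-nice} rather than split off the exact variance.
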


We now consider two special cases. In the first, we specialize to the no compression regime, and in the second, to the full participation regime.

\begin{corollary}[No compression]
			\label{thm:lsvrg-proxskip}
	If we do not use compression (i.e., $\omega = 0$), then the communication and iteration complexities are $$\squeeze \# comms = \mathcal{O}\left(\sqrt{\frac{L_{\max}}{\mu}}\log \frac{1}{\varepsilon}\right), \qquad \# iters = \mathcal{O}\left(\frac{L_{\max}}{\mu}\log \frac{1}{\varepsilon}\right),$$ respectively. However, if we use the estimator~\eqref{eq:hub_grad} in Theorem~\ref{thm:main} directly, then the communication and iteration complexities are
	$$\squeeze \# comms = \mathcal{O}\left(\sqrt{\frac{L(\tau)}{\mu}}\log \frac{1}{\varepsilon}\right), \qquad \# iters = \mathcal{O}\left(\frac{L(\tau)}{\mu}\log \frac{1}{\varepsilon}\right),$$ where $L(\tau)\eqdef \frac{m-\tau}{\tau(m-1)} L_{\max}+\frac{m(\tau-1)}{\tau(m-1)} L$.
\end{corollary}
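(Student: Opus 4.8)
The plan is to prove the two claims separately: both follow from the master Theorem~\ref{thm:main}, but through different settings of the constants $A,B,\tilde A,\tilde B,C,\tilde C$ of Assumption~\ref{sigma_t}. For the first claim I would simply specialize Theorem~\ref{thm:QLSVRG} to $\omega=0$. The constants collapse to $A=4L(\tau)$, $B=4$, $\tilde A=qL_{\max}$, $\tilde B=1-q$, $C=\tilde C=0$, so with $\MM=\nicefrac{2B}{(1-\tilde B)}=\nicefrac{8}{q}$ we get $A+\MM\tilde A=4L(\tau)+8L_{\max}$. Since $L(\tau)$ is a convex combination of $L$ and $L_{\max}$ (the coefficients sum to one), $L(\tau)\le L_{\max}$, hence $A+\MM\tilde A=\Theta(L_{\max})$ and the stepsize rule permits $\gamma=\Theta(1/L_{\max})$. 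Taking $p=\sqrt{\gamma\mu}=\Theta(\sqrt{\mu/L_{\max}})$ and $q=\Theta(\mu/L_{\max})$ balances the three contraction factors in \eqref{eq:main_thm} at $\min\{\gamma\mu,p^2,q/2\}=\Theta(\mu/L_{\max})$; because $C=\tilde C=0$ the additive term vanishes, giving $\#iters=\mathcal{O}(\frac{L_{\max}}{\mu}\log\frac1\varepsilon)$ and, as the prox (i.e.\ server communication) fires only with probability $p$, $\#comms=p\cdot\#iters=\mathcal{O}(\sqrt{\frac{L_{\max}}{\mu}}\log\frac1\varepsilon)$. This part is routine bookkeeping.

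For the second claim the point is that the constants inherited from Theorem~\ref{thm:QLSVRG} are deliberately loose at $\omega=0$: the value $B=4$ ignores the variance reduction offered by minibatching, and it is precisely the cross term $\MM\tilde A=\Theta(BL_{\max})$ that inflates the rate to $L_{\max}$. I would therefore re-verify Assumption~\ref{sigma_t} directly for the no-compression estimator \eqref{eq:hub_grad}, i.e.\ the minibatch L-SVRG estimator $g_t=\frac{1}{\tau}\sum_{j\in\cS_t}(\nabla\widetilde\phi_j(x_t)-\nabla\widetilde\phi_j(y_t))+\nabla f(y_t)$, keeping the same $\sigma(y)=\frac1m\sum_j\|\nabla\widetilde\phi_j(y)-\nabla\widetilde\phi_j(x_\star)\|^2$. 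The $\sigma$-recursion is untouched, so $\tilde A=qL_{\max}$, $\tilde B=1-q$, because $y_{t+1}=x_t$ refreshes the full reference. The gain comes from the first inequality: splitting $g_t-\nabla f(x_\star)$ into $(\nabla f(x_t)-\nabla f(x_\star))$ plus the mean-zero minibatch fluctuation and invoking the sampling-without-replacement variance identity yields $A=\Theta(L(\tau))$ together with $B=\Theta(\frac{m-\tau}{\tau(m-1)})$, a coefficient that shrinks with $\tau$ rather than the constant $4$.

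With $\MM=\Theta(B/q)$ one then obtains $\MM\tilde A=\Theta(BL_{\max})=\Theta(\frac{m-\tau}{\tau(m-1)}L_{\max})$, which is dominated by $L(\tau)$ since $\frac{m-\tau}{\tau(m-1)}L_{\max}$ is exactly one of the two nonnegative summands defining $L(\tau)$. Hence $A+\MM\tilde A=\Theta(L(\tau))$, and repeating the stepsize/probability tuning of the first claim with $L_{\max}$ replaced by $L(\tau)$ gives $\#iters=\mathcal{O}(\frac{L(\tau)}{\mu}\log\frac1\varepsilon)$ and $\#comms=\mathcal{O}(\sqrt{\frac{L(\tau)}{\mu}}\log\frac1\varepsilon)$. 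As a sanity check, the constants behave correctly at the endpoints: at $\tau=1$ one has $L(1)=L_{\max}$ and $B=\Theta(1)$, recovering single-sample L-SVRG, while at $\tau=m$ one has $B=0$ and $A=L$, recovering exact gradient descent.

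The main obstacle is the tight second-moment computation for the subsampled estimator in the second claim: I must bound the variance of the $\tau$-out-of-$m$ minibatch average by the minibatch expected-smoothness constant $L(\tau)$, which requires the combinatorial variance identity for sampling without replacement together with the per-function bound $\|\nabla\widetilde\phi_j(x)-\nabla\widetilde\phi_j(y)\|^2\le 2L_j D_{\widetilde\phi_j}(x,y)$, and then aggregating the per-function Bregman divergences into $D_f$. Securing the correct $\tau$-dependence in \emph{both} $A$ and $B$, and verifying the absorption $\frac{m-\tau}{\tau(m-1)}L_{\max}\le L(\tau)$, is where all the care is needed; everything else is the same tuning as in the first claim.
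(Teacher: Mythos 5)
Your proof is correct, but the second half takes a genuinely different route from the paper's. The first half coincides with the paper: specialize Theorem~\ref{thm:QLSVRG} to $\omega=0$ and tune $\gamma=\Theta(\nicefrac{1}{L_{\max}})$, $p=\sqrt{\gamma\mu}$, $q=\Theta(\gamma\mu)$. For the refined $L(\tau)$ claim, the paper's own analysis of \algname{ProxSkip-LSVRG} changes the \emph{control quantity itself}: it redefines $\sigma_t \eqdef \Exp{\|\frac{1}{\tau}\sum_{j\in\cS_t}(\nabla\widetilde\phi_j(y_t)-\nabla\widetilde\phi_j(x_\star))\|^2}$ as a minibatch-averaged second moment, so that \emph{both} inequalities of Assumption~\ref{sigma_t} can be pushed through the sampling lemma (Lemma~\ref{tau-nice}), yielding $A=2L(\tau)$, $B=2$, $\tilde A = qL(\tau)$, $\tilde B=1-q$; the whole gain over the \algname{HUB} constants sits in $\tilde A$ ($qL(\tau)$ instead of $qL_{\max}$), $B$ stays an absolute constant, and $\gamma=\nicefrac{1}{6L(\tau)}$ follows immediately. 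You instead keep the per-function $\sigma(y)=\frac{1}{m}\sum_j\|\nabla\widetilde\phi_j(y)-\nabla\widetilde\phi_j(x_\star)\|^2$ and its recursion ($\tilde A=qL_{\max}$) untouched, and place the entire gain in the first inequality via a bias--variance split around $\nabla f(x_t)$ plus the without-replacement variance identity, getting $B=\Theta\bigl(\frac{m-\tau}{\tau(m-1)}\bigr)$; the rate is then rescued by the absorption $\MM\tilde A=\Theta(BL_{\max})=\Theta\bigl(\frac{m-\tau}{\tau(m-1)}L_{\max}\bigr)\leq \Theta(L(\tau))$, which works precisely because that product is one of the two nonnegative summands defining $L(\tau)$. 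Both routes rest on the same combinatorial identity and both give $A+\MM\tilde A=\Theta(L(\tau))$, hence identical complexities. The paper's version is slightly cleaner (one lemma applied twice, explicit constants); yours buys a Lyapunov function identical to the \algname{HUB} case and transparent endpoint behavior ($\tau=m$ forces $B=0$, recovering exact \algname{GD}; $\tau=1$ gives $L(1)=L_{\max}$). One small point of care: at $\tau=m$ your $B$ vanishes, so you must invoke the $B=0$ branch of Theorem~\ref{thm:main} (take $\MM=0$, $\beta=\tilde B$), which your sanity check implicitly does but should be stated when choosing $\MM=\Theta(B/q)$.
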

Notice that $L_{\max}\geq L$, and that $L(\tau)=L_{\max}$ for $\tau=1$ and $L(\tau) = L$ for $\tau=m$. Moreover, $L(\tau)$ decreases as the minibatch size $\tau$ increases. 

\begin{corollary}[No client sampling]
		\label{thm:rand-diana-proxskip}
	If we do not use client sampling (i.e., $\tau = m$), then the communication and iteration complexities are 
 $$\squeeze \# comms = \mathcal{O}\left(\sqrt{\frac{L_{\max}}{\mu}\left(1+\frac{\omega}{M}\right)}\log \frac{1}{\varepsilon}\right), \qquad \#iters = \mathcal{O}\left(\frac{L_{\max}}{\mu}\left(1+\frac{\omega}{M}\right)\log \frac{1}{\varepsilon}\right),$$ respectively.
\end{corollary}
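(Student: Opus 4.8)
The plan is to read off Corollary~\ref{thm:rand-diana-proxskip} as the $\tau=m$ specialization of Theorem~\ref{thm:QLSVRG}, and then run the same elementary stepsize/probability optimization that turns the Lyapunov contraction of Theorem~\ref{thm:main} into explicit rates (exactly as was done in Theorem~\ref{thm:proxskip}). First I would substitute $\tau=m$ into the constants supplied by Theorem~\ref{thm:QLSVRG}. The client-sampling smoothness collapses, since $L(m)=\frac{m-m}{m(m-1)}L_{\max}+\frac{m(m-1)}{m(m-1)}L=L$, so the estimator degenerates into a pure quantization estimator and we are left with $A=\cO(L+\tfrac{\omega}{M}L_{\max})$, $B=\cO(1+\tfrac{\omega}{M})$, $C=0$, $\tilde A=qL_{\max}$, $\tilde B=1-q$, $\tilde C=0$, together with $\MM=\nicefrac{2B}{(1-\tilde B)}$.

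With these in hand, Theorem~\ref{thm:QLSVRG} already hands us the three-factor contraction $\Exp{\Psi_T}\le\max\{(1-\gamma\mu)^T,(1-p^2)^T,(1-\nicefrac{q}{2})^T\}\Psi_0$ (here $C=\tilde C=0$, so the additive noise term vanishes and $\beta=\nicefrac{(B+\MM\tilde B)}{\MM}=1-\nicefrac{q}{2}$). The next step is to bound the effective smoothness controlling the admissible stepsize: using $\MM=\nicefrac{2B}{q}$ and $\tilde A=qL_{\max}$ gives $\MM\tilde A=2BL_{\max}=\cO(L_{\max}(1+\nicefrac{\omega}{M}))$, and combining with $A$ and $L\le L_{\max}$ yields $A+\MM\tilde A=\cO\!\left(L_{\max}(1+\nicefrac{\omega}{M})\right)$.

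Finally I would optimize the free parameters to equalize the three geometric factors. Taking $\gamma=\nicefrac{1}{(A+\MM\tilde A)}$ (which is indeed $\le\nicefrac1\mu$ in the interesting regime $L_{\max}\ge\mu$), $p=\sqrt{\gamma\mu}$, and $q=\Theta(\gamma\mu)$ makes the common contraction factor $1-\rho$ with $\rho=\Theta(\gamma\mu)=\Theta\!\left(\nicefrac{\mu}{(L_{\max}(1+\nicefrac{\omega}{M}))}\right)$. Reading off $T=\cO(\rho^{-1}\log\nicefrac1\varepsilon)$ gives the claimed iteration complexity, and since in expectation only a $p$-fraction of the iterations trigger a prox (communication) step, the communication complexity is $pT=\cO(\rho^{-1/2}\log\nicefrac1\varepsilon)=\cO(\sqrt{\nicefrac{L_{\max}(1+\nicefrac{\omega}{M})}{\mu}}\log\nicefrac1\varepsilon)$.

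The genuinely delicate step — and the one I would spend the most care on — is not the optimization but the bookkeeping of the compression variance hidden inside $A$ and $B$: one must verify that in the no-sampling regime the independent compressors $\{\cQ_t^{ij}\}$ average out so that $\omega$ enters divided by the relevant machine count ($M$) rather than undivided, which is precisely the content inherited from Theorem~\ref{thm:QLSVRG}. Once its constants are accepted, the corollary is a short computation; the only remaining subtlety is confirming that $q=\Theta(\gamma\mu)$ keeps $\tilde B=1-q$ bounded away from $1$ (as required by Assumption~\ref{sigma_t}) without inflating $\MM=\nicefrac{2B}{q}$ enough to change the order of $A+\MM\tilde A$, so that the balance $\rho=\Theta(\gamma\mu)$ is preserved.
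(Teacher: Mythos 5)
Your proposal is correct and takes essentially the same route as the paper: the corollary is obtained by substituting $\tau=m$ into the constants of Theorem~\ref{thm:QLSVRG} (so $L(m)=L$), setting $\gamma = \nicefrac{1}{(A+\MM\tilde{A})}$, $p=\sqrt{\gamma\mu}$, $q=2\gamma\mu$, and rerunning the \algname{ProxSkip}-style complexity argument of Section~\ref{sec:GD_est}; your key observation that $\MM\tilde{A}=2BL_{\max}$ is independent of $q$ (so the tuning $q=\Theta(\gamma\mu)$ creates no circularity and does not change the order of $A+\MM\tilde{A}$) is precisely the implicit step the paper glosses over with ``using the same proof as for \algname{ProxSkip}.'' One caveat, which you inherit from the paper's own statement rather than introduce yourself: Theorem~\ref{thm:QLSVRG} actually yields the factor $1+\nicefrac{\omega}{\tau}=1+\nicefrac{\omega}{m}$ at $\tau=m$, where $m$ is the number of clients per hub, so writing $\nicefrac{\omega}{M}$ (with $M$ the number of hubs) in the constants $A,B$ and in the final complexities is justified only under the identification $m=M$ made implicitly by the paper (e.g., in its experimental setup), not as a direct consequence of the theorem.
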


Assume $r(x)\equiv 0$. If $\cQ_t^{ij}(x)\equiv x$, then $\omega = 0$, and we restore  the well-known \algname{LSVRG} method~\citep{hofmann2015variance,kovalev2020don}, assuming that the functions $\phi_{ij}$ have the same smoothness constant. We can recover the same rate exactly as well, but with a slightly more refined analysis, one in which we do not need to work with compressors (\algname{LSVRG} does not involve any), which makes for a tighter analysis. On the other hand, if $\tau = m$, we restore the rate of the well-known \algname{DIANA}~\citep{DIANA,DIANA2} method and its sibling \algname{Rand-DIANA}~\citep{Shifted}.

\section{Experiments}\label{sec:experiments}

To illustrate the predictive power of our theory, it suffices to  consider  $L_2$-regularized logistic regression  in the distributed setting \eqref{eq:P}, with 
$$\squeeze \phi_i(x)=\frac{1}{n_i} \sum \limits_{j=1}^{n_i} \log \left(1+\exp \left(-b_{ij} a_{ij}^{\top} x\right)\right)+\frac{\lambda}{2}\|x\|^{2},$$ where $n_i$ is the number of data points per worker $a_{ij}\in \R^{d'}$ and $b_{ij} \in \{-1, +1\}$ are the data samples and labels. We choose $n_i=m = \nicefrac{n}{M}$ for all $i$. We set the regularization parameter $\lambda = 5\cdot10^{-4}L$ by default, where $L$ is the smoothness constant of $f$. We conduct several experiments on the \dataset{w8a} dataset from LibSVM library~\citep{chang2011libsvm}.

\subsection{ProxSkip-LSVRG vs baselines}
In Figure~\ref{fig:exp_comp_total_cost} (first row), we compare various LT baselines\footnote{With the exception of \algname{LocalSGD}, all  use client drift correction.}  for three choices of mini-batch sizes ($\tau=16, 32, 64$) with our method \algname{ProxSkip-VR} combined with the \algname{LSVRG} estimator, which is a special case of the \algname{HUB} estimator when $\cQ_t^{ij}(x)\equiv x$ for all $i,j$. We see that our method outperforms all other methods significantly due to its communication-acceleration properties.

\begin{figure}[!htbp]
	\centering
	\begin{subfigure}[b]{0.3\textwidth}
		\centering
		\includegraphics[trim=20 10 40 40, clip, width=\textwidth]{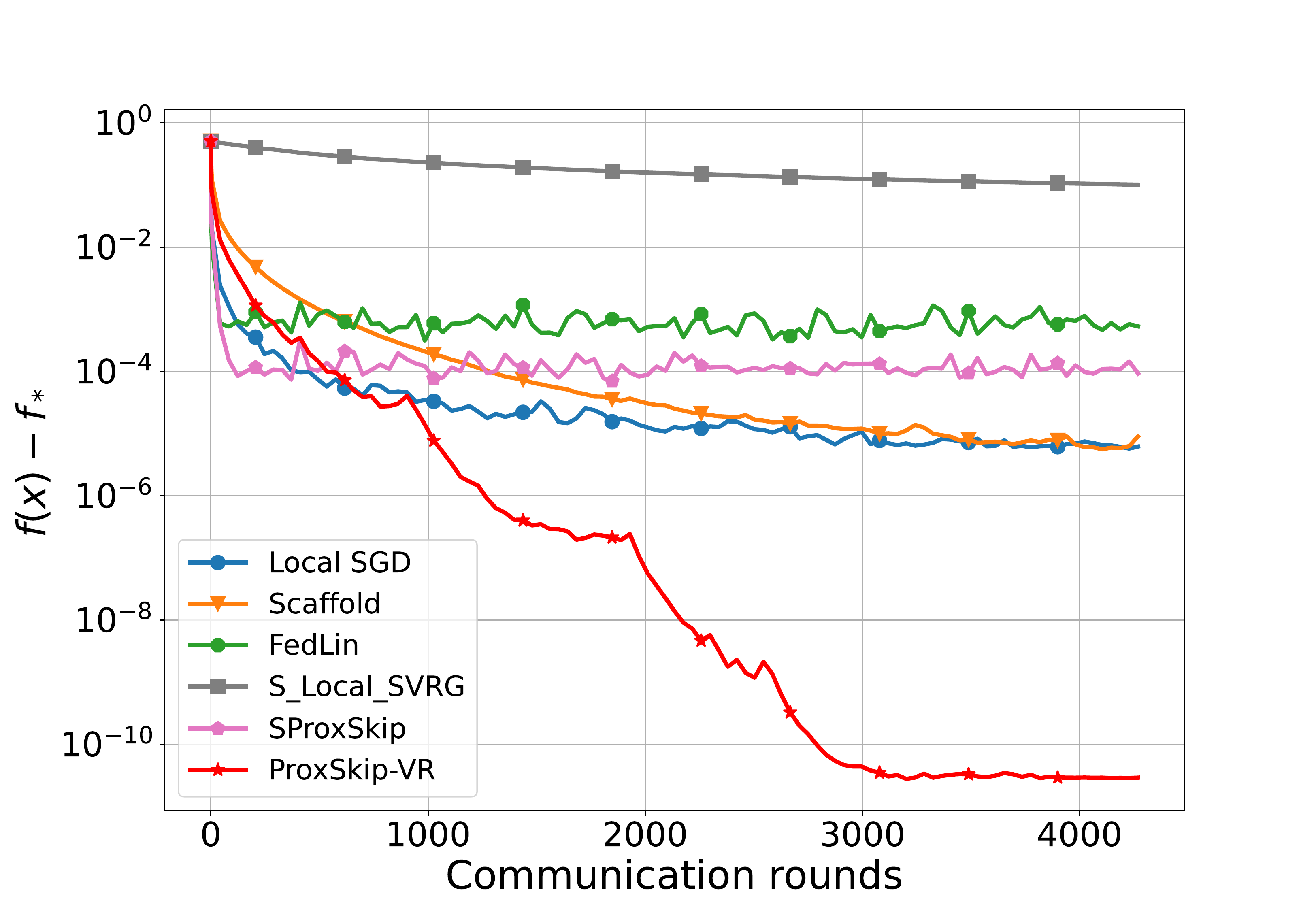}
		\caption{$\tau=16$}
	\end{subfigure}
	\hfill 
	\begin{subfigure}[b]{0.3\textwidth}
		\centering
		\includegraphics[trim=20 10 40 40, clip, width=\textwidth]{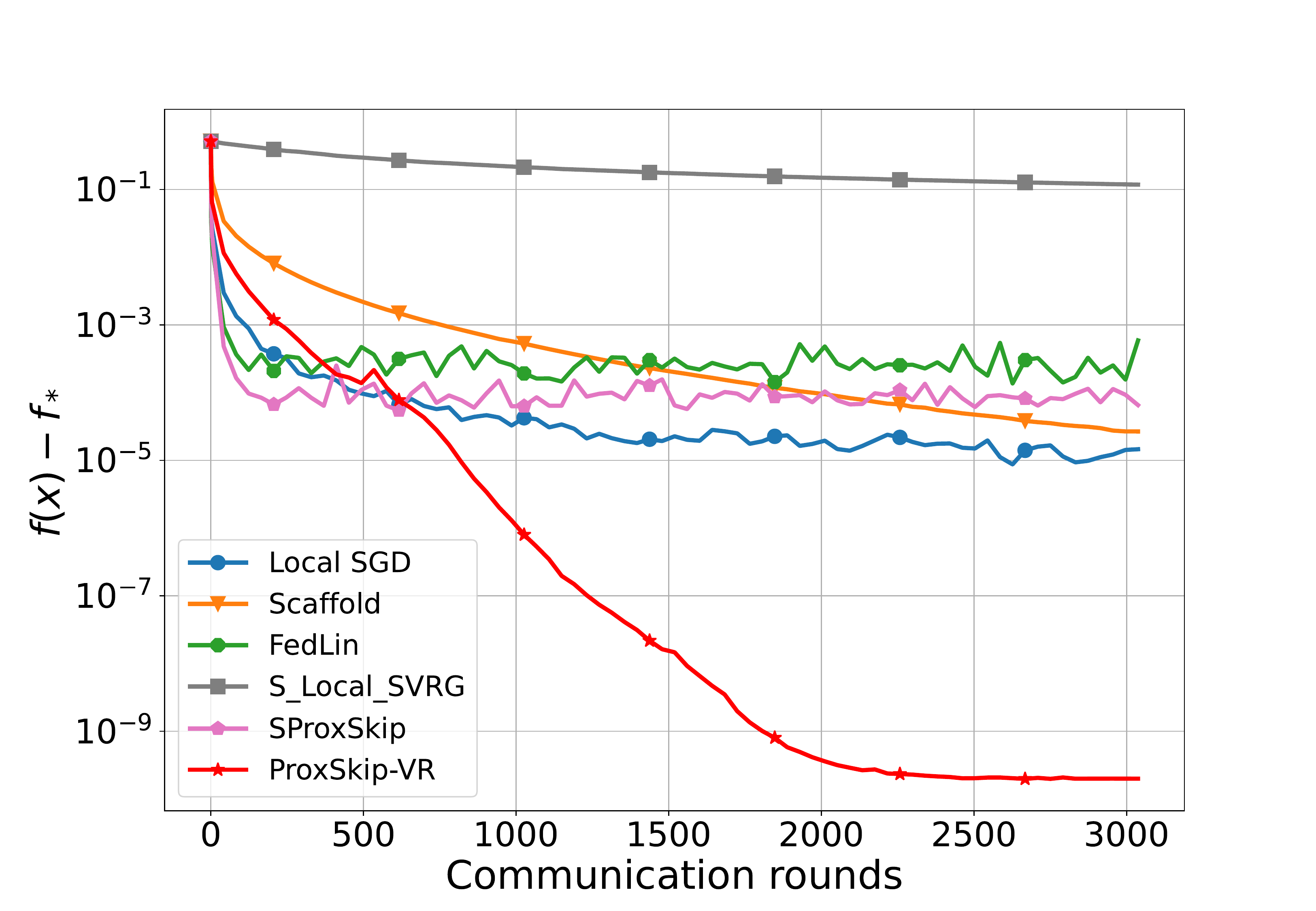}
		\caption{$\tau=32$}
		\end{subfigure}
		\hfill
	\begin{subfigure}[b]{0.3\textwidth}
		\centering
		\includegraphics[trim=20 10 40 40, clip, width=\textwidth]{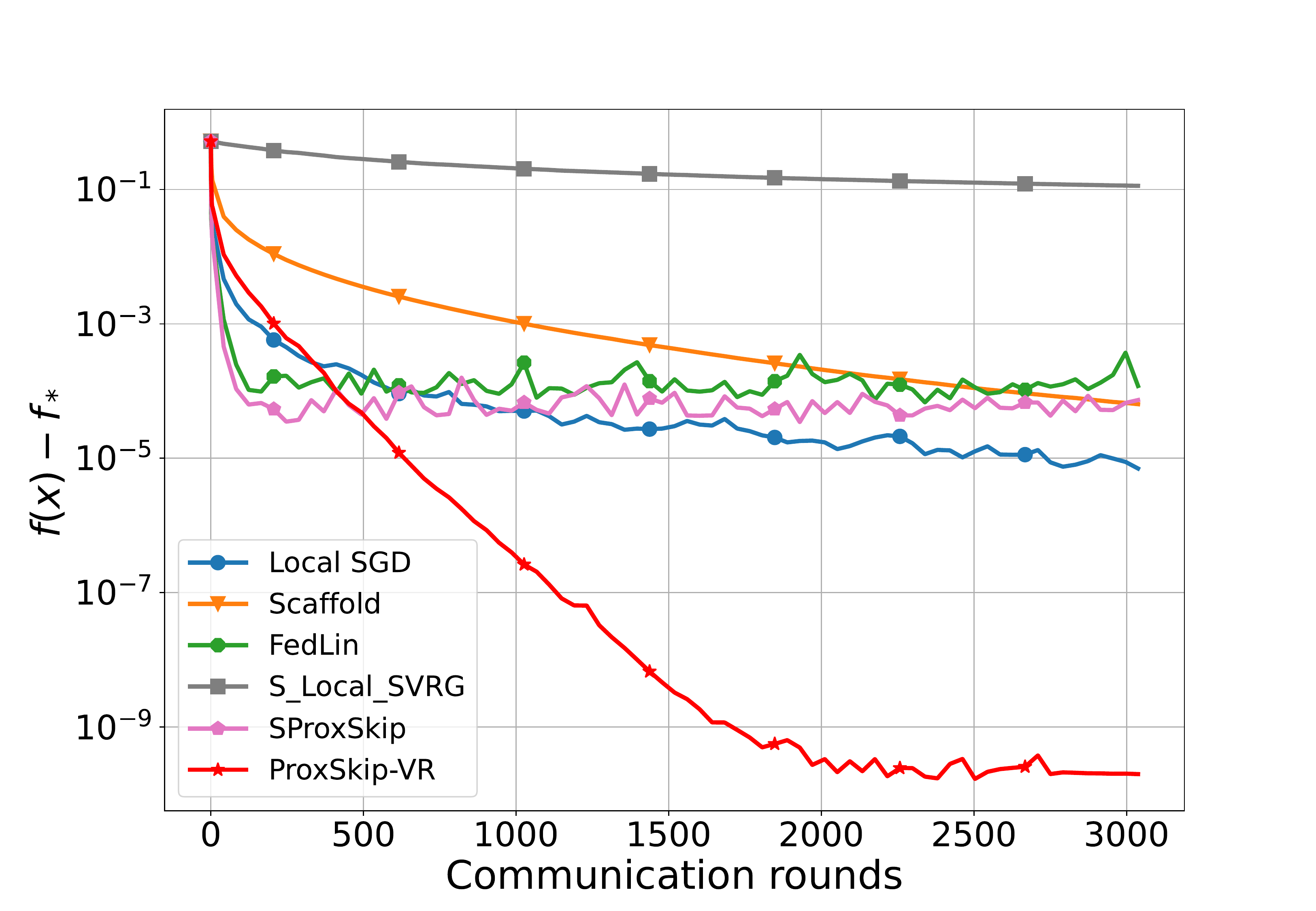}
		\caption{$\tau=64$}
	\end{subfigure}
	\begin{subfigure}[b]{0.3\textwidth}   
		\centering
		\includegraphics[trim=0 10 40 20, clip, width=\textwidth]{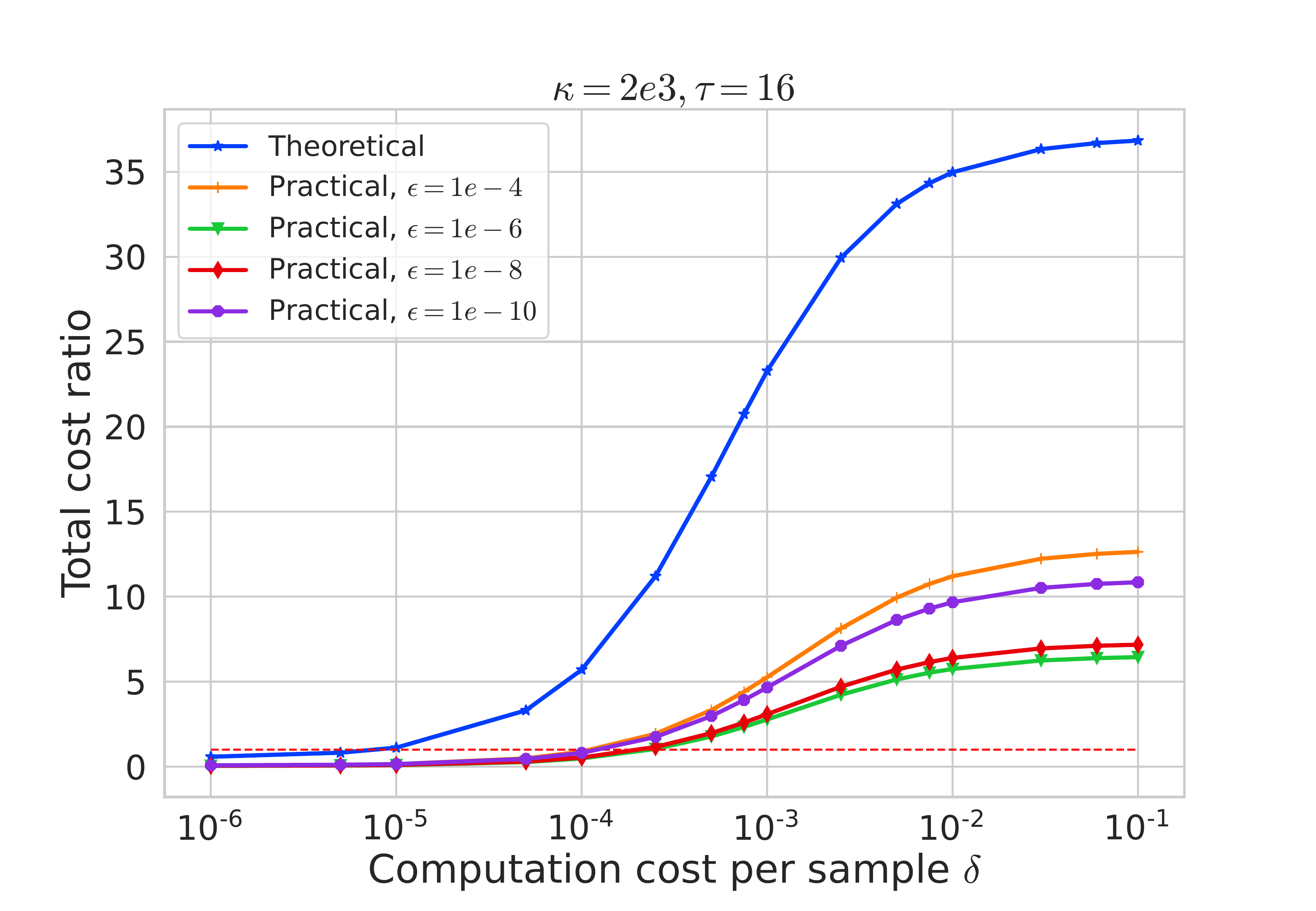}
	\end{subfigure}
	\hfill  
	\begin{subfigure}[b]{0.3\textwidth}
		\centering
		\includegraphics[trim=0 10 40 20, clip, width=\textwidth]{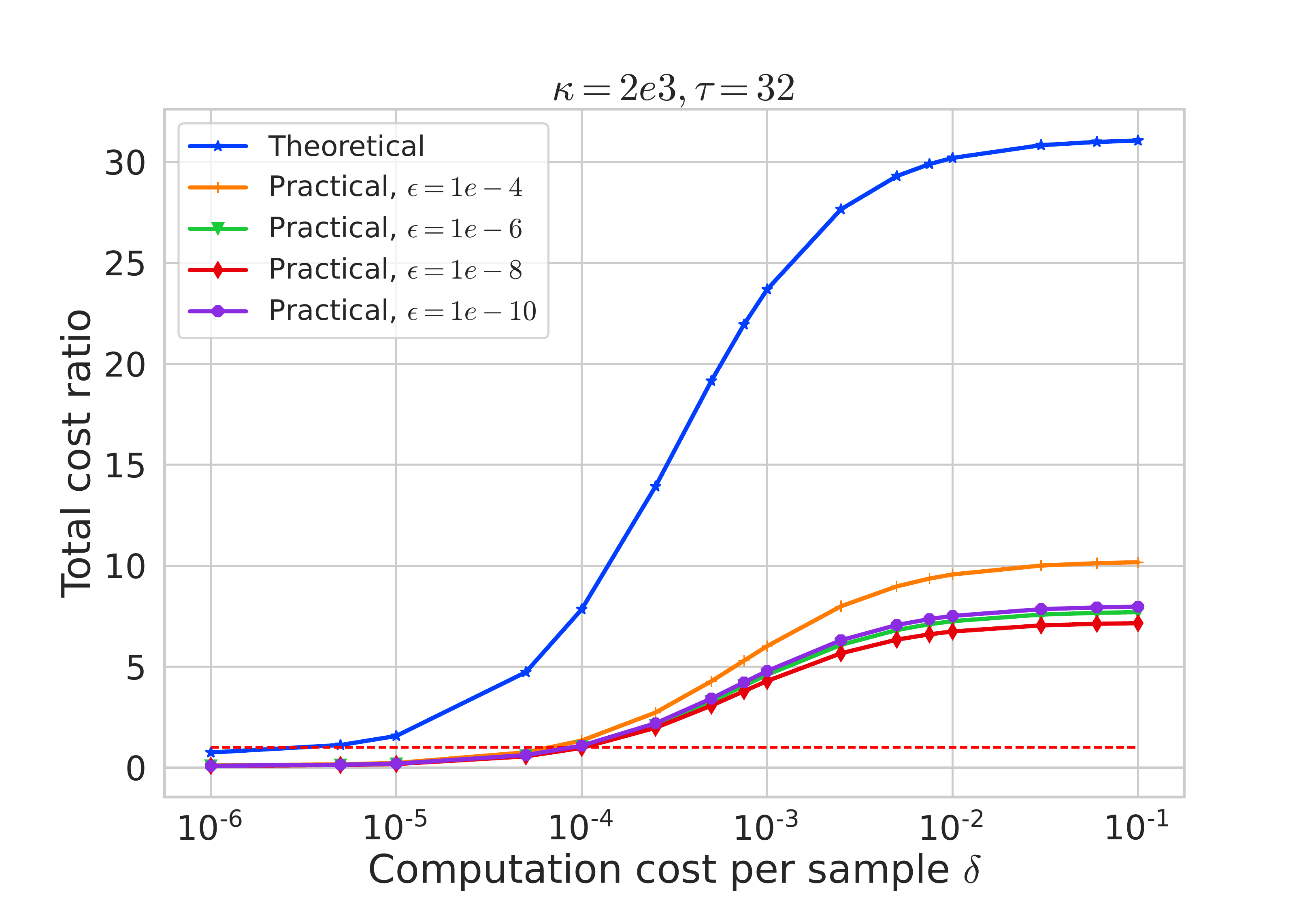}
	\end{subfigure}   
	\hfill  
	\begin{subfigure}[b]{0.3\textwidth}
		\centering
		\includegraphics[trim=0 10 40 20, clip, width=\textwidth]{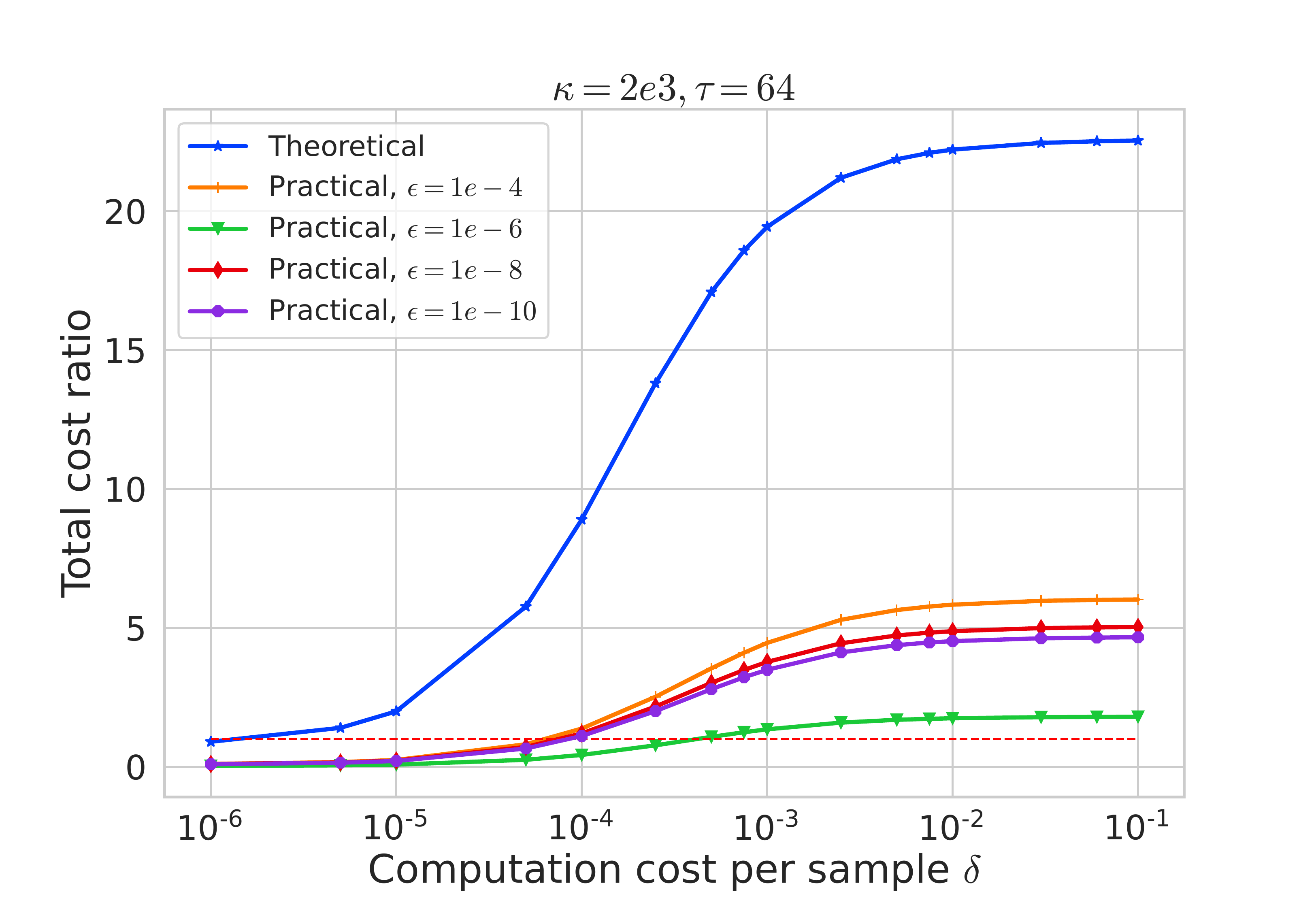}
	\end{subfigure} 
	\caption{The top row shows the convergence results compared with baselines and the second row is the total cost ratio of \algname{ProxSkip} over our \algname{ProxSkip-VR}.}
	\label{fig:exp_comp_total_cost}
\end{figure}

\subsection{The total cost of ProxSkip-LSVRG can be significantly smaller than the total cost of ProxSkip, in theory and practice}

Next, we derive the total cost, which includes communication cost (assumed to be $1$, for normalization purposes), and computational cost (assumed to be  $\delta$; and equal to the cost of performing one \algname{SGD} step with a single data point). Let us consider the total cost of \algname{ProxSkip-VR} in the case of the \algname{LSVRG} estimator: in each iteration we compute 1 stochastic gradient, and with probability $q$ we compute the exact gradient. We do not need to compute a second stochastic gradient since we can use memory and the relation $y_{t+1} = x_t$. The total cost for \algname{ProxSkip-VR} is equal to 
$$\text{Cost}(\text{\algname{ProxSkip-VR}}) \eqdef T_{\text{comm.}}(\text{\algname{ProxSkip-VR}}) + \delta   \left(q m + (1-q)\tau+\tau\right)T_{\text{iter}}({\text{\algname{ProxSkip-VR}}}).$$ 
\algname{ProxSkip} requires full/exact gradient computation at each iteration, so the total cost of \algname{ProxSkip} is 
$$\text{Cost}(\text{\algname{ProxSkip}}) \eqdef T_{\text{comm.}}(\text{\algname{ProxSkip}}) + \delta m  T_{\text{iter}}({\text{\algname{ProxSkip}}}).$$ 
Using Theorems~\ref{thm:proxskip} and \ref{thm:QLSVRG} and the value  $\squeeze  L(\tau) \eqdef \frac{m-\tau}{\tau (m-1)}L_{\max} + \frac{m(\tau - 1)}{\tau (m-1)} L$
of the expected smoothness constant for sampling with minibatch size $\tau$ uniformly at random,
 we get the following expression for the cost ratio, expressed as a function of $\delta$: 
\begin{align}\label{eq:09uf09ufdff} 
		 \squeeze   \text{Cost ratio}(\delta) \eqdef \frac{\text{Cost}(\text{\algname{ProxSkip}})}{\text{Cost}(\text{\algname{ProxSkip-VR}})} = \frac{\sqrt{\mu L} + mL \delta}{\sqrt{\mu L(\tau)} + \left( 2m\mu + (2L(\tau)-2\mu)\tau\right)\delta}.
\end{align}
We can easily calculate the limits of this expression: 
\begin{align*}
	\squeeze
	   \text{Cost ratio} (\delta=0) = \sqrt{\frac{L}{L(\tau)}},      \quad\quad\quad    \text{Cost ratio} (\delta\to \infty)= \frac{m L}{2 (m \mu+\left(L(\tau)- \mu\right) \tau)}.
\end{align*}
Since $L(\tau) \geq L$, the cost ratio is below 1 when $\delta=0$ and $\tau<m$, which means that \algname{ProxSkip} is better than \algname{ProxSkip-VR}. This is to be expected since $\delta=0$ means that we {\em ignore} the cost of local computation entirely, which offers advantage to the former method. On the other hand, the cost ratio is an increasing function of $\delta$, and often reaches the threshold of 1 with a small value of $\delta$; in \ref{fig:exp_comp_total_cost} (second row), this threshold is reached for $\delta \in [10^{-6},10^{-5}]$ in all three plots. 

In Figure~\ref{fig:exp_comp_total_cost} (second row), we depict the theoretical cost ratio according to \eqref{eq:09uf09ufdff} and the corresponding experimental cost ratio obtained by an actual run of both methods to achieve $\varepsilon$-accuracy, with $\varepsilon = 10^{-6} $ and  $\varepsilon = 10^{-8}$. Remarkably, the experimental results follows the same pattern as our theoretical prediction. The empirical curves appear lower because we use approximations for $L_{\max}$, $L$ and $\mu$. As we can see, starting from $\delta = 10^{-4}$, \algname{ProxSkip-VR} starts to outperform \algname{ProxSkip}. As $\delta$ grows, the advantage of variance reduction embedded in \algname{ProxSkip-VR} over vanilla \algname{ProxSkip} grows. These results suggest that variance reduction is of practical utility in terms of the total cost, even for small values of $\delta$, and its effectiveness grows with $\delta$.   


\subsection{Additional experiments with ProxSkip-VR}

We conduct further experiments to validate the efficiency of our proposed method \algname{ProxSkip-VR}. We instantiate our variance reduction design with \algname{LSVRG}and compare our method with several baselines across various  datasets (\dataset{w8a}/\dataset{a9a}), different number of workers (10/20), and different batch sizes (16/32/64). All results in Figures~\ref{fig:053},~\ref{fig:054},~\ref{fig:051},~\ref{fig:052} show that \algname{ProxSkip-VR}  achieves linear convergence and outperforms the baselines. 

\begin{figure}[!htbp]
	\centering
	\begin{subfigure}[b]{0.32\textwidth}
		\centering
		\includegraphics[trim=20 10 40 40, clip, width=\textwidth]{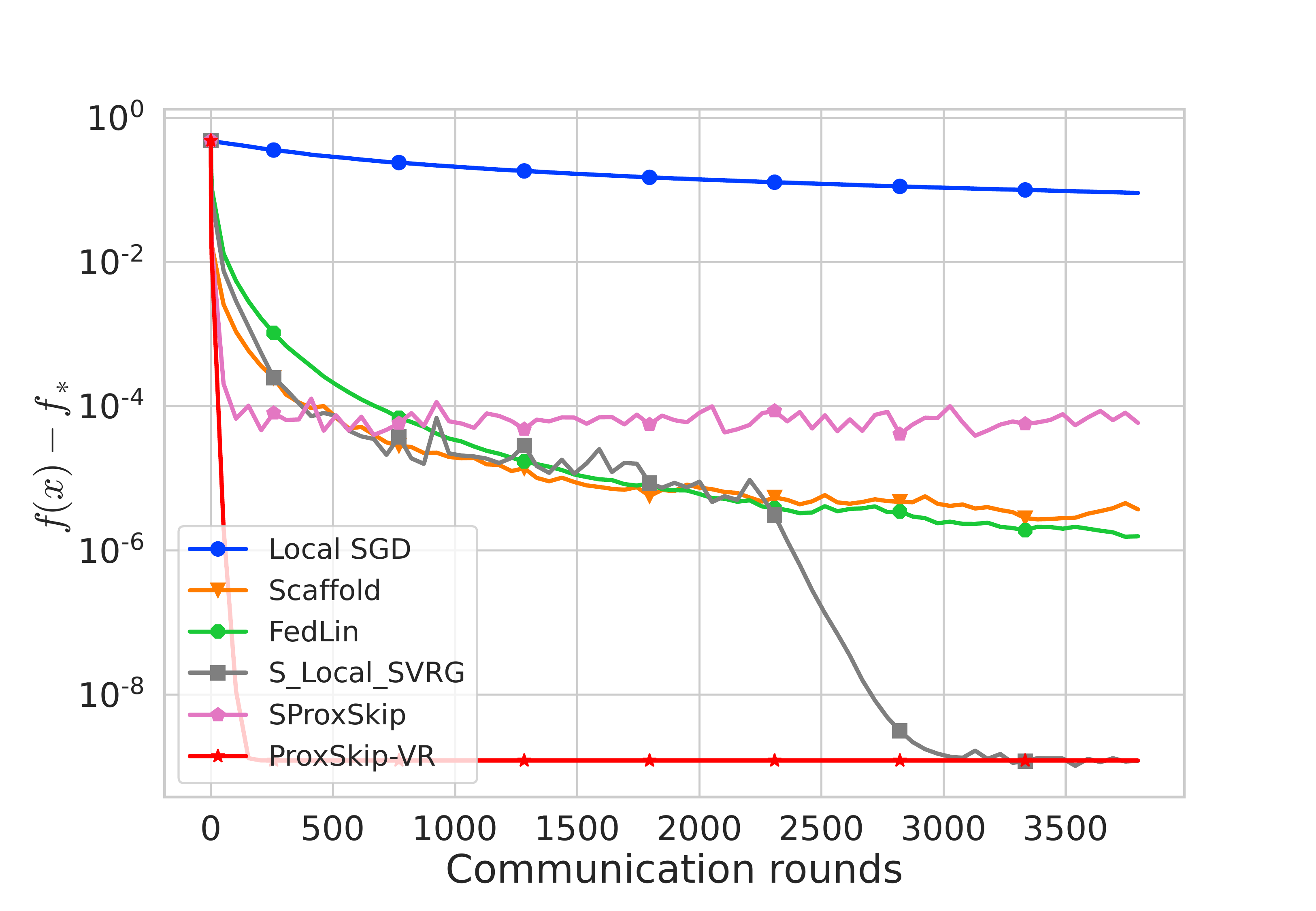}
		\caption{$\tau=16$.}
	\end{subfigure}
	\hfill
	\begin{subfigure}[b]{0.32\textwidth}
		\centering
		\includegraphics[trim=20 10 40 40, clip, width=\textwidth]{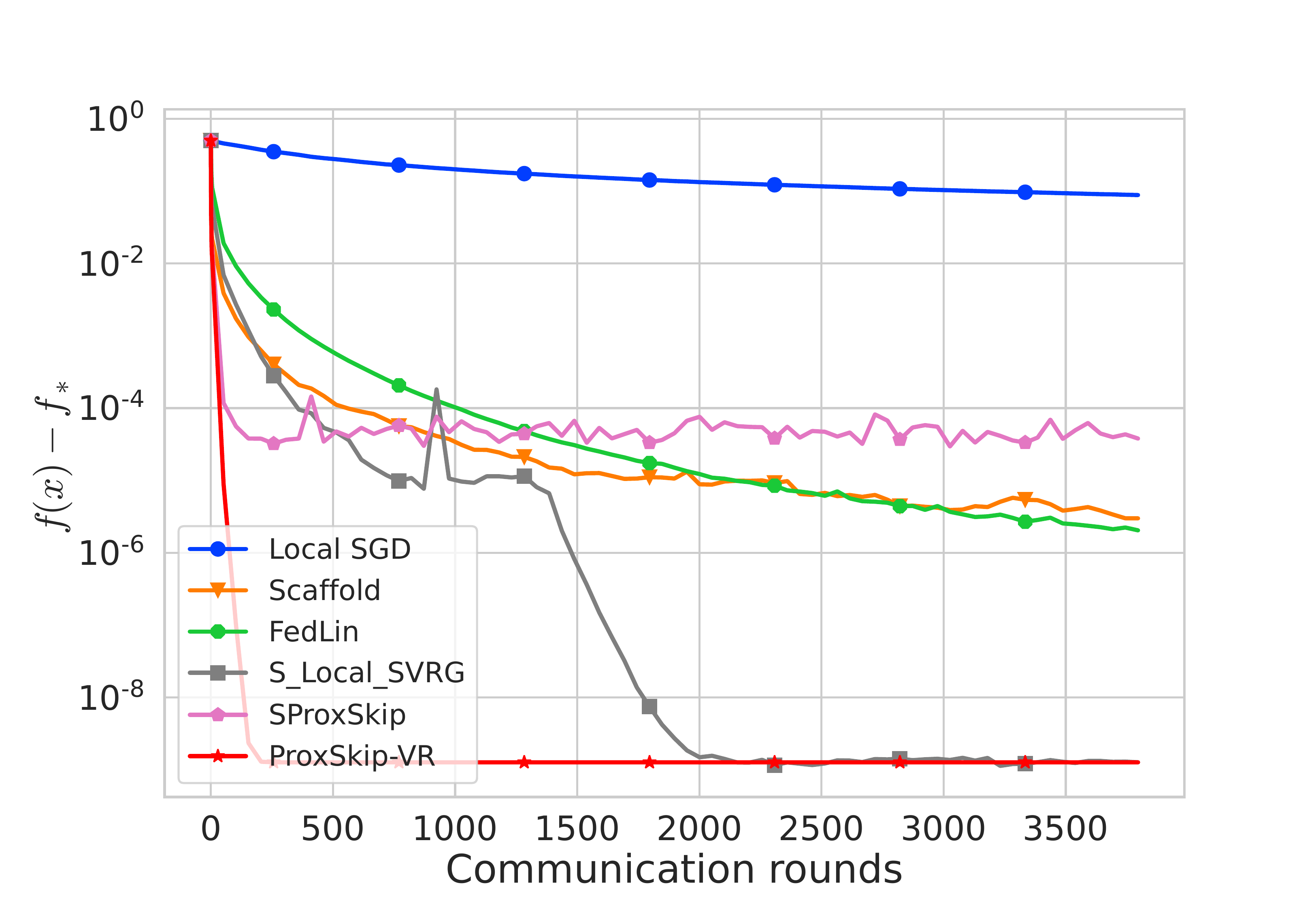}
		\caption{$\tau=32$.}
	\end{subfigure}
	\hfill
	\begin{subfigure}[b]{0.32\textwidth}
		\centering
		\includegraphics[trim=20 10 40 40, clip, width=\textwidth]{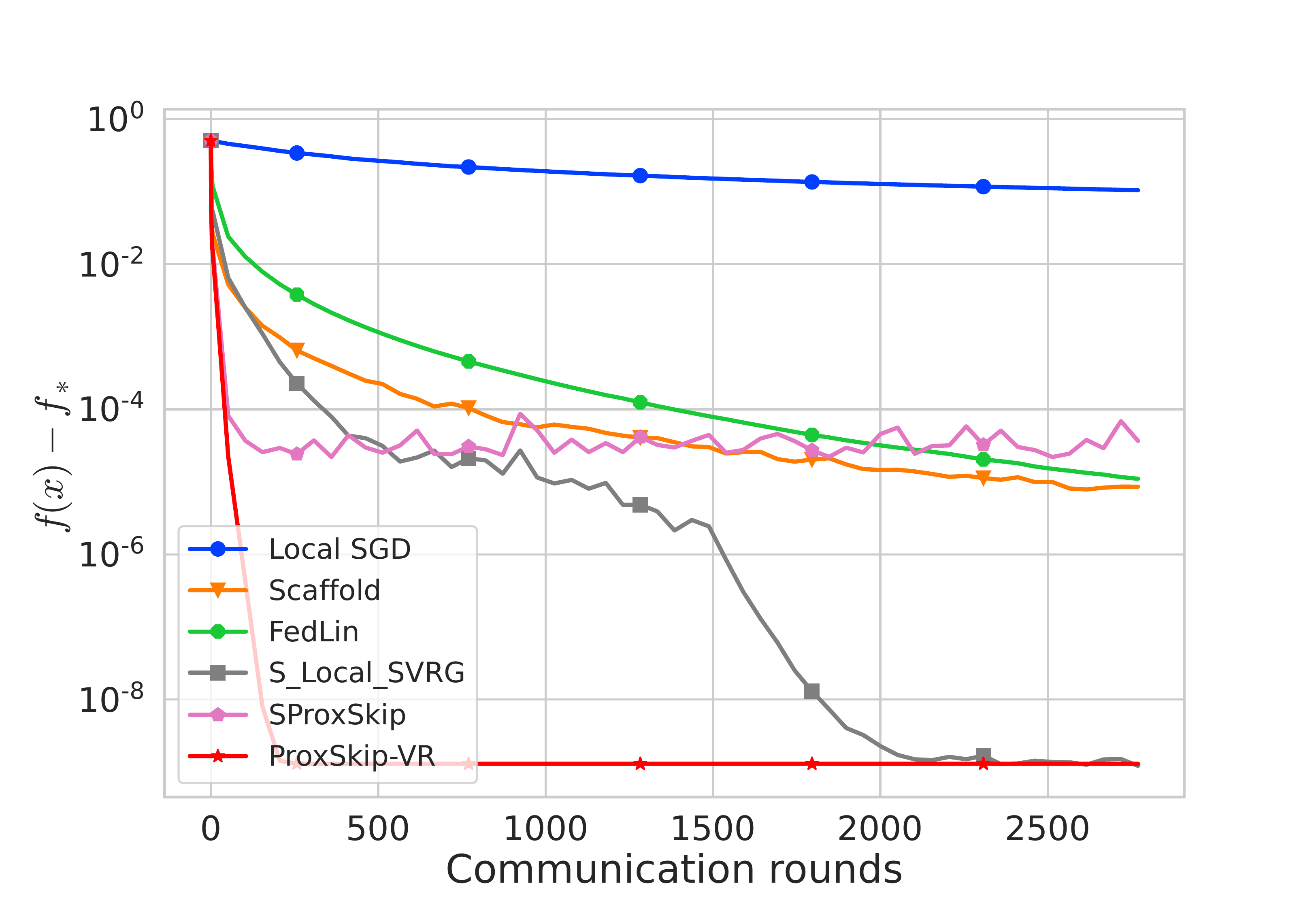}
		\caption{$\tau=64$.}
	\end{subfigure}
	\caption{Convergence results with 20 distributed workers on \dataset{w8a} dataset, $\kappa=1e3$.}
	\label{fig:053}
\end{figure} 

\begin{figure}[!htbp]
	\centering
	\begin{subfigure}[b]{0.32\textwidth}
		\centering
		\includegraphics[trim=20 10 40 40, clip, width=\textwidth]{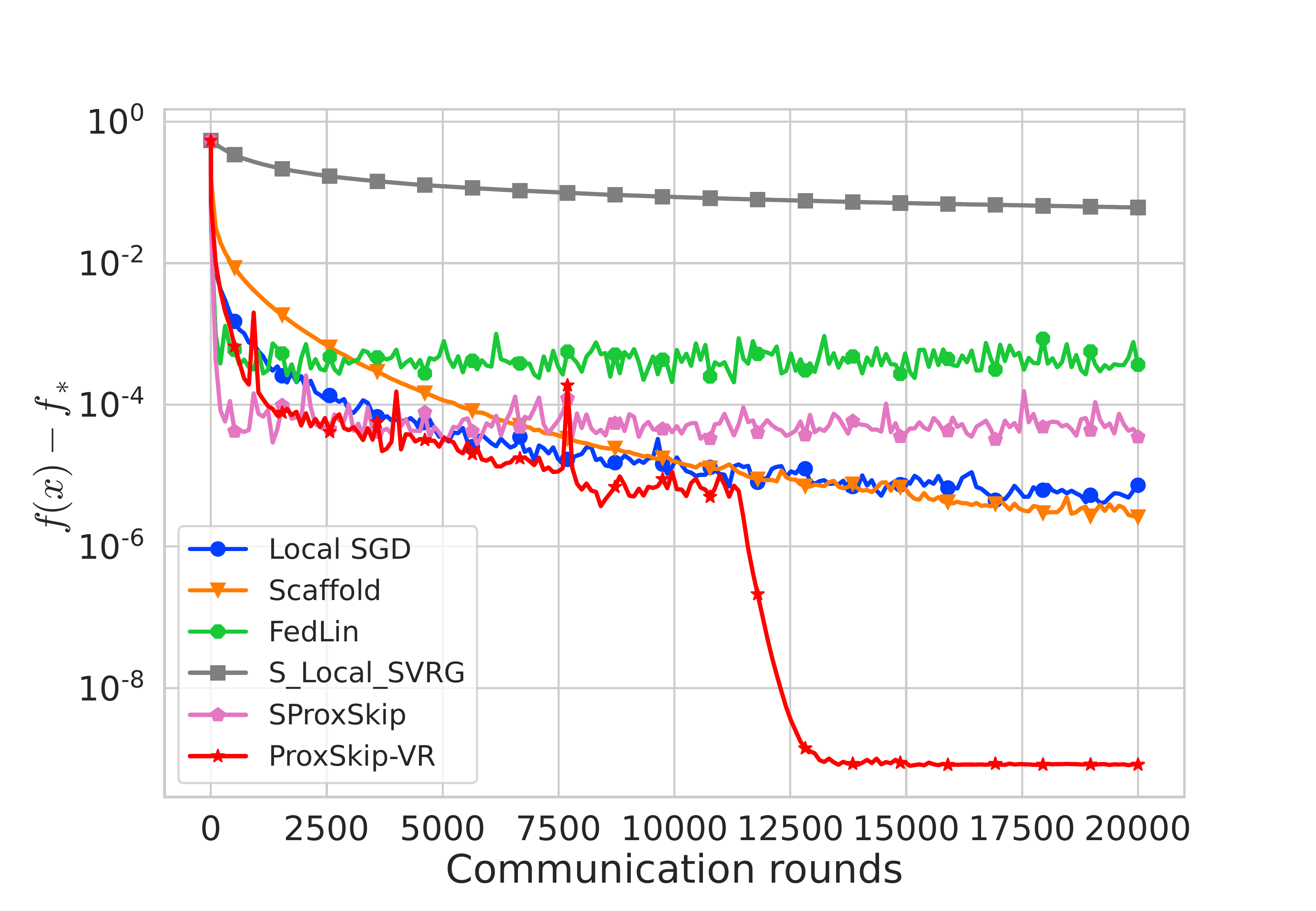}
		\caption{$\tau=16$.}
	\end{subfigure}
	\hfill
	\begin{subfigure}[b]{0.32\textwidth}
		\centering
		\includegraphics[trim=20 10 40 40, clip, width=\textwidth]{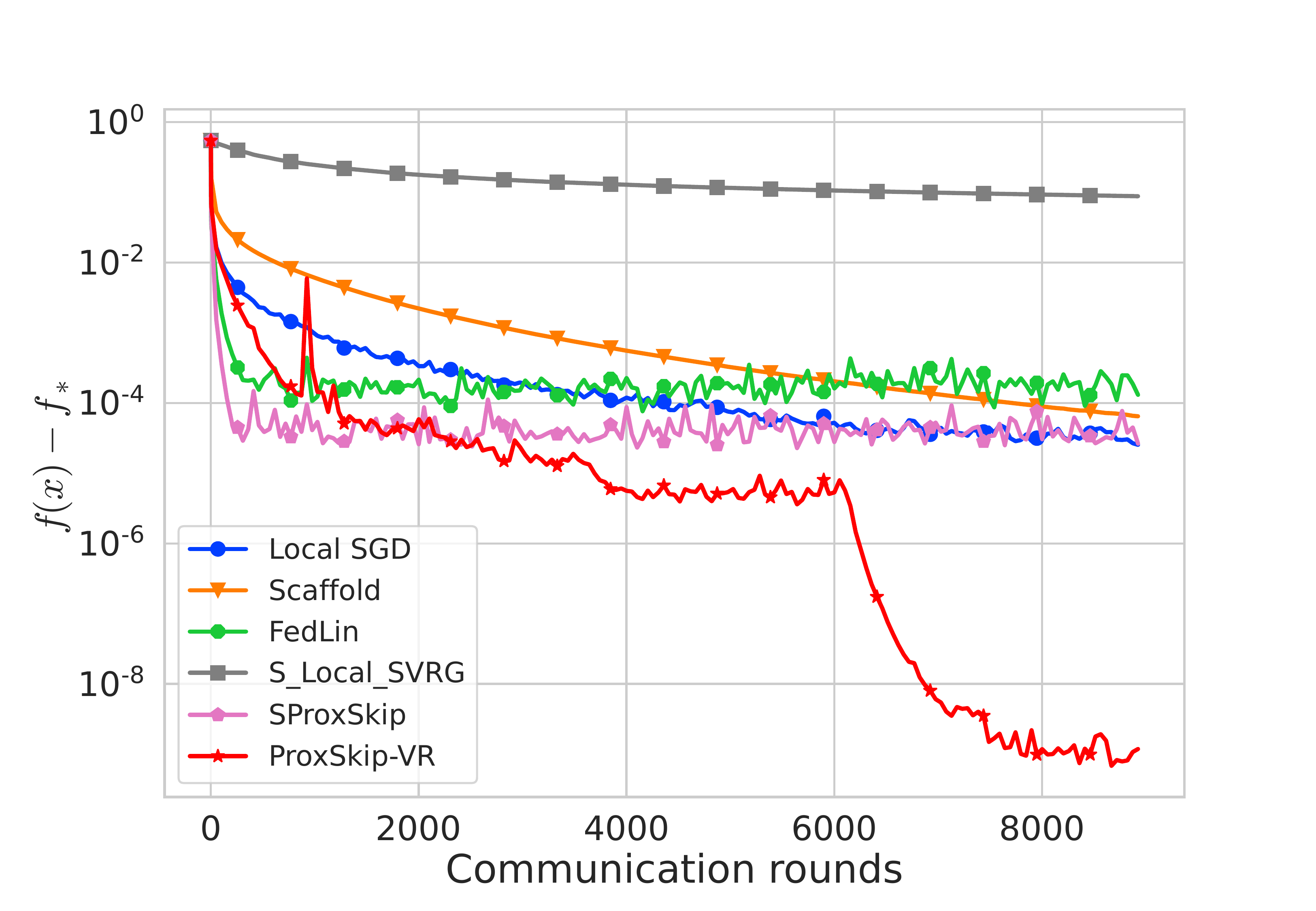}
		\caption{$\tau=32$.}
	\end{subfigure}
	\hfill
	\begin{subfigure}[b]{0.32\textwidth}
		\centering
		\includegraphics[trim=20 10 40 40, clip, width=\textwidth]{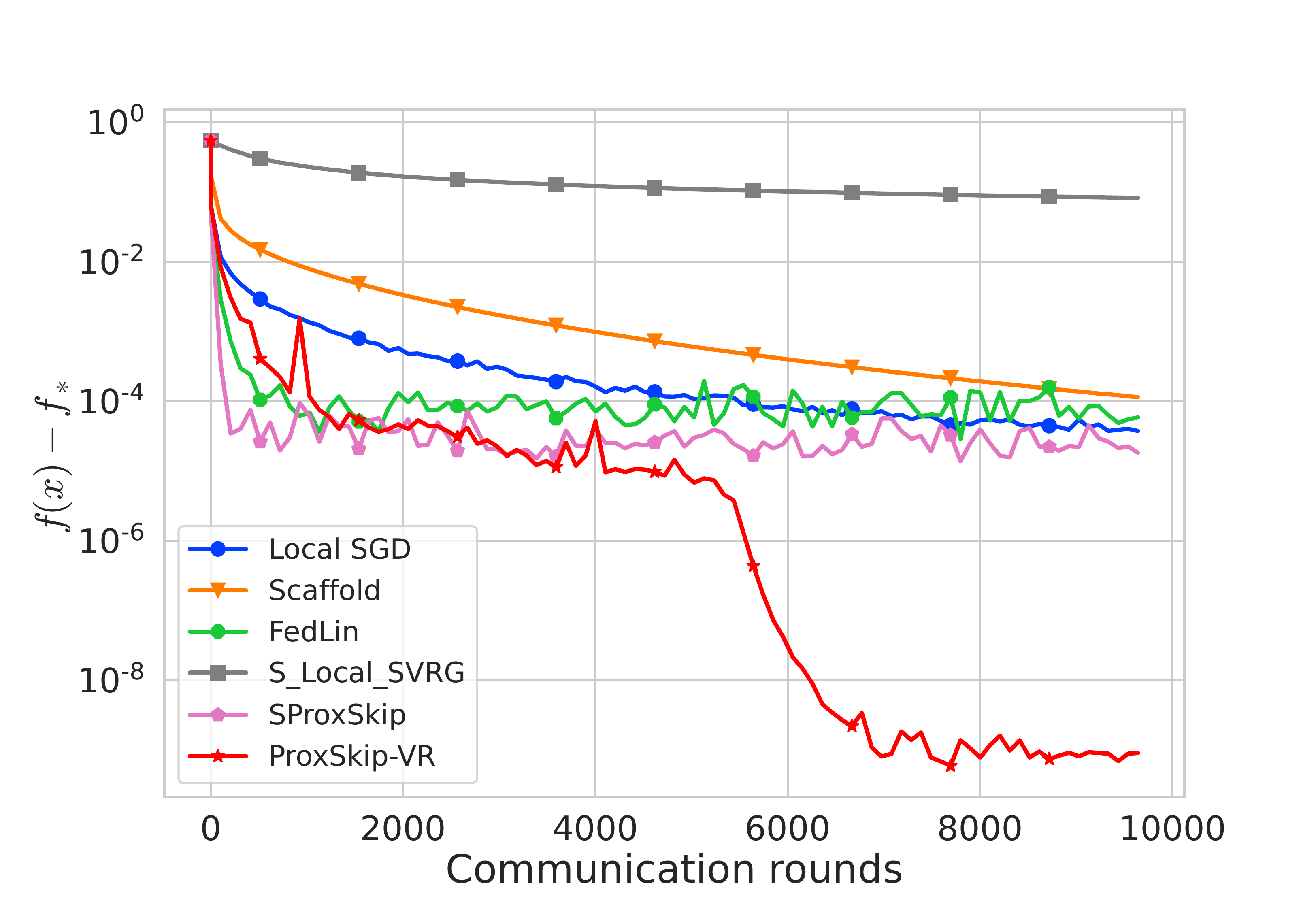}
		\caption{$\tau=64$.}
	\end{subfigure}
	\caption{Convergence results with 20 distributed workers on \dataset{w8a} dataset, $\kappa=1e4$.}
	\label{fig:054}
\end{figure}

\begin{figure}[!htbp]
	\centering
	\begin{subfigure}[b]{0.32\textwidth}
		\centering
		\includegraphics[trim=20 10 40 40, clip, width=\textwidth]{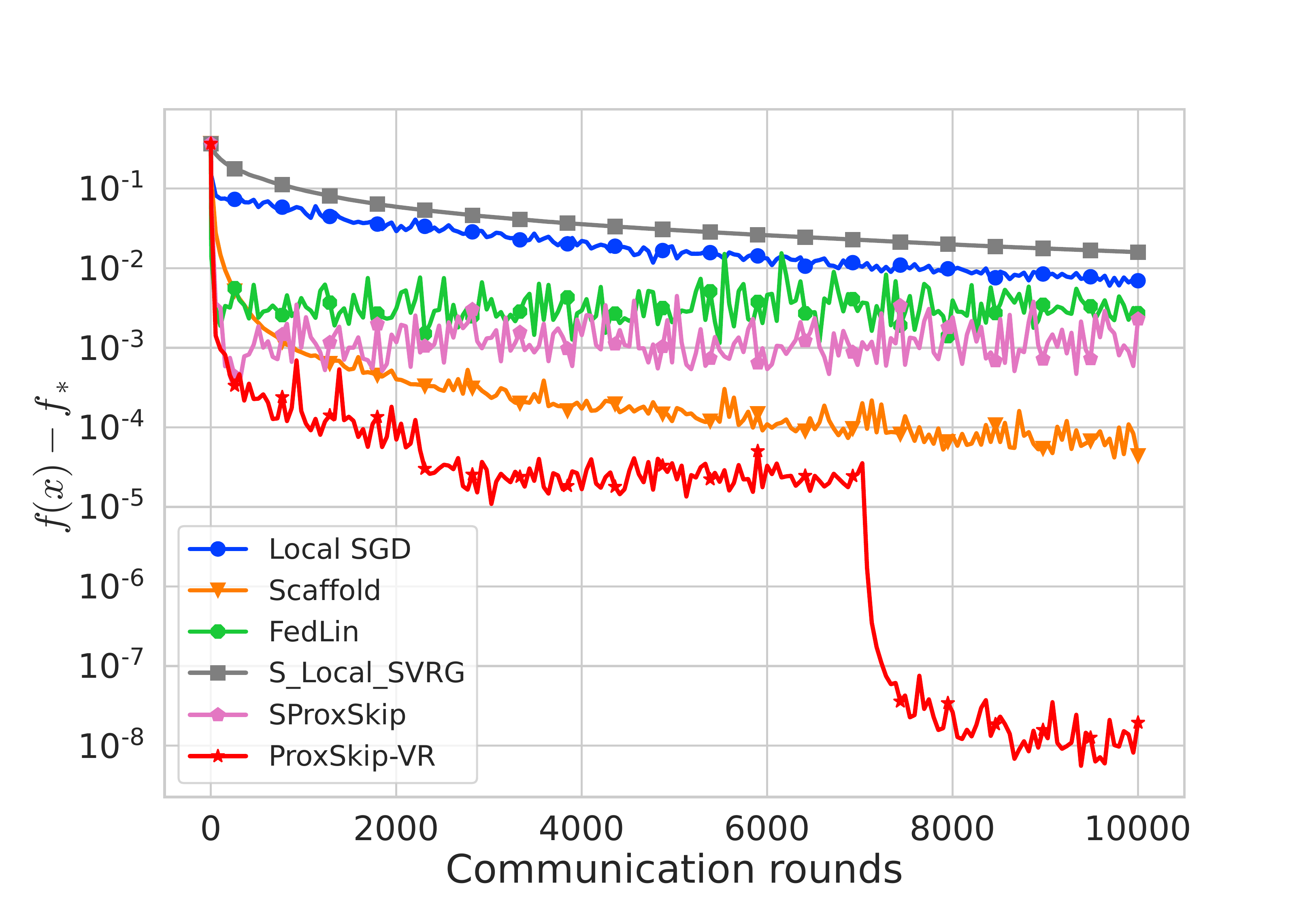}
		\caption{$\tau=16$.}
	\end{subfigure}
	\hfill
	\begin{subfigure}[b]{0.32\textwidth}
		\centering
		\includegraphics[trim=20 10 40 40, clip, width=\textwidth]{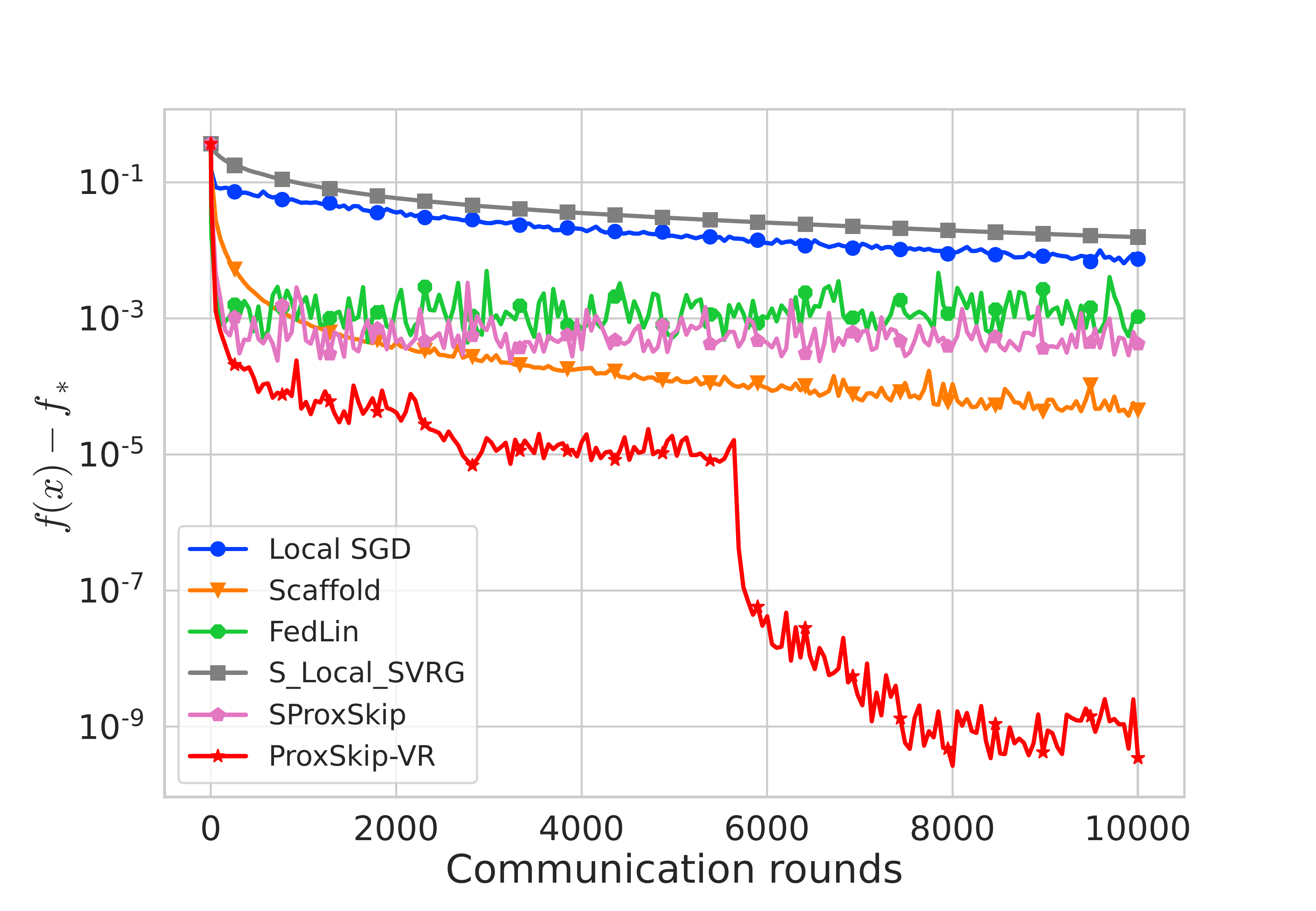}
		\caption{$\tau=32$.}
	\end{subfigure}
	\hfill
	\begin{subfigure}[b]{0.32\textwidth}
		\centering
		\includegraphics[trim=20 10 40 40, clip, width=\textwidth]{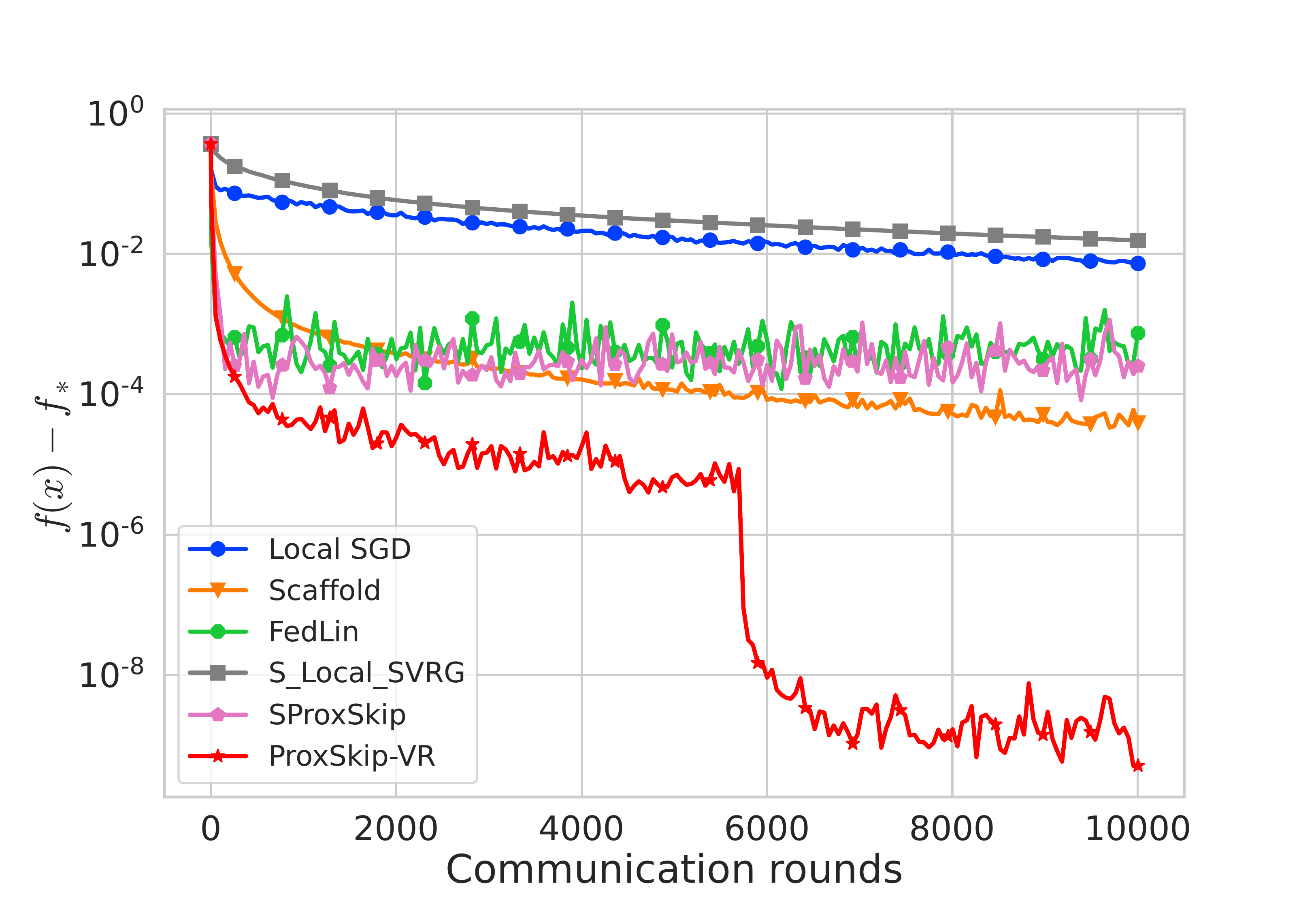}
		\caption{$\tau=64$.}
	\end{subfigure}
	\caption{Convergence results with 10 distributed workers on \dataset{a9a} dataset, $\kappa=1e4$.}
	\label{fig:051}
\end{figure} 

\begin{figure}[!htbp]
	\centering
	\begin{subfigure}[b]{0.32\textwidth}
		\centering
		\includegraphics[trim=20 10 40 40, clip, width=\textwidth]{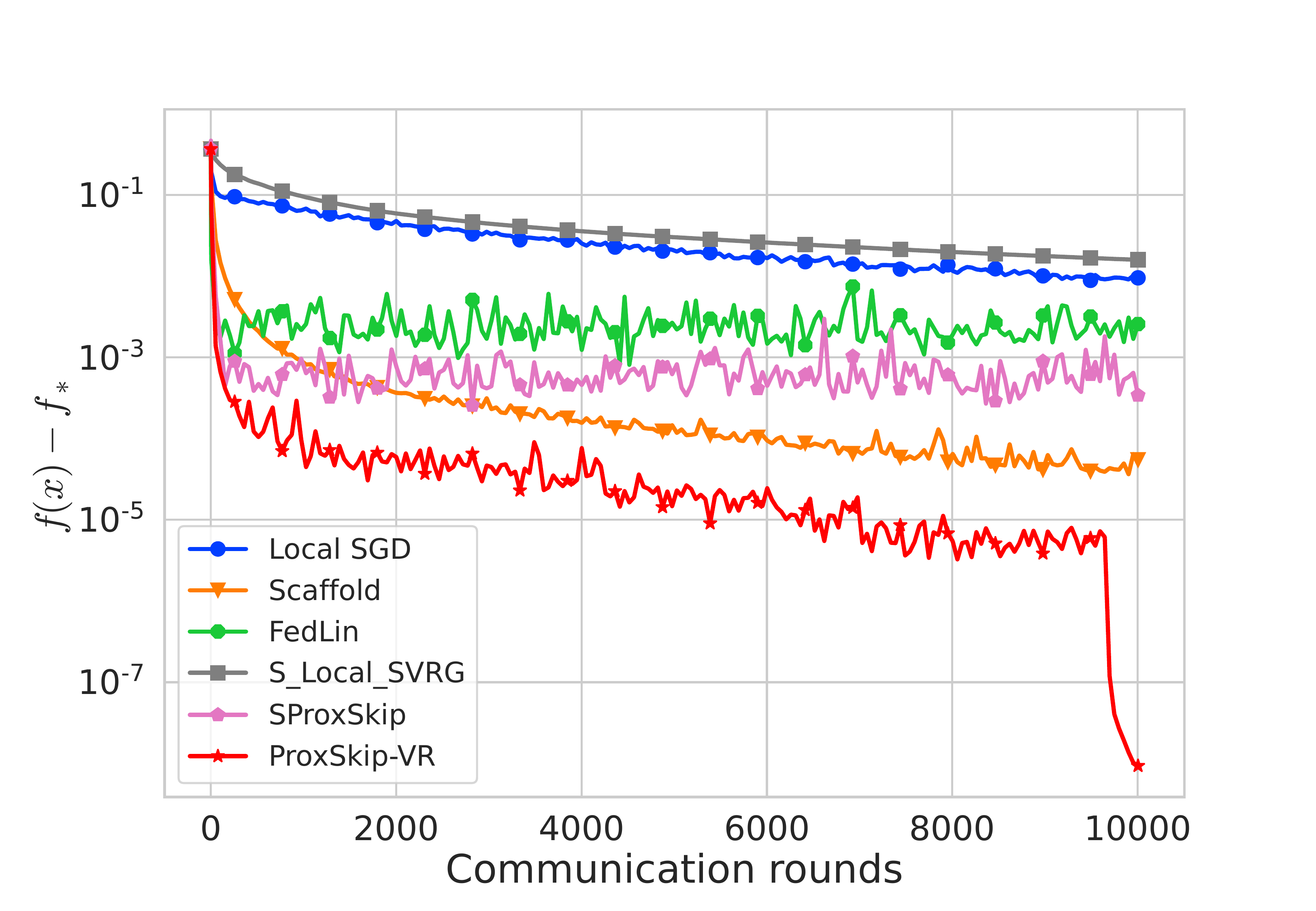}
		\caption{$\tau=16$.}
	\end{subfigure}
	\hfill
	\begin{subfigure}[b]{0.32\textwidth}
		\centering
		\includegraphics[trim=20 10 40 40, clip, width=\textwidth]{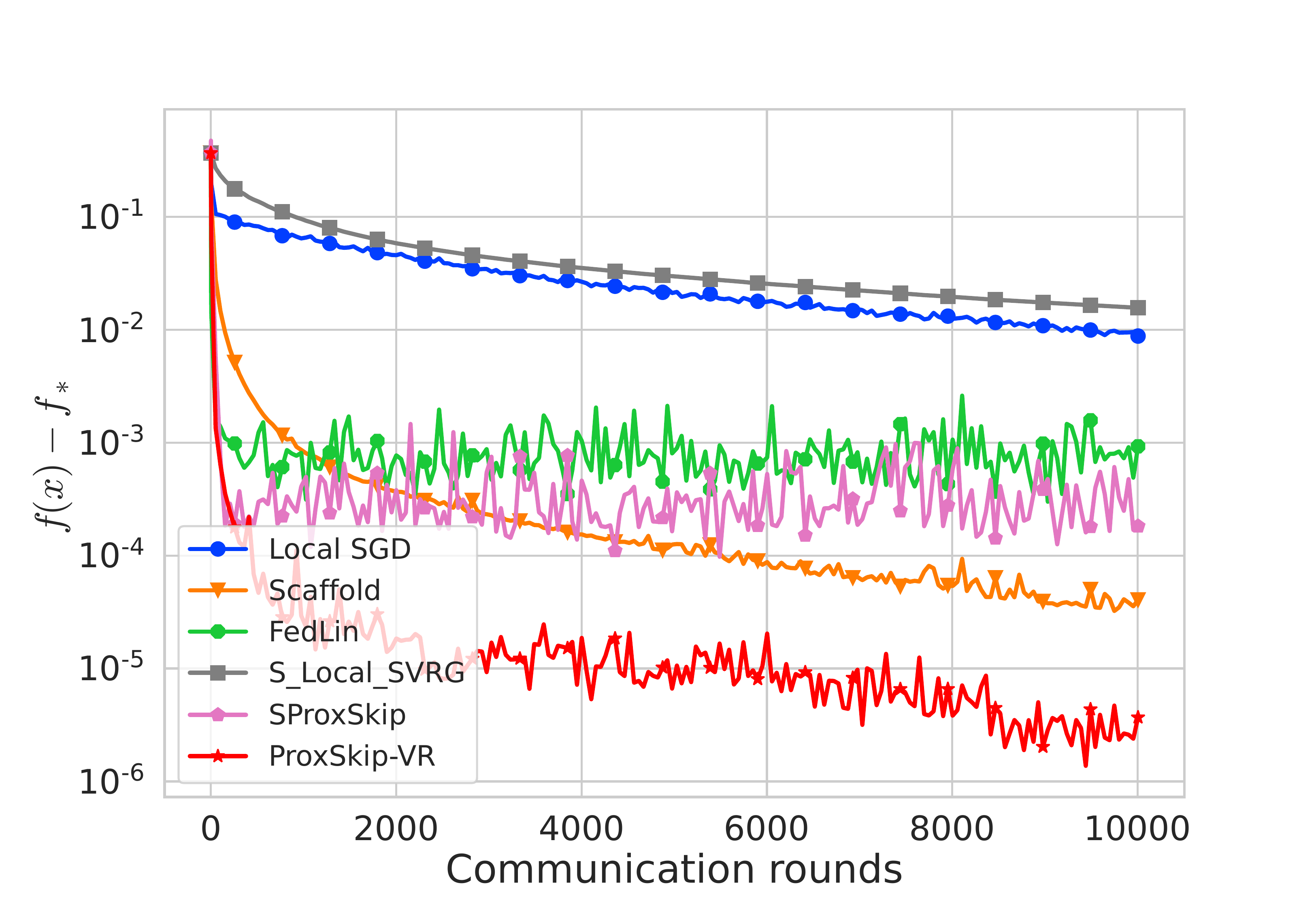}
		\caption{$\tau=32$.}
	\end{subfigure}
	\hfill
	\begin{subfigure}[b]{0.32\textwidth}
		\centering
		\includegraphics[trim=20 10 40 40, clip, width=\textwidth]{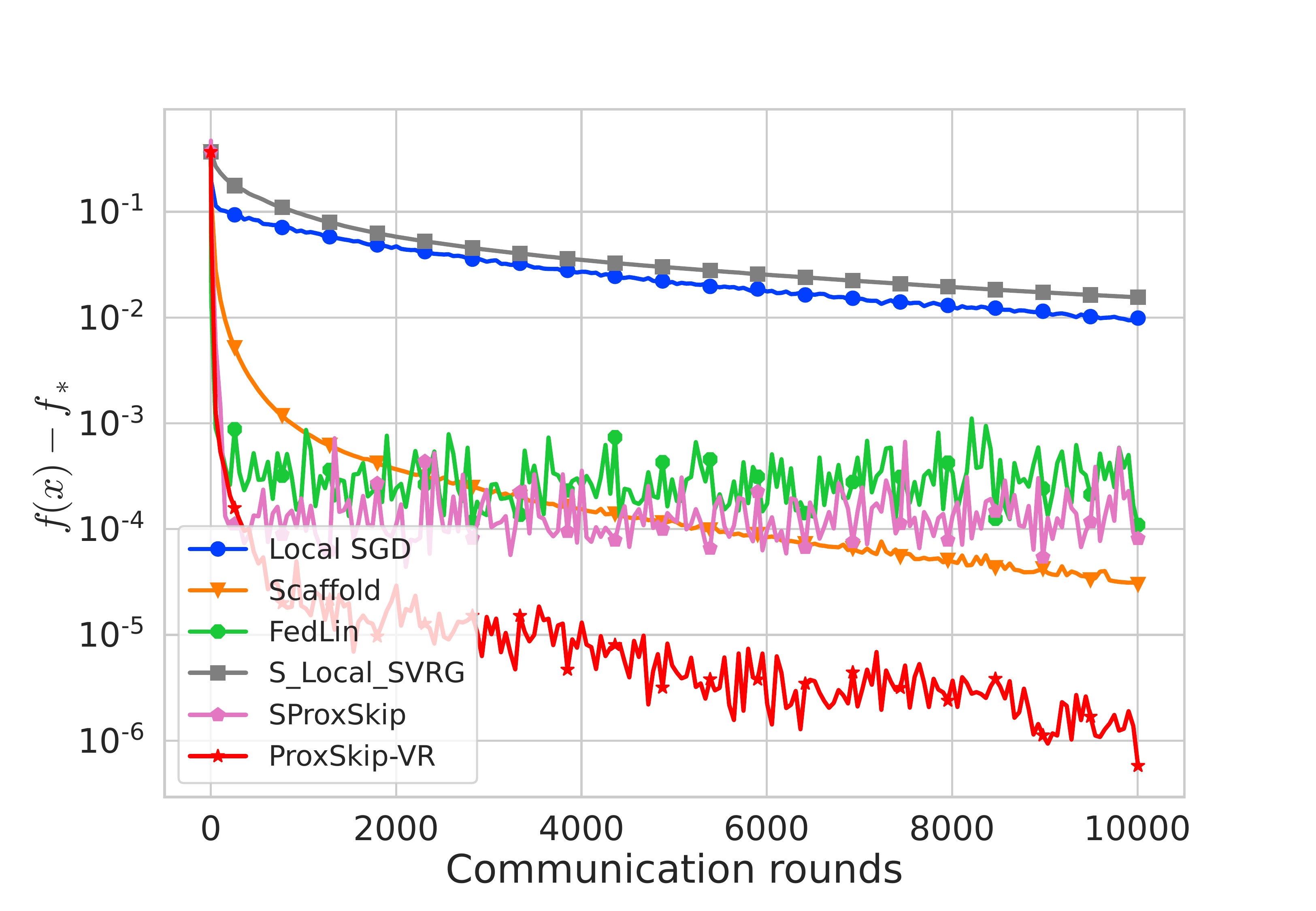}
		\caption{$\tau=64$.}
	\end{subfigure}
	\caption{Convergence results with 20 distributed workers on \dataset{a9a} dataset, $\kappa=1e4$.}
	\label{fig:052}
\end{figure} 

\subsection{Additional experiments of cost ratio of ProxSkip over ProxSkip-VR}

Here we report on more experiments related to the total cost ratio of \algname{ProxSkip} over \algname{ProxSkip-VR} in Figures~\ref{fig:055},~\ref{fig:056},~\ref{fig:057},~\ref{fig:058}. When \algname{GD} is used as the subroutine, i.e., when \algname{ProxSkip-VR} reduces to \algname{ProxSkip}, the ratio is equal to one, and  is depicted by the red horizontal dashed line. Any cost ratio above one means that \algname{ProxSkip-VR} benefits over \algname{ProxSkip} (e.g., a cost ratio of $10$ means $10\times$ speedup in favor of our method). As seen in the plots,  acceleration of our method over \algname{ProxSkip} is clearly visible, and improves as the local computation cost $\delta$ per sample increases. For large values of $\delta$ (i.e., $\delta\approx 10^{-1}$), the acceleration can reach $20\times$ to $85\times$. Note also that acceleration is more significant for small minibatch sizes. That is, it is better for $\tau=16$ than for $\tau=32$, which is better than in the $\tau=64$ case.  This means that in terms of total cost, it is beneficial for the workers to use smaller minibatch sizes, i.e., it is beneficial to be as far from the full batch regime employed by \algname{ProxSkip} as possible. 

\begin{figure}[!htbp]
	\centering
	\begin{subfigure}[b]{0.32\textwidth}
		\centering
		\includegraphics[trim=20 10 40 40, clip, width=\textwidth]{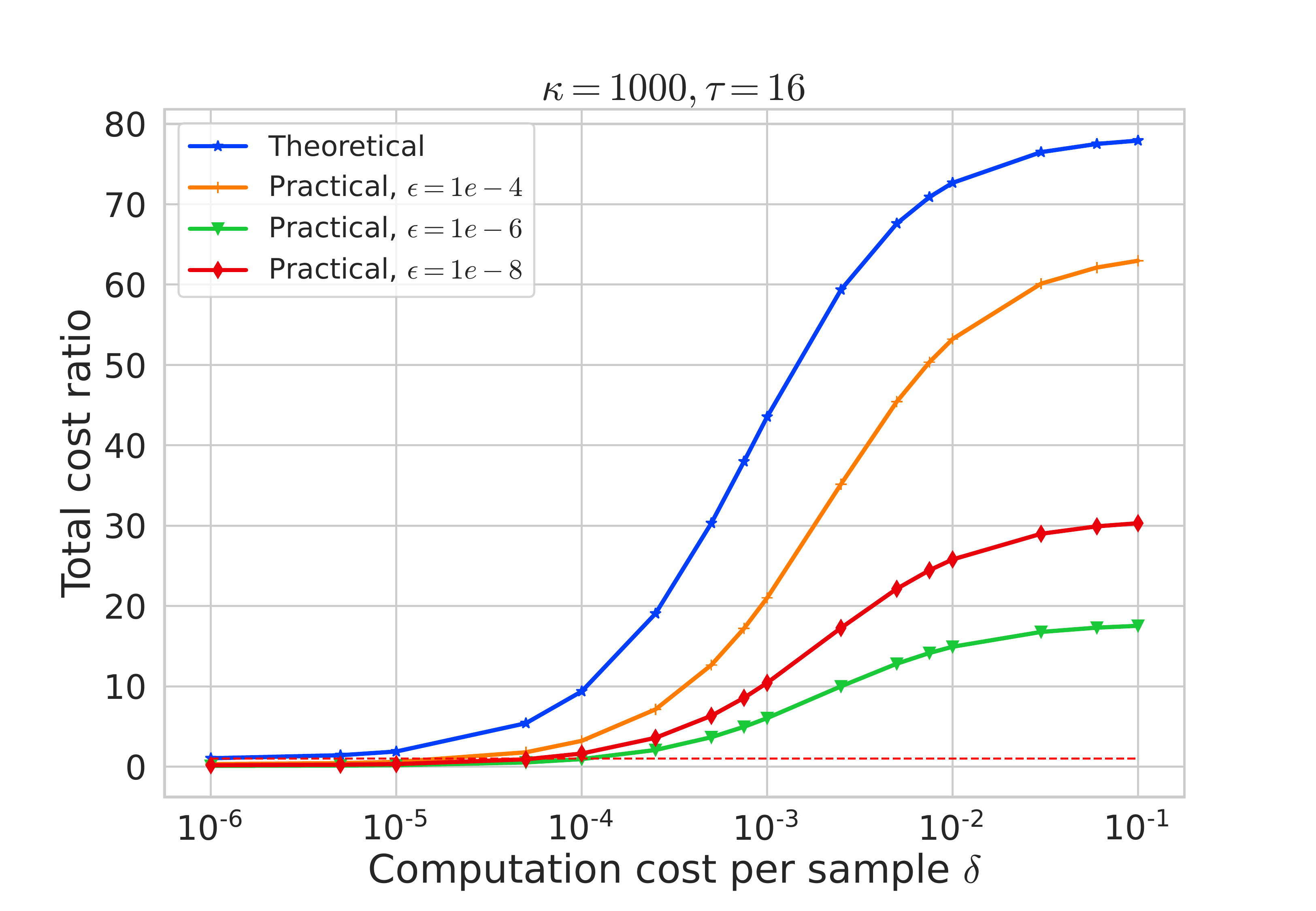}
		\caption{$\tau=16$.}
	\end{subfigure}
	\hfill
	\begin{subfigure}[b]{0.32\textwidth}
		\centering
		\includegraphics[trim=20 10 40 40, clip, width=\textwidth]{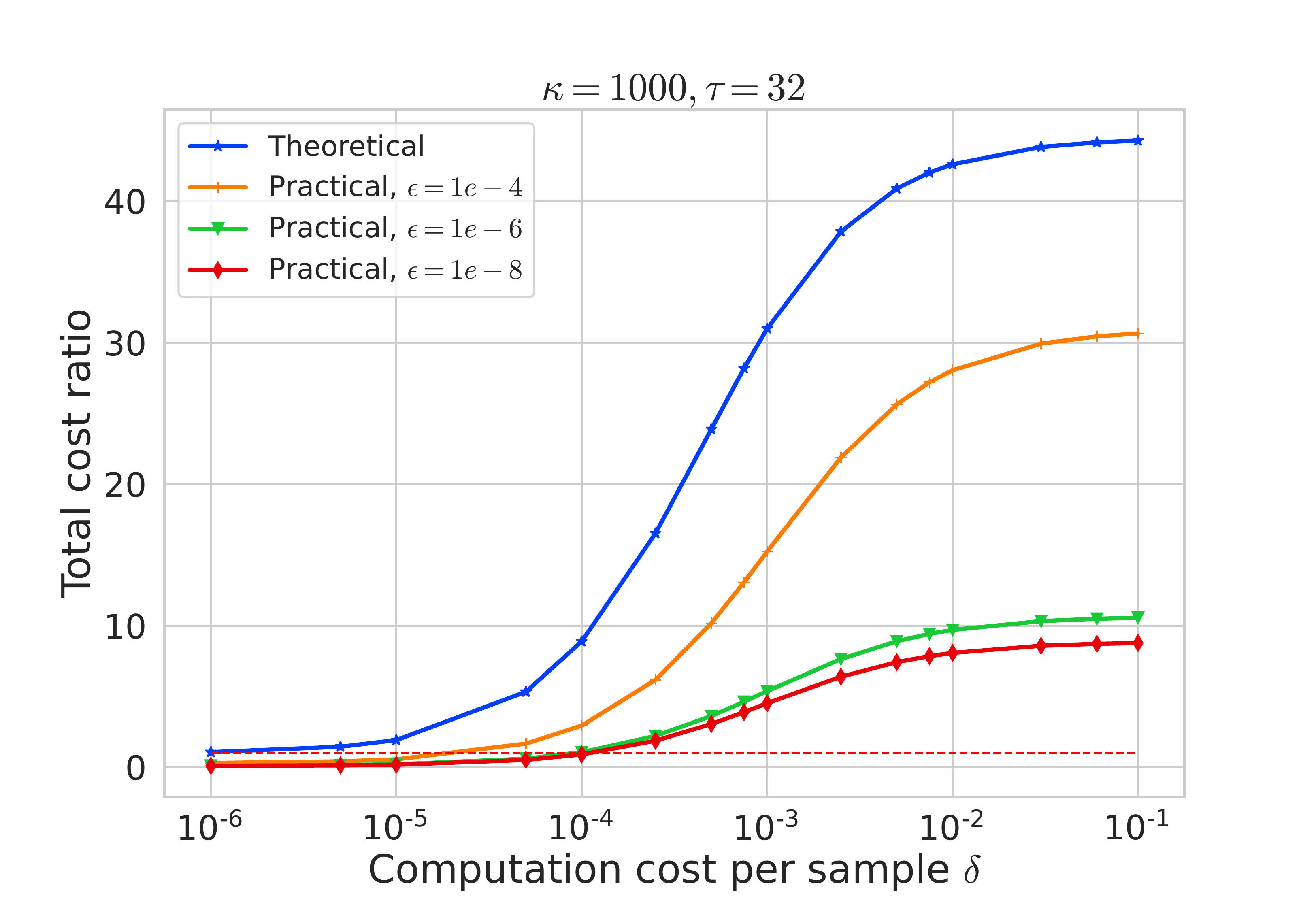}
		\caption{$\tau=32$.}
	\end{subfigure}
	\hfill
	\begin{subfigure}[b]{0.32\textwidth}
		\centering
		\includegraphics[trim=20 10 40 40, clip, width=\textwidth]{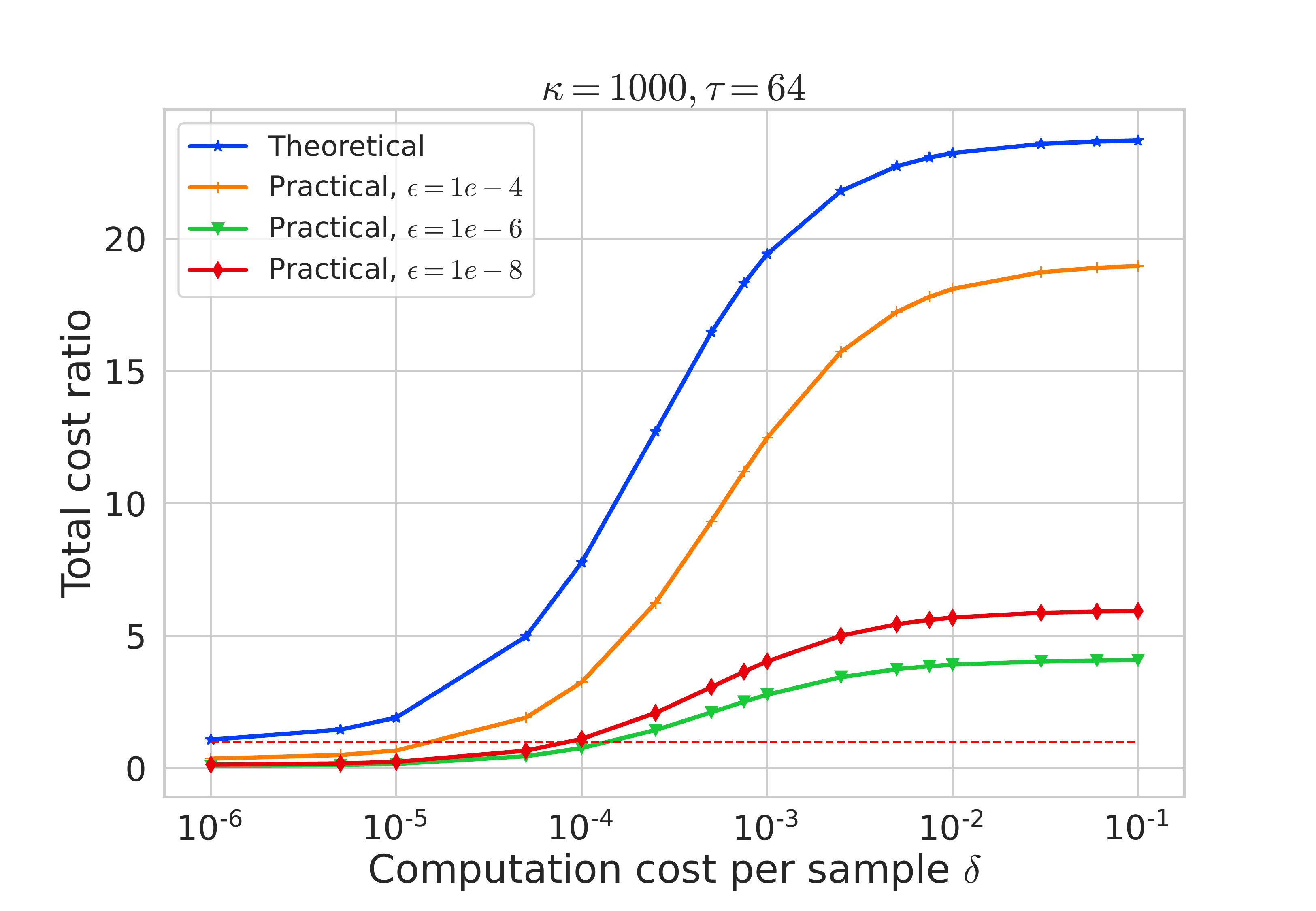}
		\caption{$\tau=64$.}
	\end{subfigure}
	\caption{Acceleration with 10 distributed workers on \dataset{a9a} dataset, $\kappa=1e3$.}
	\label{fig:055}
\end{figure} 

\begin{figure}[!htbp]
	\centering
	\begin{subfigure}[b]{0.32\textwidth}
		\centering
		\includegraphics[trim=20 10 40 40, clip, width=\textwidth]{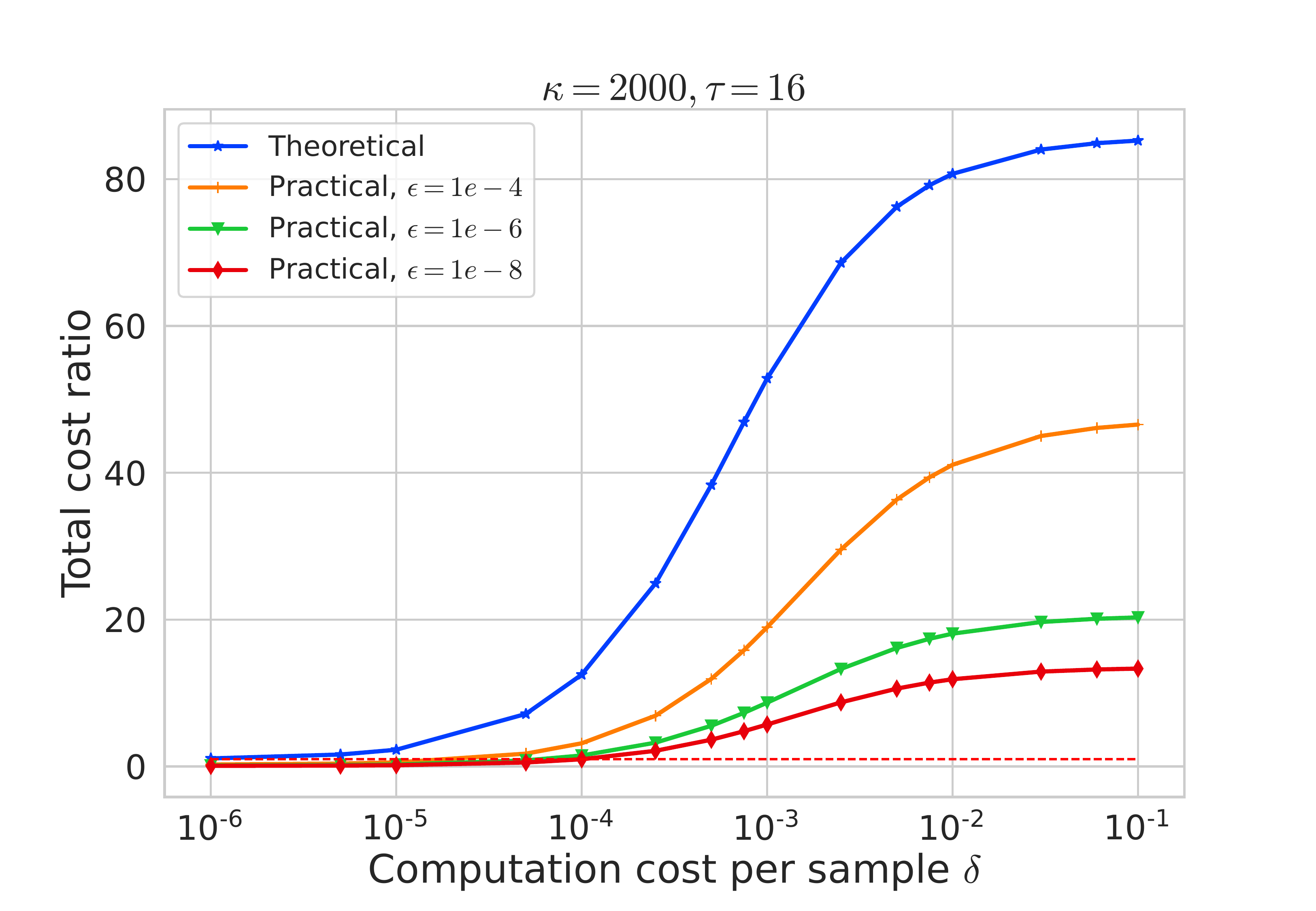}
		\caption{$\tau=16$.}
	\end{subfigure}
	\hfill
	\begin{subfigure}[b]{0.32\textwidth}
		\centering
		\includegraphics[trim=20 10 40 40, clip, width=\textwidth]{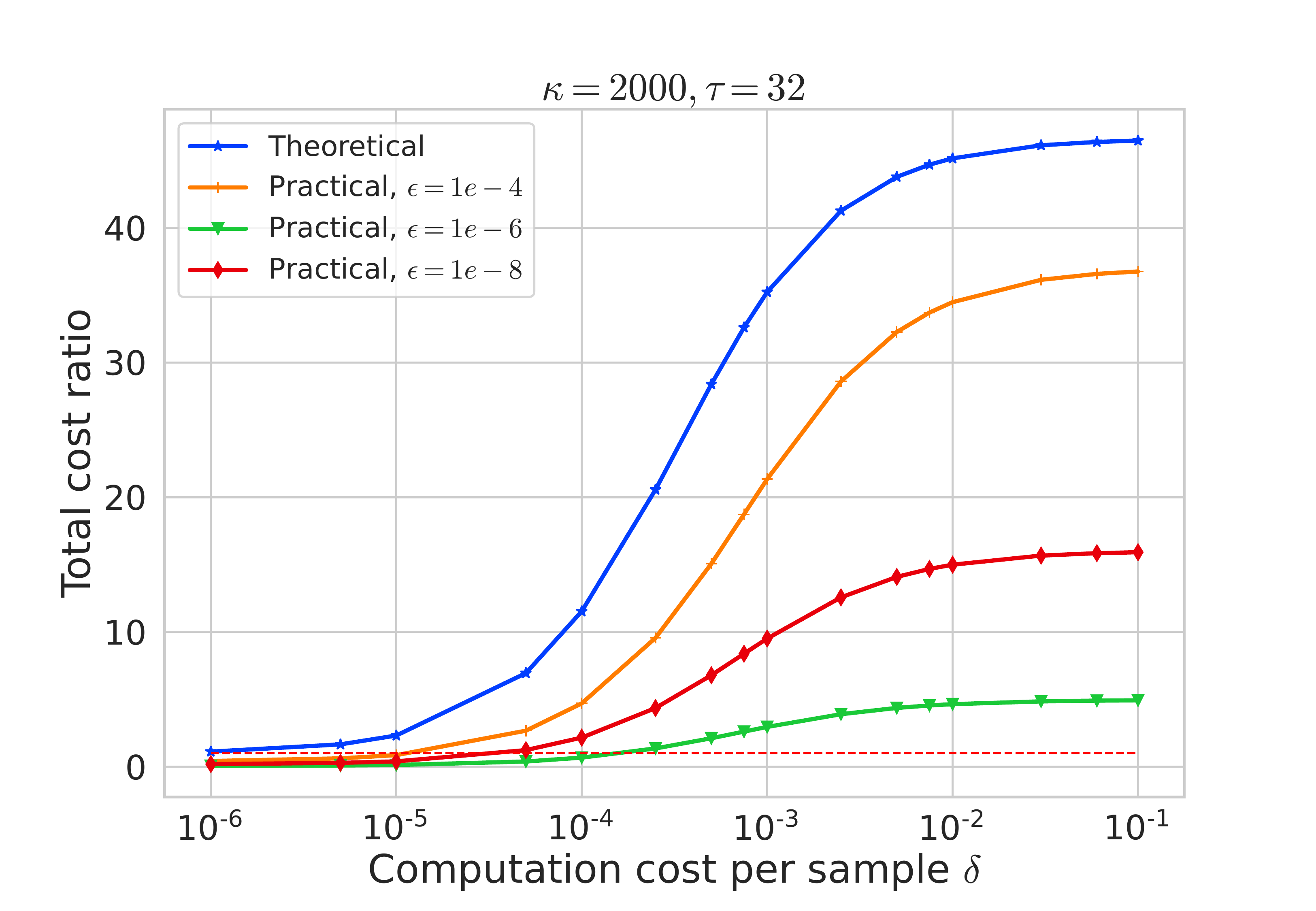}
		\caption{$\tau=32$.}
	\end{subfigure}
	\hfill
	\begin{subfigure}[b]{0.32\textwidth}
		\centering
		\includegraphics[trim=20 10 40 40, clip, width=\textwidth]{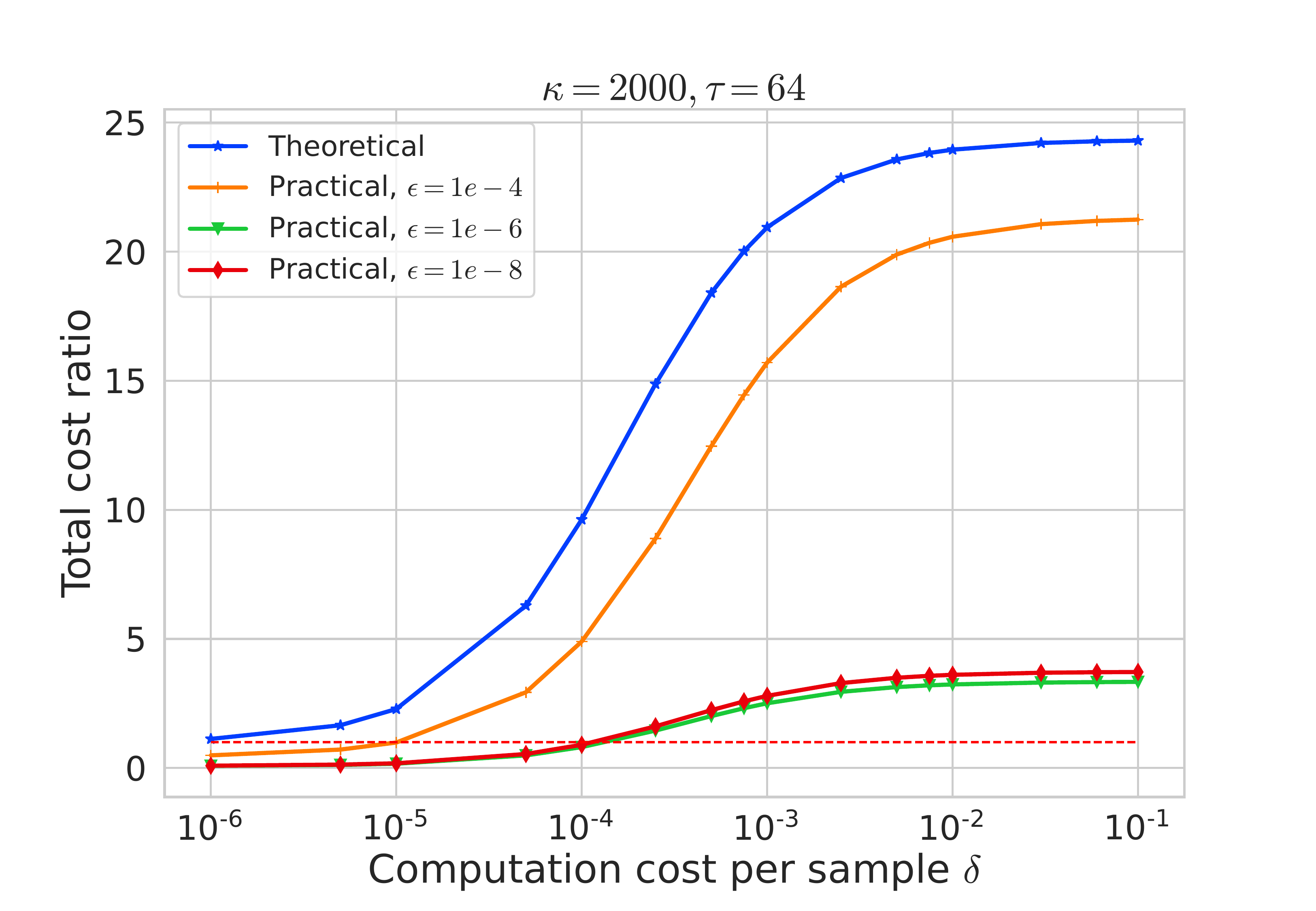}
		\caption{$\tau=64$.}
	\end{subfigure}
	\caption{Acceleration with 10 distributed workers on \dataset{a9a} dataset, $\kappa=2e3$.}
	\label{fig:056}
\end{figure} 

\begin{figure}[!htbp]
	\centering
	\begin{subfigure}[b]{0.32\textwidth}
		\centering
		\includegraphics[trim=20 10 40 40, clip, width=\textwidth]{img/0019_a9a_n10_bs16_cosize10_cc1_kappa_kappa1000.0_error_0.0001.txt.pdf}
		\caption{$\tau=16$.}
	\end{subfigure}
	\hfill
	\begin{subfigure}[b]{0.32\textwidth}
		\centering
		\includegraphics[trim=20 10 40 40, clip, width=\textwidth]{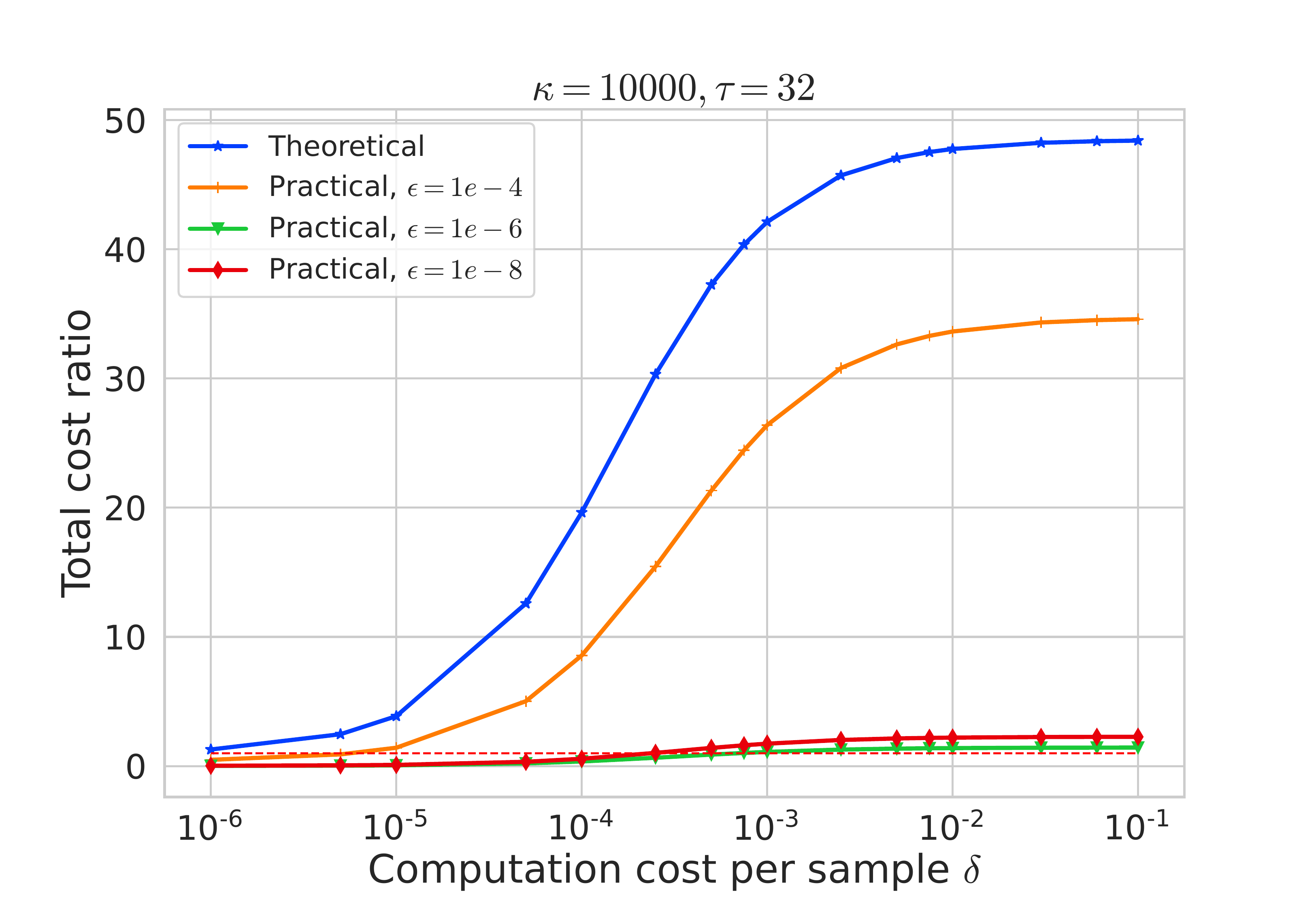}
		\caption{$\tau=32$.}
	\end{subfigure}
	\hfill
	\begin{subfigure}[b]{0.32\textwidth}
		\centering
		\includegraphics[trim=20 10 40 40, clip, width=\textwidth]{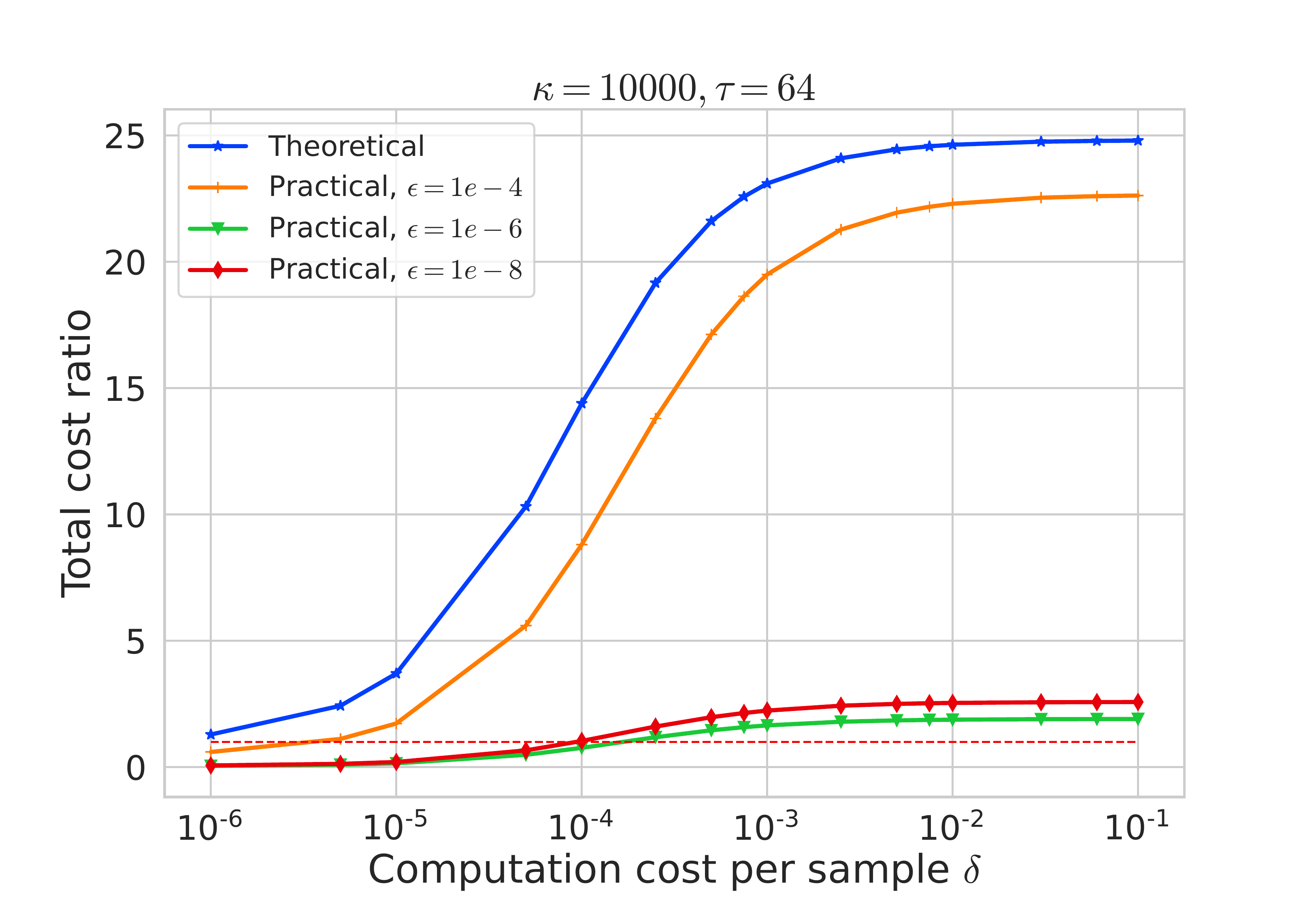}
		\caption{$\tau=64$.}
	\end{subfigure}
	\caption{Acceleration with 10 distributed workers on \dataset{a9a} dataset, $\kappa=1e4$.}
	\label{fig:057}
\end{figure} 

\begin{figure}[!htbp]
	\centering
	\begin{subfigure}[b]{0.32\textwidth}
		\centering
		\includegraphics[trim=20 10 40 40, clip, width=\textwidth]{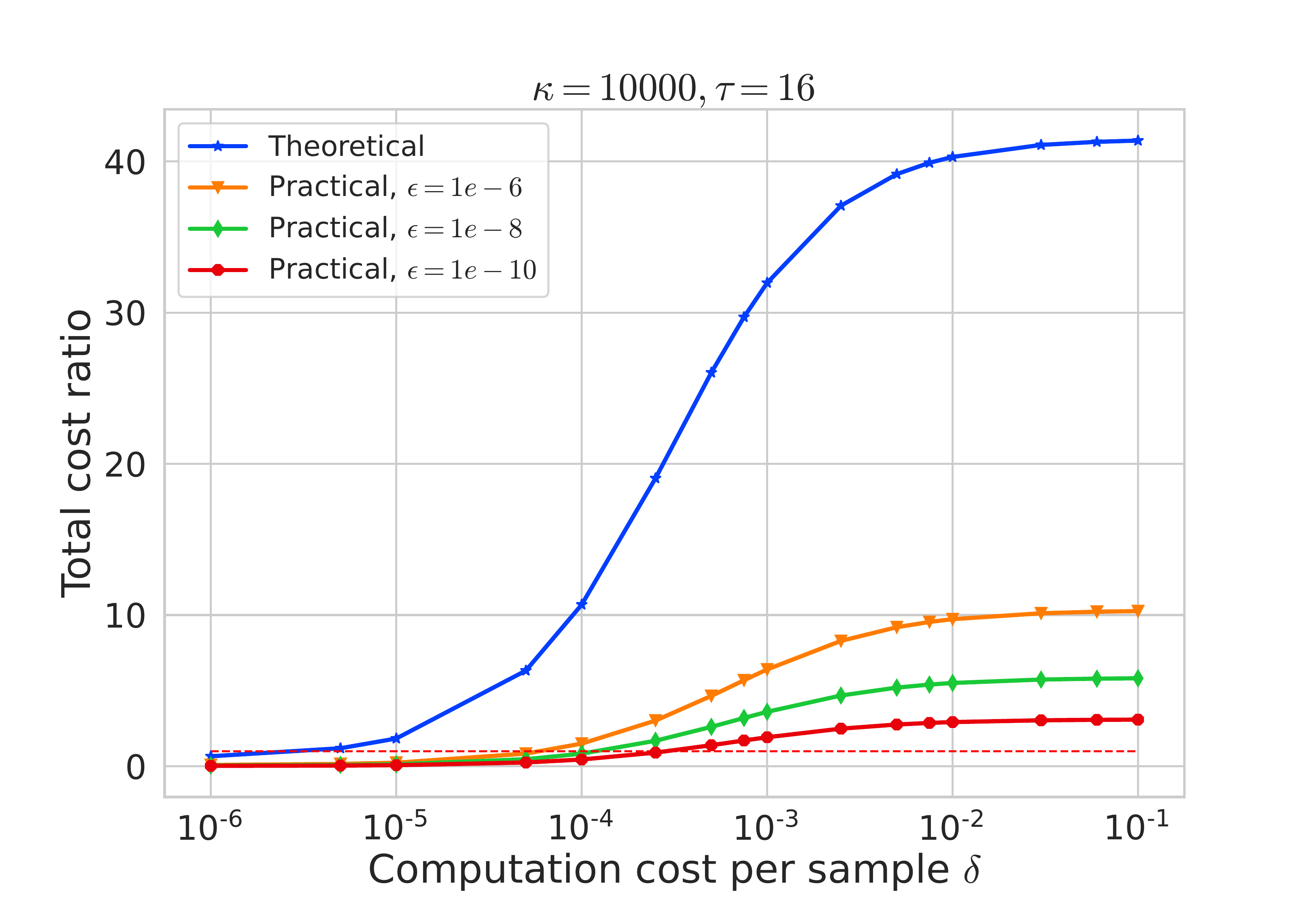}
		\caption{$\tau=16$.}
	\end{subfigure}
	\hfill
	\begin{subfigure}[b]{0.32\textwidth}
		\centering
		\includegraphics[trim=20 10 40 40, clip, width=\textwidth]{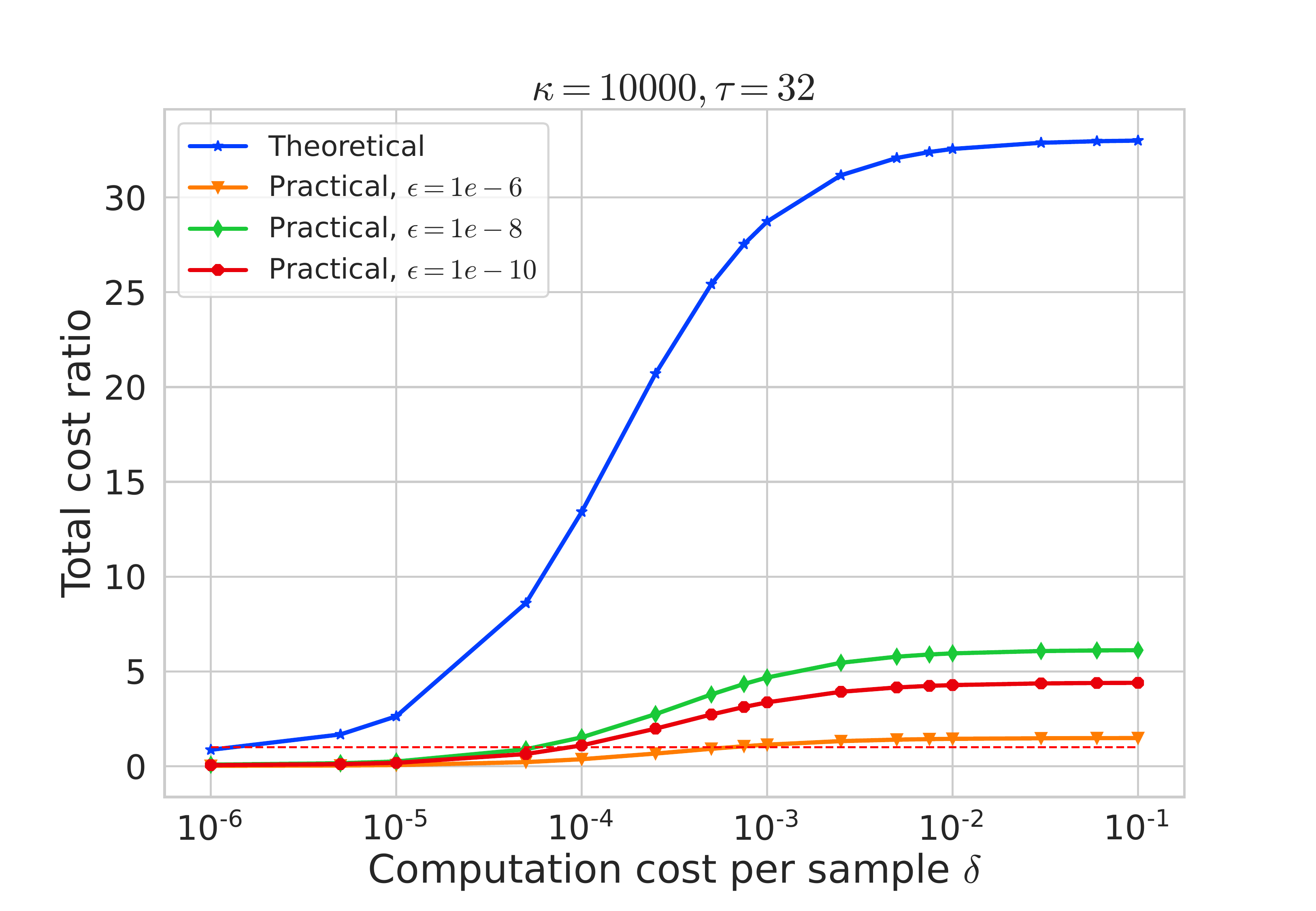}
		\caption{$\tau=32$.}
	\end{subfigure}
	\hfill
	\begin{subfigure}[b]{0.32\textwidth}
		\centering
		\includegraphics[trim=20 10 40 40, clip, width=\textwidth]{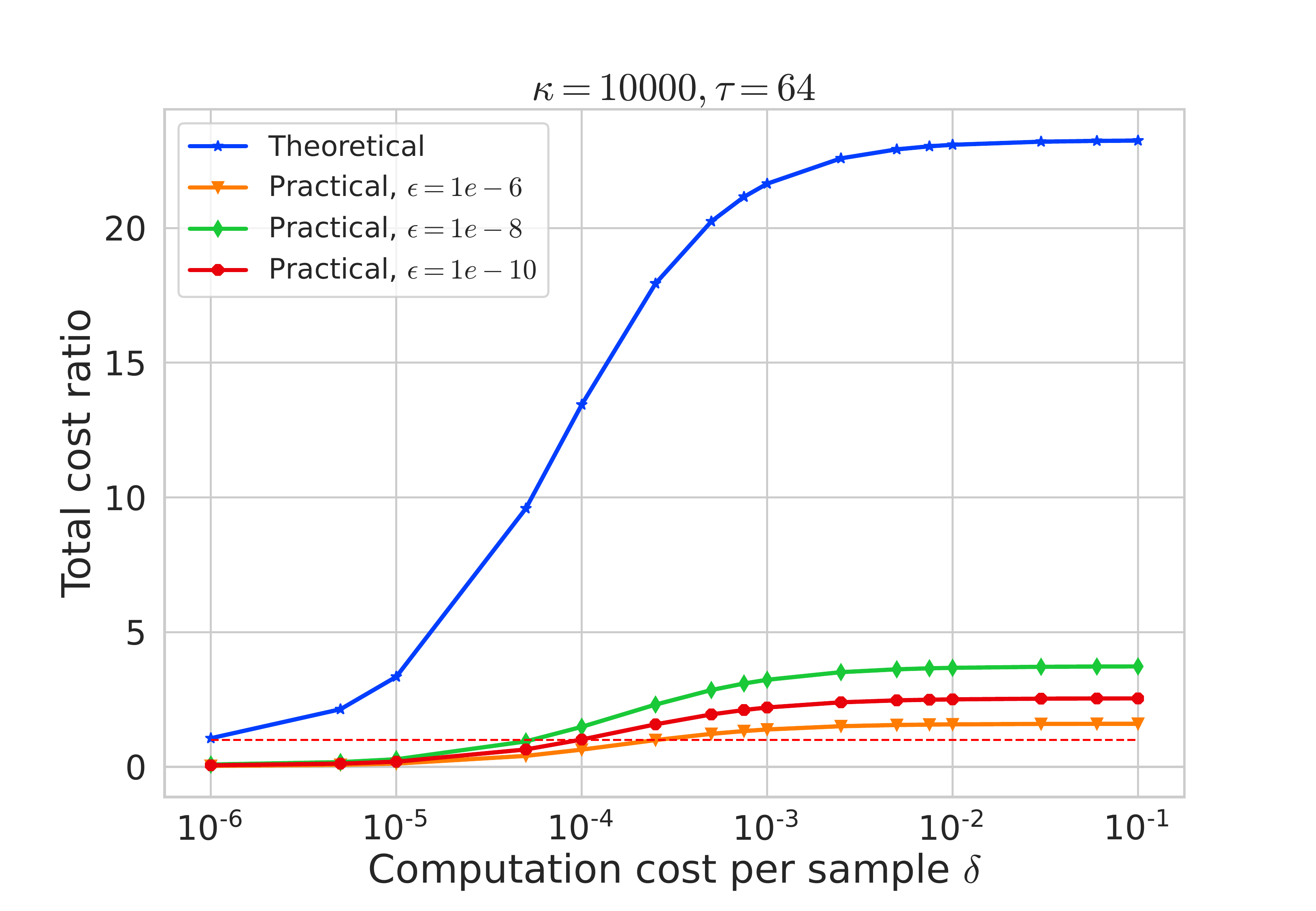}
		\caption{$\tau=64$.}
	\end{subfigure}
	\caption{Acceleration with 10 distributed workers on \dataset{w8a} dataset, $\kappa=1e4$.}
	\label{fig:058}
\end{figure} 

\subsection{Experiments with ProxSkip-HUB}
In this work we introduced a new FL architecture: regional hubs connecting the clients to the server; see Section~\ref{sec:tree}.  For conceptual simplicity, and in order to facilitate fair comparison with \algname{ProxSkip-LSVRG}, we assume that the number of hubs equals to  the number of clients, and that each client owns a single datapoint only. We compare \algname{ProxSkip-HUB} with \algname{ProxSkip-LSVRG} to check whether communication compression leads to any benefits in terms of  total costs. Theoretically, and similarly to our analysis in Section~\ref{sec:experiments}, the total cost for \algname{ProxSkip-LSVRG} is 
$$\text{Cost}(\text{\algname{ProxSkip-LSVRG}}) \eqdef T_{\text{comm.}}(\text{\algname{ProxSkip-LSVRG}})\\
+ \delta   \left(q m + (1-q)\tau+\tau\right)T_{\text{iter}}({\text{\algname{ProxSkip-LSVRG}}}).$$


Recall that we assume the communication cost from every worker/hub to the master is equal to 1, and the computation cost per sample is equal to $\delta$. Here we generalize to the multi-level structure. We assume that the communication cost from every client to hub is equal to $\delta'$. We choose the Rand-$k$ sparsification for \algname{ProxSkip-HUB}; this compressor selects $k$-entries of the gradient vector, uniformly at random from the full $d$-dimensional gradient. The total cost of \algname{ProxSkip-HUB} is
\begin{equation}
	\begin{aligned}
		\text{Cost}(\text{\algname{ProxSkip-HUB}}) &:= T_{\text{comm.}} (\text{\algname{ProxSkip-HUB}})\\
		&\quad + \delta' \left( q m+  \frac{k}{d} \left( (1-q) \tau+\tau\right)  \right) T_{\text{iter}} (\text{\algname{ProxSkip-HUB}}).\\
	\end{aligned}
\end{equation}

Our experimental results are summarized Figure~\ref{fig:060}; we use the values $\delta = \delta’ = 10^{-2}$. Clearly, and thanks to communication compression,   \algname{ProxSkip-HUB} has  benefit in terms of the total cost compared to \algname{ProxSkip-LSVRG}, and can reach up to three degrees of magnitude! 

\begin{figure}[!htbp]
	\centering
	\begin{subfigure}[b]{0.32\textwidth}
		\centering
		\includegraphics[trim=20 10 40 40, clip, width=\textwidth]{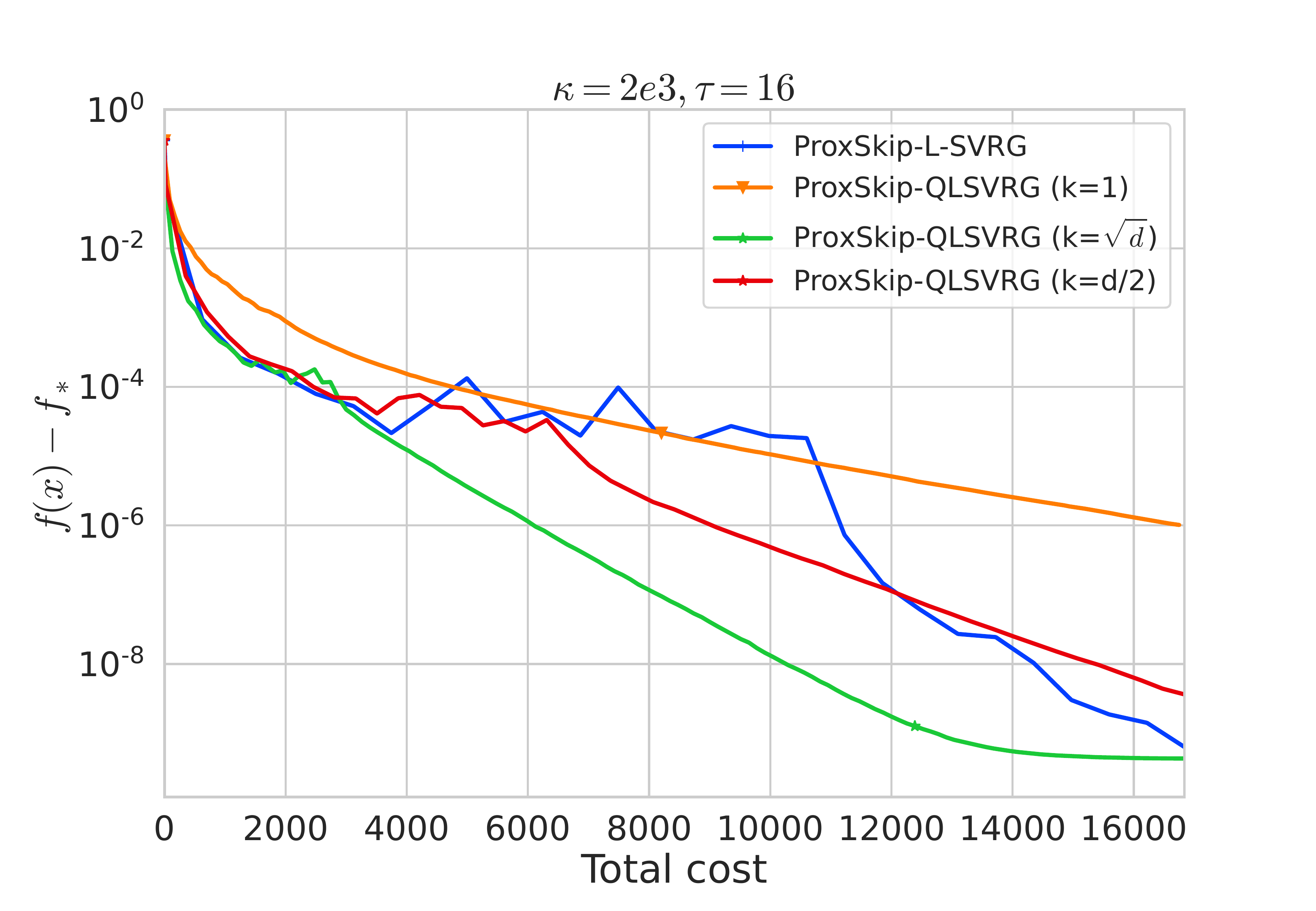}
		\caption{$\tau=16$.}
	\end{subfigure}
	\hfill
	\begin{subfigure}[b]{0.32\textwidth}
		\centering
		\includegraphics[trim=20 10 40 40, clip, width=\textwidth]{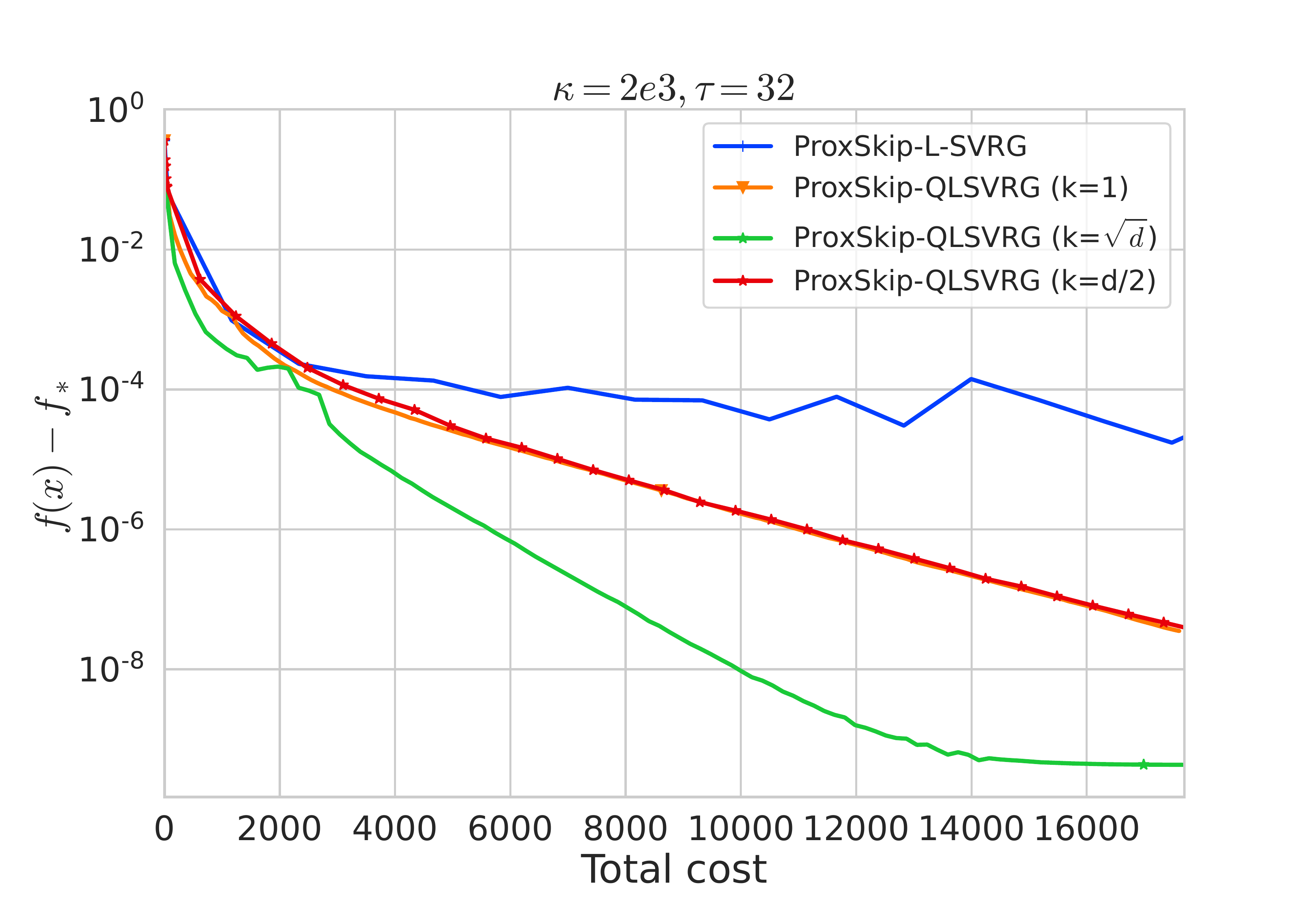}
		\caption{$\tau=32$.}
	\end{subfigure}
	\hfill
	\begin{subfigure}[b]{0.32\textwidth}
		\centering
		\includegraphics[trim=20 10 40 40, clip, width=\textwidth]{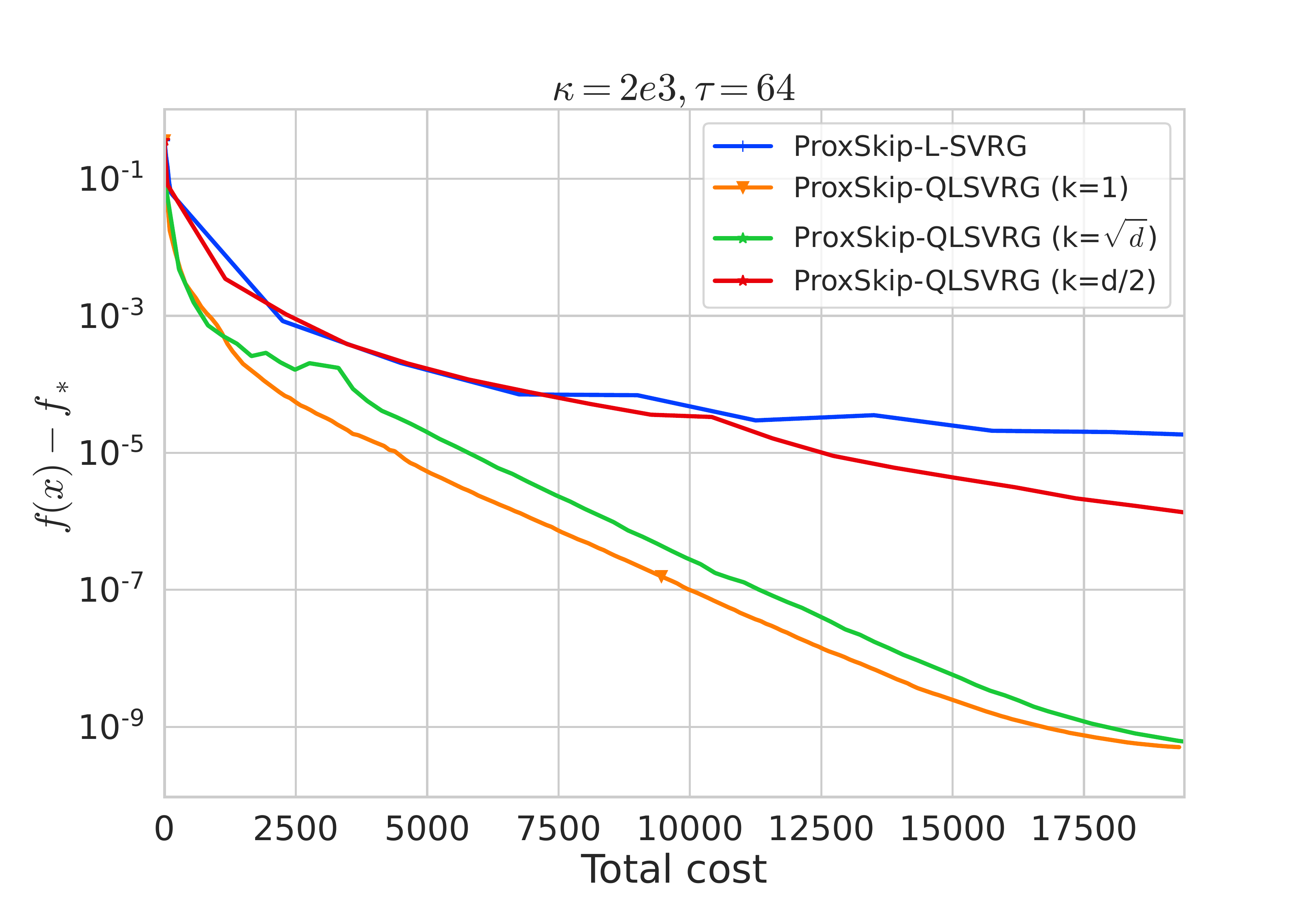}
		\caption{$\tau=64$.}
	\end{subfigure}
	\caption{Convergence results with different batch sizes and sparsification parameter  $k$ on \dataset{a9a}, $\kappa=2e3$.}
	\label{fig:060}
\end{figure}

\section*{Acknowledgements}

We would like to thank Eduard Gorbunov for useful discussions related to some aspects of the theory.

\bibliography{VRPS}

\clearpage

\appendix

\part*{Appendix}

\section{Basic Facts}

\subsection{Bregman divergence, $L$-smoothness and $\mu$-strong convexity}
The Bregman divergence of a differentiable function $f: \mathbb{R}^d \rightarrow \mathbb{R}$ is defined by
\begin{align}
	\label{eq:bregman_divergence}
	D_{f}(x, y):=f(x)-f(y)-\langle\nabla f(y), x-y\rangle.
\end{align}
It is easy to see that 
\begin{align}
	\label{eq:sum_bregman}
\langle\nabla f(x)-\nabla f(y), x-y\rangle=D_{f}(x, y)+D_{f}(y, x), \quad \forall x, y \in \mathbb{R}^{d}.
\end{align}
For an $L$-smooth and $\mu$-strongly convex function $f: \mathbb{R}^d \rightarrow \mathbb{R}$, we have
\begin{align}
	\label{eq:smooth-norms}
\frac{\mu}{2}\|x-y\|^{2} \leq D_{f}(x, y) \leq \frac{L}{2}\|x-y\|^{2}, \quad \forall x, y \in \mathbb{R}^{d}
\end{align}
and
\begin{align}
	\label{eq:smooth-grad}
	\frac{1}{2 L}\|\nabla f(x)-\nabla f(y)\|^{2} \leq D_{f}(x, y) \leq \frac{1}{2 \mu}\|\nabla f(x)-\nabla f(y)\|^{2}, \quad \forall x, y \in \mathbb{R}^{d}.
\end{align}

\subsection{Firm-nonexpansiveness of the proximity operator}
Given $\psi: \mathbb{R}^{d} \rightarrow \mathbb{R}$, we define $\psi^{*}(y):=\sup _{x \in \mathbb{R}^{d}}\{\langle x, y\rangle-\psi(x)\}$ to be its Fenchel conjugate. The proximity operator of $\psi^{*}$ satisfies for any $\tau>0$
\begin{align}
	\label{eq:fenhel}
	\text{ if } u=\operatorname{prox}_{\tau \psi^{*}}(y), \quad \text{ then } \quad u \in y-\tau \partial \psi^{*}(u).
\end{align}
If Assumption~\ref{ass:Reg} is satisfied, then firm nonexpansiveness of the proximity operator implies~\citep{ProxSkip}  that
\begin{align}
	\label{eq:firm-prox}
	\left\|\operatorname{prox}_{\frac{\gamma}{p} \psi}(x)-\operatorname{prox}_{\frac{\gamma}{p} \psi}(y) \right\|^{2}+ \left\| \left(x- \operatorname{prox}_{\frac{\gamma}{p} \psi}(x) \right) - \left(y-\operatorname{prox}_{\frac{\gamma}{p} \psi}(y)\right) \right\|^{2} \leq\|x-y\|^{2},
\end{align}
for all $x, y \in \mathbb{R}^{d}$ and any $\gamma, p>0$.

\subsection{Young's inequality} For any two vectors $ a, b \in \mathbb{R}^{d}$, we have 
\begin{equation}
	\label{youngs}
	\|a+b\|^{2} \leq 2\|a\|^{2}+2\|b\|^{2}.
\end{equation}

\subsection{Jensen’s inequality} For a convex function $h : \R^d \leftarrow \R$ and any vectors $x_1, \ldots , x_n \in \R^d$, we have 
\begin{align}
	h\left(\frac{1}{n} \sum_{i=1}^{n} x_{i}\right) \leq \frac{1}{n} \sum_{i=1}^{n} h\left(x_{i}\right).
\end{align}
Applying this to the squared norm, $h(x) = \|x\|^2,$ we get
\begin{align}
	\label{jensen}
	\left\|\frac{1}{n} \sum_{i=1}^{n} y_{i}\right\|^{2} \leq \frac{1}{n} \sum_{i=1}^{n}\left\|y_{i}\right\|^{2}.
\end{align}

\clearpage
\section{Analysis of ProxSkip-VR}
In this section we provide the proof of  Theorem~\ref{thm:main}.
\subsection{Main lemma of ProxSkip}
 We start from  Lemma~\ref{lem:main_lemma} initially introduced by~\citet{ProxSkip}; for completeness we provide the whole proof. Let us define two additional sequences:
 \begin{align}
 	\label{eq:seqs}
 \hat{w}_{t}=x_{t}-\gamma \hat{g}_{t}\left(x_{t}\right), \qquad  \hat{w}_{\star}=x_{\star}-\gamma \hat{g}_{t}\left(x_{\star}\right)  .
 \end{align}
\begin{lemma}
	\label{lem:main_lemma}
	If Assumption~\ref{ass:Reg} holds, $\gamma > 0$ and $0 < p \leq 1$, then the iterates of \algname{ProxSkip-VR} satisfy
	\begin{align}
		\label{eq:lemma1}
		\Exp{\|x_{t+1} - x_\star\|^2+\frac{\gamma^2}{p^2}\|h_{t+1} - h_{\star}\|^2} \leq\left\|\hat{w}_{t}-w_{\star}\right\|^{2}+\left(1-p^{2}\right) \frac{\gamma^{2}}{p^{2}}\left\|h_{t}-h_{\star}\right\|^{2}.
	\end{align}
\end{lemma}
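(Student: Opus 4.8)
The plan is to prove \eqref{eq:lemma1} by conditioning on the coin flip $\theta_t$, splitting into the ``prox'' branch $\{\theta_t=1\}$ and the ``skip'' branch $\{\theta_t=0\}$, and letting the firm nonexpansiveness estimate \eqref{eq:firm-prox} supply the only inequality, with everything else being an exact identity. First I would introduce the two prox arguments $u_t \eqdef \hat{x}_{t+1} - \frac{\gamma}{p}h_t$ and $u_\star \eqdef x_\star - \frac{\gamma}{p}h_\star$, so that on $\{\theta_t=1\}$ the update reads $x_{t+1} = \prox_{\frac{\gamma}{p}r}(u_t)$. The optimal pair $(x_\star,h_\star)$ serves as the corresponding fixed point: since $h_\star = \nabla f(x_\star)$ and first-order optimality for \eqref{eq:Composite} gives $-h_\star \in \partial r(x_\star)$ (which uses Assumption~\ref{ass:Reg}), the optimality characterization of the proximity operator yields $x_\star = \prox_{\frac{\gamma}{p}r}(u_\star)$.

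The crux is to notice that the shift update is engineered so that $\frac{\gamma}{p}(h_{t+1}-h_\star)$ is exactly the difference of the two prox residuals. On $\{\theta_t=1\}$, substituting $h_{t+1}=h_t+\frac{p}{\gamma}(x_{t+1}-\hat{x}_{t+1})$ gives $\frac{\gamma}{p}h_{t+1}=x_{t+1}-u_t=\prox_{\frac{\gamma}{p}r}(u_t)-u_t$, and likewise $\frac{\gamma}{p}h_\star=x_\star-u_\star=\prox_{\frac{\gamma}{p}r}(u_\star)-u_\star$. Consequently
$$\|x_{t+1}-x_\star\|^2+\frac{\gamma^2}{p^2}\|h_{t+1}-h_\star\|^2=\left\|\prox_{\frac{\gamma}{p}r}(u_t)-\prox_{\frac{\gamma}{p}r}(u_\star)\right\|^2+\left\|\left(u_t-\prox_{\frac{\gamma}{p}r}(u_t)\right)-\left(u_\star-\prox_{\frac{\gamma}{p}r}(u_\star)\right)\right\|^2,$$
and \eqref{eq:firm-prox} bounds the right-hand side by $\|u_t-u_\star\|^2$. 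On the complementary event $\{\theta_t=0\}$ we have $x_{t+1}=\hat{x}_{t+1}$ and $h_{t+1}=h_t$, contributing $\|\hat{x}_{t+1}-x_\star\|^2+\frac{\gamma^2}{p^2}\|h_t-h_\star\|^2$.

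Finally I would take expectation over $\theta_t$, weighting the branches by $p$ and $1-p$. Using $u_t-u_\star=(\hat{x}_{t+1}-x_\star)-\frac{\gamma}{p}(h_t-h_\star)$ and expanding, the cross term $-\frac{2\gamma}{p}\langle \hat{x}_{t+1}-x_\star,\,h_t-h_\star\rangle$ picks up the factor $p$ and becomes $-2\gamma\langle \hat{x}_{t+1}-x_\star,\,h_t-h_\star\rangle$, while the coefficient on $\|h_t-h_\star\|^2$ collapses to exactly $\frac{\gamma^2}{p^2}$. Recognizing from the definitions that $\hat{w}_t-w_\star=(\hat{x}_{t+1}-x_\star)-\gamma(h_t-h_\star)$ (so that $\|\hat{w}_t-w_\star\|^2=\|\hat{x}_{t+1}-x_\star\|^2-2\gamma\langle \hat{x}_{t+1}-x_\star,\,h_t-h_\star\rangle+\gamma^2\|h_t-h_\star\|^2$), a short completion of squares identifies the expectation with $\|\hat{w}_t-w_\star\|^2+(1-p^2)\frac{\gamma^2}{p^2}\|h_t-h_\star\|^2$, which is \eqref{eq:lemma1}. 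I expect the only genuine obstacle to be the bookkeeping in the second step: verifying that the $h$-update makes $\frac{\gamma}{p}(h_{t+1}-h_\star)$ coincide precisely with the prox-residual difference, so that \eqref{eq:firm-prox} applies with no slack; once that identity is secured, the rest is routine algebra.
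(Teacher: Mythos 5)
Your proposal is correct and follows essentially the same route as the paper's proof: conditioning on the coin flip $\theta_t$, using the fixed-point identity $x_\star = \prox_{\frac{\gamma}{p}r}\bigl(x_\star - \frac{\gamma}{p}h_\star\bigr)$, letting firm nonexpansiveness \eqref{eq:firm-prox} supply the single inequality, and finishing with the same expansion via $\hat{w}_t - w_\star = (\hat{x}_{t+1}-x_\star) - \gamma(h_t - h_\star)$. Your observation that the $h$-update makes $\frac{\gamma}{p}(h_{t+1}-h_\star)$ exactly the difference of prox residuals is the same computation the paper performs when it rewrites $\frac{\gamma}{p}h_t + \prox_{\frac{\gamma}{p}\psi}(x) - \hat{x}_{t+1} - \frac{\gamma}{p}h_\star$ as $\prox_{\frac{\gamma}{p}\psi}(x) - x + y - \prox_{\frac{\gamma}{p}\psi}(y)$, just stated more transparently.
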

\begin{proof}
	 In order to simplify, let us define two points:
	 \begin{align}
	 	\label{eq:two_points}
	 	x\eqdef\hat{x}_{t+1}-\frac{\gamma}{p} h_{t}, \quad y\eqdef x_{\star}-\frac{\gamma}{p} h_{\star}.
	 \end{align}
 \paragraph{STEP 1 (Optimality conditions).} Using the first-order optimality conditions for $f + \psi$ and using $h_\star \eqdef \nabla f (x_\star)$, we obtain the following fixed-point identity for $x_\star$:
 \begin{align}
 	\label{eq:opt}
 	x_{\star}=\operatorname{prox}_{\frac{\gamma}{p} \psi}\left(x_{\star}-\frac{\gamma}{p} h_{\star}\right)=	\operatorname{prox}_{\frac{\gamma}{p} \psi}(y) .
 \end{align}

\paragraph{STEP 2 (Recalling the steps of the method).}
Recall that the vectors $x_t$ and $h_t$ are in Algorithm~\ref{alg:ProxSkip-VR} updated as follows:
\begin{align}
 	\label{eq:x_upd}
	x_{t+1}=\left\{\begin{array}{lll}
		\operatorname{prox}_{\frac{\gamma}{p} \psi}(x) & \text { with probability } & p \\
		\hat{x}_{t+1} & \text { with probability } & 1-p
	\end{array}\right.
\end{align}
and 
\begin{align}
		 	\label{eq:h_upd}
	h_{t+1}=h_{t}+\frac{p}{\gamma}\left(x_{t+1}-\hat{x}_{t+1}\right) = \left\{\begin{array}{lll}
		h_{t}+\frac{p}{\gamma}\left(	\operatorname{prox}_{\frac{\gamma}{p} \psi}(x) -\hat{x}_{t+1}\right) & \text { with probability } p \\
		h_{t} & \text { with probability } 1-p
	\end{array}\right..
\end{align}
Let us consider the expected value $V_{t+1}\eqdef \Exp{\left\|x_{t+1}-x_{\star}\right\|^{2}+\frac{\gamma^{2}}{p^{2}}\left\|h_{t+1}-h_{\star}\right\|^{2}}$:
\begin{eqnarray}
	V_{t+1} & \stackrel{(\ref{eq:x_upd})+(\ref{eq:h_upd})}{=} &  p\left(\left\|	\operatorname{prox}_{\frac{\gamma}{p} \psi}(x)-x_{\star}\right\|^{2}+\frac{\gamma^{2}}{p^{2}}\left\|h_{t}+\frac{p}{\gamma}\left(	\operatorname{prox}_{\frac{\gamma}{p} \psi}(x)-\hat{x}_{t+1}\right)-h_{\star}\right\|^{2}\right) \notag \\
	&&\quad +(1-p)\left(\left\|\hat{x}_{t+1}-x_{\star}\right\|^{2}+\frac{\gamma^{2}}{p^{2}}\left\|h_{t}-h_{\star}\right\|^{2}\right)\notag \\
	&\stackrel{(\ref{eq:opt})}{=}& p\left( \left\|	\operatorname{prox}_{\frac{\gamma}{p} \psi}(x)-	\operatorname{prox}_{\frac{\gamma}{p} \psi}(y) \right\|^{2}+\left\|\frac{\gamma}{p} h_{t}+	\operatorname{prox}_{\frac{\gamma}{p} \psi}(x)-\hat{x}_{t+1}-\frac{\gamma}{p} h_{\star}\right\|^{2}\right)\notag \\
	&&\quad +(1-p)\left(\left\|\hat{x}_{t+1}-x_{\star}\right\|^{2}+\frac{\gamma^{2}}{p^{2}}\left\|h_{t}-h_{\star}\right\|^{2}\right)\notag \\
	&\stackrel{(\ref{eq:two_points})+(\ref{eq:opt})}{=} & p\left(\left\|	\operatorname{prox}_{\frac{\gamma}{p} \psi}(x)-	\operatorname{prox}_{\frac{\gamma}{p} \psi}(y)\right\|^{2}+\left\|	\operatorname{prox}_{\frac{\gamma}{p} \psi}(x)-x+y-	\operatorname{prox}_{\frac{\gamma}{p} \psi}(y)\right\|^{2}\right)\notag \\
	&&\quad +(1-p)\left(\left\|\hat{x}_{t+1}-x_{\star}\right\|^{2}+\frac{\gamma^{2}}{p^{2}}\left\|h_{t}-h_{\star}\right\|^{2}\right).\label{eq:9u0fd9h0fd9h}
\end{eqnarray}
\paragraph{STEP 4 (Applying firm nonexpansiveness).}Applying firm nonexpansiveness of the proximal operator~\eqref{eq:firm-prox}, this leads to the inequality
\begin{eqnarray*}
V_{t+1}& \stackrel{\eqref{eq:9u0fd9h0fd9h}+(\ref{eq:firm-prox})}{\leq} & p\|x-y\|^{2} +(1-p)\left(\left\|\hat{x}_{t+1}-x_{\star}\right\|^{2}+\frac{\gamma^{2}}{p^{2}}\left\|h_{t}-h_{\star}\right\|^{2}\right) \\ \notag
	& \stackrel{(\ref{eq:two_points})}{=}&  p\left\|\hat{x}_{t+1}-\frac{\gamma}{p} h_{t}-\left(x_{\star}-\frac{\gamma}{p} h_{\star}\right)\right\|^{2} +(1-p)\left(\left\|\hat{x}_{t+1}-x_{\star}\right\|^{2}+\frac{\gamma^{2}}{p^{2}}\left\|h_{t}-h_{\star}\right\|^{2}\right) .
\end{eqnarray*}
\paragraph{STEP 5 (Simple algebra).} Next, we expand the squared norm and collect the terms, obtaining
\begin{eqnarray}
V_{t+1}& \leq & p\left\|\hat{x}_{t+1}-x_{\star}\right\|^{2} +p \frac{\gamma^{2}}{p^{2}}\left\|h_{t}-h_{\star}\right\|^{2}-2 \gamma\left\langle\hat{x}_{t+1}-x_{\star}, h_{t}-h_{\star}\right\rangle \notag\\
&&\quad +(1-p)\left(\left\|\hat{x}_{t+1}-x_{\star}\right\|^{2}  +\frac{\gamma^{2}}{p^{2}}\left\|h_{t}-h_{\star}\right\|^{2}\right) \notag \\
	&=& \left\|\hat{x}_{t+1}-x_{\star}\right\|^{2}-2 \gamma\left\langle\hat{x}_{t+1}-x_{\star}, h_{t}-h_{\star}\right\rangle +\frac{\gamma^{2}}{p^{2}}\left\|h_{t}-h_{\star}\right\|^{2}.\label{eq:last1}	
\end{eqnarray}
Finally, note that by our definition of $\hat{w}_t$, we have the identity $\hat{x}_{t+1}=\hat{w}_{t}+\gamma h_{t}$. Therefore, the first two terms above can be rewritten as
\begin{align}
		\label{eq:last2}
\notag	\left\|\hat{x}_{t+1}-x_{\star}\right\|^{2}-2 \gamma\left\langle\hat{x}_{t+1}-x_{\star}, h_{t}-h_{\star}\right\rangle&=\left\|\hat{w}_{t}-w_{\star}+\gamma\left(h_{t}-h_{\star}\right)\right\|^{2}\\
\notag	&\quad -2 \gamma\left\langle \hat{w}_{t}-w_{\star}+\gamma\left(h_{t}-h_{\star}\right), h_{t}-h_{\star}\right\rangle \\
\notag	&=\left\|\hat{w}_{t}-w_{\star}\right\|^{2}+2 \gamma\left\langle \hat{w}_{t}-w_{\star}, h_{t}-h_{\star}\right\rangle\\
\notag	&\quad +\gamma^{2}\left\|h_{t}-h_{\star}\right\|^{2} 
	-2 \gamma\left\langle \hat{w}_{t}-w_{\star}, h_{t}-h_{\star}\right\rangle\\
\notag	&\quad -2 \gamma^{2}\left\|h_{t}-h_{\star}\right\|^{2} \\
	&=\left\|\hat{w}_{t}-w_{\star}\right\|^{2}-\gamma^{2}\left\|h_{t}-h_{\star}\right\|^{2}.
\end{align}
Finally, plugging \eqref{eq:last2} into \eqref{eq:last1}, we get:
	\begin{align*}
	V_{t+1} \leq\left\|\hat{w}_{t}-w_{\star}\right\|^{2}+\left(1-p^{2}\right) \frac{\gamma^{2}}{p^{2}}\left\|h_{t}-h_{\star}\right\|^{2}.
\end{align*}
\end{proof}
\subsection{Main lemma }
This lemma allows us to obtain a useful recursion for variance-reduced stochastic estimators used in our \algname{ProxSkip-VR} algorithm. 
\begin{lemma}
	Let Assumptions~\ref{ass:mu-strongly-convex} and \ref{sigma_t} hold. Then the iterates of \algname{ProxSkip-VR} satisfy 
	\begin{align*}
\notag	\Exp{\|\hat{w}_t - w_\star\|^2}  \leq  (1-\gamma\mu)\|x_t - x_\star\|^2 - 2\gamma D_f(x_t,x_\star)\left(1-\gamma A\right) + \gamma^2B\sigma_{t} + \gamma^2C.
	\end{align*}
\end{lemma}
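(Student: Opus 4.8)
The plan is to expand the squared distance directly and then take the conditional expectation, using unbiasedness to eliminate the cross term. Recall from \eqref{eq:seqs} that $\hat{w}_t = x_t - \gamma g_t$ (consistent with the identity $\hat{x}_{t+1} = \hat{w}_t + \gamma h_t$ noted in the previous lemma), while $w_\star = x_\star - \gamma h_\star$ is the deterministic counterpart with $h_\star = \nabla f(x_\star)$. Hence $\hat{w}_t - w_\star = (x_t - x_\star) - \gamma(g_t - h_\star)$, and expanding gives
\begin{equation*}
\norm{\hat{w}_t - w_\star}^2 = \norm{x_t - x_\star}^2 - 2\gamma\langle x_t - x_\star, g_t - h_\star\rangle + \gamma^2 \norm{g_t - h_\star}^2.
\end{equation*}

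First I would take $\Exp{\cdot \mid x_t, y_t}$ of both sides. The middle term is linear in $g_t$, so by unbiasedness \eqref{eq:unbiased} I may replace $g_t$ by $\nabla f(x_t)$, turning it into $-2\gamma\langle x_t - x_\star, \nabla f(x_t) - \nabla f(x_\star)\rangle$. Applying the Bregman identity \eqref{eq:sum_bregman} rewrites this as $-2\gamma(D_f(x_t,x_\star) + D_f(x_\star,x_t))$, and then $\mu$-convexity (Assumption~\ref{ass:mu-strongly-convex}), in the form $D_f(x_\star,x_t) \geq \tfrac{\mu}{2}\norm{x_t - x_\star}^2$, yields $-2\gamma D_f(x_\star,x_t) \leq -\gamma\mu\norm{x_t - x_\star}^2$. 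This single step produces both the contraction factor $(1-\gamma\mu)$ in front of $\norm{x_t-x_\star}^2$ and one copy of the $-2\gamma D_f(x_t,x_\star)$ term.

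For the quadratic term I would observe that $g_t - h_\star = g_t - \nabla f(x_\star)$ is exactly the quantity controlled by the first inequality \eqref{eq:sigma-1} of Assumption~\ref{sigma_t}, so that $\Exp{\norm{g_t - h_\star}^2 \mid x_t, y_t} \leq 2A\, D_f(x_t,x_\star) + B\sigma_t + C$. Multiplying by $\gamma^2$ and collecting all contributions yields
\begin{equation*}
\Exp{\norm{\hat{w}_t - w_\star}^2} \leq (1-\gamma\mu)\norm{x_t - x_\star}^2 - 2\gamma D_f(x_t,x_\star)(1-\gamma A) + \gamma^2 B\sigma_t + \gamma^2 C,
\end{equation*}
where the two $D_f(x_t,x_\star)$ contributions (the $-2\gamma$ one from the cross term and the $+2\gamma^2 A$ one from the variance bound) merge into the factor $(1-\gamma A)$.

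I expect no genuine obstacle: the estimate is a one-shot expansion in which every term is matched to an assumption almost mechanically. The only points requiring care are bookkeeping — correctly identifying $\hat{w}_t$ and $w_\star$ so that the cross term is paired against $g_t - \nabla f(x_\star)$, and recognizing $\norm{g_t - h_\star}^2$ as the left-hand side of \eqref{eq:sigma-1} (distance to $\nabla f(x_\star)$) rather than the variance of $g_t$ around $\nabla f(x_t)$. Everything else is routine algebra together with a single appeal to strong convexity.
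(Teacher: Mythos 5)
Your proposal is correct and follows essentially the same route as the paper's own proof: expand $\|\hat{w}_t - w_\star\|^2$ with $w_\star = x_\star - \gamma \nabla f(x_\star)$, use unbiasedness \eqref{eq:unbiased} on the cross term, apply the Bregman identity \eqref{eq:sum_bregman} together with $\mu$-convexity to produce the $(1-\gamma\mu)$ contraction and one $-2\gamma D_f(x_t,x_\star)$ term, and bound the quadratic term via \eqref{eq:sigma-1} so that the two $D_f$ contributions merge into the factor $(1-\gamma A)$. No gaps; the bookkeeping points you flag (pairing against $\nabla f(x_\star)$ rather than $\nabla f(x_t)$) are exactly what the paper's proof does.
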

\begin{proof}
	We start from the definitions of the auxiliary sequence $\hat{w}_t$ (see \eqref{eq:seqs}):
	\begin{eqnarray}
		\label{32}
\notag		\|\hat{w}_t - w_\star\|^2 &\overset{\eqref{eq:seqs}}{=}& \| x_t - \gamma\hat{g}_t - (x_\star - \gamma\nabla f(x_\star) ) \|^2\\
	\notag	&=&\|  (x_t - x_\star ) - \gamma(\hat{g}_t - \nabla f(x_\star))  \|^2\\
			& =& \|x_t - x_\star\|^2 - 2\gamma\left\langle x_t - x_\star, \hat{g}_t - \nabla f(x_\star) \right\rangle + \gamma^2 \| \hat{g}_t - \nabla f(x_\star) \|^2.
	\end{eqnarray}
Taking expectation in~\eqref{32} and using unbiasedness of $\hat{g}_t$ (see \eqref{eq:unbiased} in Assumption~\ref{sigma_t}), we get 
\begin{align}
	\label{33}
	\Exp{\|\hat{w}_t - w_\star\|^2}  \overset{\eqref{32}+\eqref{eq:unbiased}}{=} \|x_t - x_\star\|^2 - 2\gamma\left\langle x_t - x_\star, \nabla f(x_t) - \nabla f(x_\star) \right\rangle + \gamma^2 \Exp{\| \hat{g}_t - \nabla f(x_\star) \|^2}.
\end{align}
Let us now consider the inner product in~\eqref{33}. Using~\eqref{eq:smooth-norms} and~\eqref{eq:sum_bregman}, we obtain
\begin{align}
		\label{34}
	\Exp{\|\hat{w}_t - w_\star\|^2}  \leq (1-\gamma\mu)\|x_t - x_\star\|^2 - 2\gamma D_f(x_t,x_\star) + \gamma^2 \Exp{\| \hat{g}_t - \nabla f(x_\star) \|^2}.
\end{align}
To bound the last term in \eqref{34}, we can apply Assumption~\ref{sigma_t}:
\begin{align}
	\label{35}
	\Exp{\| \hat{g}_t - \nabla f(x_\star) \|^2} \leq 2AD_f(x_t,x_\star)+B\sigma_{t} + C.
\end{align}
Plugging~\eqref{35} into~\eqref{34} gives us
\begin{align}
	\label{eq:le2}
\notag	\Exp{\|\hat{w}_t - w_\star\|^2}  &\leq (1-\gamma\mu)\|x_t - x_\star\|^2 - 2\gamma D_f(x_t,x_\star) + \gamma^2 \left(2AD_f(x_t,x_\star)+B\sigma_{t} + C\right)\\
	& \leq  (1-\gamma\mu)\|x_t - x_\star\|^2 - 2\gamma D_f(x_t,x_\star)\left(1-\gamma A\right) + \gamma^2B\sigma_{t} + \gamma^2C,
\end{align}
which is what we set out to prove.
\end{proof}

\subsection{Proof of Theorem~\ref{thm:main}}


\begin{proof}
	Using definition of the Lyapunov function $\Psi_t$, and the tower property of conditional expectation, we obtain
	\begin{eqnarray}
\notag		\Exp{\Psi_{t+1}} &=& 	\Exp{\|x_{t+1} - x_{\star}\|^2+\frac{\gamma^2}{p^2}\|h_{t+1} - h_{\star}\|^2+\MM\gamma^2\sigma_{t+1}} \notag \\
\notag		 &\stackrel{\eqref{eq:lemma1}}{\leq} & \Exp{\|\hat{w}_t - w_\star \|^2} + (1-p^2)\frac{\gamma^2}{p^2}\|h_t - h_\star\|^2 \notag  \\
	&& \quad +\MM \gamma^2 \left( 2\tilde{A}D_f(x_t,x_\star) + \tilde{B}\sigma_{t} + \tilde{C} \right) \notag \\
		&\stackrel{\eqref{eq:le2}}{\leq}	&	(1-\gamma\mu)\|x_t - x_\star\|^2 - 2\gamma D_f(x_t,x_\star)\left(1-\gamma A\right) + \gamma^2B\sigma_{t} + \gamma^2C	\notag  \\
			&& \quad + (1-p^2)\frac{\gamma^2}{p^2}\|h_t - h_\star\|^2 +\MM \gamma^2 \left( 2\tilde{A}D_f(x_t,x_\star) + \tilde{B}\sigma_{t} + \tilde{C} \right) \notag \\
		&\leq& 	(1-\gamma\mu)\|x_t - x_\star\|^2 - 2\gamma D_f(x_t,x_\star)\left(1-\gamma (A+\MM\tilde{A})\right) \notag \\
			&& \quad + \gamma^2\MM \sigma_{t}\left(\frac{B+\MM \tilde{B}}{\MM}\right) + \gamma^2(C+\MM\tilde{C})+(1-p^2)\frac{\gamma^2}{p^2}\|h_t - h_\star\|^2 .
	\end{eqnarray}

Using the stepsize bound $\gamma \leq \frac{1}{A+\MM\tilde{A}}$, this leads to 
	\begin{align}
	\notag		\Exp{\Psi_{t+1}} 	&\leq 	(1-\gamma\mu)\|x_t - x_\star\|^2  + \gamma^2\MM \sigma_{t}\left(\frac{B+\MM \tilde{B}}{\MM}\right) + \gamma^2(C+\MM\tilde{C})+(1-p^2)\frac{\gamma^2}{p^2}\|h_t - h_\star\|^2.
\end{align}

Let us denote $\beta \eqdef \frac{B+\MM \tilde{B}}{\MM}$. In order to obtain a contraction, we need to have $\beta< 1$, which is satisfied when  $ \MM > \frac{B}{1-\tilde{B}}$, and we get
	\begin{eqnarray}
		\Exp{\Psi_{t+1}} 	&\leq & 	(1-\gamma\mu)\|x_t - x_\star\|^2  + \gamma^2\MM \sigma_{t}\beta 			+ \gamma^2(C+\MM\tilde{C})+(1-p^2)\frac{\gamma^2}{p^2}\|h_t - h_\star\|^2 \notag\\
	&\leq & \max\left( 1-p^2,\beta,1-\gamma\mu \right)\Psi_t +  \gamma^2(C+\MM\tilde{C}) \label{eq:final} .
\end{eqnarray}

Finally, using the tower property of expectation and unrolling recursion~\eqref{eq:final}, we get 
	\begin{equation*}
	\Exp{ \Psi_{T} } \leq \max \left\{(1-\gamma \mu)^{T},\beta^{T},(1-p^2)^T\right\} \Psi_{0}+\frac{\left(C+\MM \tilde{C}\right) \gamma^{2}}{\min \left\{\gamma \mu,p^2, 1 - \beta \right\}}.
\end{equation*}

\end{proof}

\clearpage
\section{Examples of Methods Without Variance Reduction}
\subsection{Proof of Theorem~\ref{thm:proxskip} (GD estimator)}
\label{sec:GD_est}


\begin{proof}
	Let us show that \algname{GD} estimator $(\hat{g}_t = \nabla f(x_t))$ satisfies Assumption~\ref{sigma_t}
	\begin{align}
	\notag	\Exp{\|\hat{g}_t - \nabla f(x_\star)\|^2} = \|\nabla f(x_t)- \nabla f(x_\star)\|^2 \stackrel{\eqref{eq:smooth-norms}}{\leq} 2 L D_f(x_t,x_\star).
	\end{align}
This means that Assumption~\ref{sigma_t} is satisfied with the following constant:
\begin{align*}
	A = L, \quad B = 0, \quad C = 0, \quad \tilde{A} = 0, \quad \tilde{B} = 0,  \quad \tilde{C} = 0, \quad \sigma_t \equiv 0.
\end{align*}
Applying Theorem~\ref{thm:main} leads to final recursion:
 \begin{equation}
 	\label{eq:req_gd}
 	\Exp{\Psi_{T}} \leq \max \left\{(1-\gamma \mu)^{T},(1-p^2)^T\right\} \Psi_{0},
 \end{equation}
By inspecting \eqref{eq:req_gd} it is easy to see that
 \begin{equation}
T \geq \max \left\{\frac{1}{\gamma \mu}, \frac{1}{p^{2}}\right\} \log \frac{1}{\varepsilon} \qquad \Longrightarrow \qquad\Exp{\Psi_{T}} \leq \varepsilon \Psi_{0}.
 \end{equation}
Then the communication complexity is equal to
\begin{equation}
	p T \geq \max \left\{\frac{p}{\gamma \mu}, \frac{1}{p}\right\} \log \frac{1}{\varepsilon} .
\end{equation}
Setting $\gamma = \frac{1}{L}$ and solving $\frac{p L}{\mu}=\frac{1}{p}$ gives the optimal probability
\begin{equation}
	p=\sqrt{\frac{\mu}{L}}=\frac{1}{\sqrt{\kappa}}
\end{equation}
Finally, the iteration complexity and communication complexity have the following form:
\begin{align}
	T &\geq \max \left\{\frac{1}{\gamma \mu}, \frac{1}{p^{2}}\right\} \log \frac{1}{\varepsilon} = \kappa \log \frac{1}{\varepsilon},\\
	p T &\geq \max \left\{\frac{p}{\gamma \mu}, \frac{1}{p}\right\} \log \frac{1}{\varepsilon} = \sqrt{\kappa} \log \frac{1}{\varepsilon}.
\end{align}

\end{proof}
This recovers the result obtained by~\citet{ProxSkip}.

\subsection{Proof of Theorem~\ref{thm:sproxskip} (SGD estimator)}

 \begin{proof}
 		Let us show that the \algname{SGD} estimator $\hat{g}_t = g(x_t,\xi_t)$ satisfying Assumption~\ref{Expected_smoothness} also satisfies Assumption~\ref{sigma_t}. Using Young's inequality we get 
 		\begin{eqnarray}
 \notag			\Exp{\|\hat{g}_t - \nabla f(x_\star)\|^2} &=& \Exp{\|g(x_t,\xi_t)- \nabla f(x_\star)\|^2} \\
 	 \notag			& =&  \Exp{\|g(x_t,\xi_t)- g(x_\star,\xi_t) + g(x_\star,\xi_t) -\nabla f(x_\star)\|^2}\\
 		 \notag		&\stackrel{\eqref{youngs}}{\leq}& 2\Exp{\|g(x_t,\xi_t)- g(x_\star,\xi_t)\|^2} + 2\Exp{g(x_\star,\xi_t) -\nabla f(x_\star)\|^2}\\
 			& \stackrel{\eqref{Expected_smoothness}}{\leq}& 4 A^{\prime\prime} D_f(x_t,x_\star) + 2 {\rm Var}(g(x_{\star},\xi)) .
 		\end{eqnarray}
 	This means that Assumption~\ref{sigma_t} is satisfied with the following constants:
 	 \begin{align*}
 		A = 2A^{\prime\prime}, \quad B = 0, \quad C = 2{\rm Var}(g(x_{\star},\xi)), \quad \tilde{A} = 0, \quad \tilde{B} = 0,  \quad \tilde{C} = 0, \quad \sigma_t \equiv 0.
 	\end{align*}
 Applying Theorem~\ref{thm:main} leads to the final bound:
  \begin{equation}
  	\label{sgd_final}
 	\Exp{\Psi_{T}} \leq \max \left\{(1-\gamma \mu)^{T},(1-p^2)^T\right\} \Psi_{0} + \gamma^2 \frac{2{\rm Var}(g(x_{\star},\xi))}{\min\left\lbrace \gamma\mu,p^2 \right\rbrace}.
 \end{equation}
In order to minimize the number of prox evaluations, whatever the choice of $\gamma$ will be, we choose the smallest probability $p$ which does not lead to any degradation of the rate $\min\{\gamma\mu, p^2\}.$ That is, we choose $p = \sqrt{\gamma \mu}$. The first term on the right-hand side of~\eqref{sgd_final} can be bounded as follows:
\begin{equation*}
T \geq \frac{1}{\gamma \mu} \log \left(\frac{2 \Psi_{0}}{\varepsilon}\right) \quad \Longrightarrow \quad(1-\gamma\mu)^{T} \Psi_{0} \leq \frac{\varepsilon}{2}.
\end{equation*}
The second term on the right-hand side of~\eqref{sgd_final} can be bounded as follows:
\begin{equation*}
	\gamma \leq \frac{\varepsilon \mu}{2 C} \quad \Longrightarrow \quad \frac{\gamma C}{\mu} \leq \frac{\varepsilon}{2}.
\end{equation*}
We choose the largest stepsize consistent with bounds $	\gamma \leq \frac{\varepsilon \mu}{2 C}$ and $\gamma\leq \frac{1}{A}$:
\begin{align*}
	\gamma=\min \left\{\frac{1}{A}, \frac{\varepsilon \mu}{2 C}\right\}.
\end{align*}
Using this stepsizem we get the following iteration and (expected) communication complexities: 
\begin{align*}
	T \geq \max \left\{\frac{A}{\mu}, \frac{2 C}{\varepsilon \mu^{2}}\right\} \log \left(\frac{2 \Psi_{0}}{\varepsilon}\right),\qquad  	pT \geq \max \left\{\sqrt{\frac{A}{\mu}}, \sqrt{\frac{2 C}{\varepsilon \mu^{2}}}\right\} \log \left(\frac{2 \Psi_{0}}{\varepsilon}\right).
\end{align*}
This recovers the result obtained by~\citet{ProxSkip}.
 \end{proof}

 \clearpage
\section{Analysis of ProxSkip-HUB}

In this section we provide analysis of the new algorithm \algname{ProxSkip-HUB}, which works for the new FL formulation described in Section~\ref{sec:tree}. The pseudocode is presented in Algorithm~\ref{alg:ProxSkip-HUB}.

\subsection{Lemma for minibatch sampling}
Fix a minibatch size $\tau \in \{1,2,\ldots,n\}$ and let $\cS_t$ be a random subset of
$\{1,2,\ldots,n\}$ of size $\tau$, chosen uniformly at random. Define the gradient estimator via 
\begin{equation}
	\label{nice_est}
	g(x) \eqdef \frac{1}{\tau} \sum_{j \in \cS_t} \nabla \widetilde{\phi}_{j}(x)
\end{equation}
\begin{lemma}
	\label{tau-nice}
	The gradient estimator $g(x)$ defined in~\eqref{nice_est} is unbiased. If we further assume that $n \geq 2$, $\widetilde{\phi}_j$ is convex and $L_j$-smooth for all $j$, and $f$ is $L$-smooth, then
	\begin{align*}
		\Exp{\|g(x_t)-g(x_\star)\|^{2}} \leq 2 L(\tau)D_{f}(x_t, x_\star),
	\end{align*}
	where 
	\begin{align*}
		L(\tau)\eqdef \frac{n-\tau}{\tau(n-1)} \max _{j} L_{j}+\frac{n(\tau-1)}{\tau(n-1)} L.
	\end{align*}
\end{lemma}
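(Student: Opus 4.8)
The plan is to treat this as a standard expected-smoothness computation for $\tau$-nice sampling, handling the unbiasedness claim and the variance bound separately. For unbiasedness, I would observe that under uniform sampling of subsets of size $\tau$, each fixed index $j$ satisfies $j\in\cS_t$ with probability $\tau/n$. Writing the estimator with indicator variables as $g(x)=\frac{1}{\tau}\sum_{j=1}^{n}\mathbf{1}_{j\in\cS_t}\,\nabla\widetilde{\phi}_j(x)$ and taking expectations immediately yields $\Exp{g(x)}=\frac{1}{n}\sum_{j=1}^{n}\nabla\widetilde{\phi}_j(x)=\nabla f(x)$, which is the unbiasedness claim.

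For the variance bound I would set $v_j\eqdef\nabla\widetilde{\phi}_j(x_t)-\nabla\widetilde{\phi}_j(x_\star)$, so that $g(x_t)-g(x_\star)=\frac{1}{\tau}\sum_{j=1}^{n}\mathbf{1}_{j\in\cS_t}v_j$. Expanding the squared norm and using linearity of expectation reduces everything to the first and second moments of the inclusion indicators: $\Exp{\mathbf{1}_{j\in\cS_t}}=\tau/n$ and, for $i\neq j$, the probability that both $i,j\in\cS_t$ equals $\frac{\tau(\tau-1)}{n(n-1)}$. Substituting these, together with the algebraic identity $\sum_{i\neq j}\langle v_i,v_j\rangle=\norm{\sum_j v_j}^2-\sum_j\norm{v_j}^2$, and collecting terms gives
\[
\Exp{\norm{g(x_t)-g(x_\star)}^2}=\frac{n-\tau}{\tau n(n-1)}\sum_{j=1}^{n}\norm{v_j}^2+\frac{\tau-1}{\tau n(n-1)}\norm{\sum_{j=1}^{n}v_j}^2 .
\]

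It then remains to bound the two terms. For the first, co-coercivity of the gradient of the convex, $L_j$-smooth function $\widetilde{\phi}_j$ (the left-hand inequality of \eqref{eq:smooth-grad}, which needs only convexity and smoothness) gives $\norm{v_j}^2\le 2L_j D_{\widetilde{\phi}_j}(x_t,x_\star)\le 2\max_j L_j\, D_{\widetilde{\phi}_j}(x_t,x_\star)$; summing and using the linearity identity $\sum_j D_{\widetilde{\phi}_j}(x_t,x_\star)=nD_f(x_t,x_\star)$ (immediate from $f=\frac{1}{n}\sum_j\widetilde{\phi}_j$ and the definition \eqref{eq:bregman_divergence}) bounds $\sum_j\norm{v_j}^2\le 2n\max_j L_j\, D_f(x_t,x_\star)$. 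For the second term, $\sum_j v_j=n(\nabla f(x_t)-\nabla f(x_\star))$, so $L$-smoothness and convexity of $f$ via \eqref{eq:smooth-grad} give $\norm{\sum_j v_j}^2\le 2n^2 L\, D_f(x_t,x_\star)$. Plugging both bounds into the displayed identity and simplifying the coefficients assembles exactly $2L(\tau)D_f(x_t,x_\star)$ with $L(\tau)=\frac{n-\tau}{\tau(n-1)}\max_j L_j+\frac{n(\tau-1)}{\tau(n-1)}L$.

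The main obstacle is the combinatorial bookkeeping in the middle step: computing the pairwise inclusion probability $\frac{\tau(\tau-1)}{n(n-1)}$ correctly (this is precisely where the hypothesis $n\ge 2$ enters, keeping the denominator nonzero) and tracking the coefficients so that the $\max_j L_j$ contribution from the diagonal terms and the $L$ contribution from the full-gradient term recombine into the stated $L(\tau)$. Everything downstream is a routine application of the smoothness inequalities already recorded in the preliminaries.
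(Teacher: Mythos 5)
Your proposal is correct and follows essentially the same route as the paper's proof: indicator variables with first and second inclusion moments $\nicefrac{\tau}{n}$ and $\nicefrac{\tau(\tau-1)}{n(n-1)}$, the identity $\sum_{i\neq j}\langle v_i,v_j\rangle=\|\sum_j v_j\|^2-\sum_j\|v_j\|^2$, and then the two smoothness bounds (per-component with $L_j\le \max_j L_j$, aggregate with $L$) recombined into $L(\tau)$. Your intermediate displayed identity matches the paper's exactly after normalizing the coefficients, so there is nothing to add.
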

\begin{proof}
	Let $\chi_j$ be the random variable defined by
	\begin{align*}
		\chi_{j}= \begin{cases}1 & j \in S \\ 0 & j \notin S\end{cases}.
	\end{align*}
	It is easy to show that 
	\begin{align}
		\label{prob}
		\Exp{\chi_{j}}=\operatorname{Prob}(j \in S)=\frac{\tau}{n}.
	\end{align}
	Unbiasedness of $g(x)$ now follows via direct computation:
	\begin{eqnarray*}
		\notag	\Exp{g(x)} & \stackrel{\eqref{nice_est}}{=} & \Exp{\frac{1}{\tau} \sum_{j \in S} \nabla \widetilde{\phi}_{j}(x)} = \Exp{\frac{1}{\tau} \sum_{j=1}^{n} \chi_{i} \nabla \widetilde{\phi}_{j}(x)} =\frac{1}{\tau} \sum_{j=1}^{n} \Exp{\chi_{i}}\nabla \widetilde{\phi}_{j}(x) \\
		\notag	&=& \frac{1}{\tau} \sum_{j=1}^{n} \operatorname{Prob}(j \in S) \nabla \widetilde{\phi}_{j}(x)  \stackrel{\eqref{prob}}{=} \frac{1}{\tau} \sum_{j=1}^{n} \frac{\tau}{n} \nabla \widetilde{\phi}_{j}(x) = \nabla f(x).
	\end{eqnarray*}
	Let us define 
	\begin{equation}
		\label{a_j}
		a_{j} \eqdef \nabla \widetilde{\phi}_{j}(x)-\nabla \widetilde{\phi}_{j}(x_\star).
	\end{equation}
	Let $\chi_{j,k}$ be the random variable defined by
	\begin{align*}
		\chi_{j,k}= \begin{cases}1 & j \in \cS_t \text { and } k \in \cS_t \\ 0 & \text { otherwise }\end{cases}.
	\end{align*}
	Note that
	\begin{equation}
		\label{indet}
		\chi_{j,k}=\chi_{j} \chi_{k}.
	\end{equation}
	Further, it is easy to show that
	\begin{equation}
		\label{proba-2}
		\Exp{\chi_{j,k}}=\operatorname{Prob}(j \in \cS_t, k\in \cS_t)=\frac{\tau(\tau-1)}{n(n-1)}.
	\end{equation}
	Let us consider 
	\begin{eqnarray}
		\notag		\Exp{\|g(x_t)-g(x_\star)\|^{2}}  &=& 	\Exp{	\left\|\frac{1}{\tau} \sum_{j \in \cS_t} \nabla \widetilde{\phi}_{j}(x)-\frac{1}{\tau} \sum_{j \in \cS_t} \nabla \widetilde{\phi}_{j}(x_\star)\right\|^{2}}\\
		\notag	&=&	\Exp{	\left\|\frac{1}{\tau} \sum_{j \in \cS_t} \left(\nabla \widetilde{\phi}_{j}(x)- \nabla \widetilde{\phi}_{j}(x_\star)\right)\right\|^{2}}\\
		\notag &\stackrel{\eqref{a_j}}{=}& \Exp{	\left\|\frac{1}{\tau} \sum_{j \in \cS_t} a_j\right\|^{2}}\\
		\notag & =& \frac{1}{\tau^{2}} \Exp{\left\|\sum_{j=1}^{n} \chi_{j} a_{j}\right\|^{2}}\\
		\notag&=&	\frac{1}{\tau^{2}} \Exp{\sum_{j=1}^{n}\left\|\chi_{j} a_{j}\right\|^{2}+\sum_{k \neq j}\left\langle\chi_{j} a_{j}, \chi_{k} a_{k}\right\rangle}\\
		&\stackrel{\eqref{indet}}{=}&\frac{1}{\tau^{2}} \Exp{\sum_{j=1}^{n}\left\|\chi_{j} a_{j}\right\|^{2}+\sum_{k \neq j} \chi_{j,k}\left\langle a_{j}, a_{k}\right\rangle}.
	\end{eqnarray}
	Using the formulas \eqref{prob} and \eqref{proba-2} we can continue:
	\begin{align}
		\notag	\Exp{\|g(x_t)-g(x_\star)\|^{2}} &= \frac{1}{\tau^{2}}\left(\frac{\tau}{n} \sum_{j=1}^{n}\left\|a_{j}\right\|^{2}+\frac{\tau(\tau-1)}{n(n-1)} \sum_{j \neq k}\left\langle a_{j}, a_{k}\right\rangle\right)\\
		\notag	&=\frac{1}{\tau n} \sum_{j=1}^{n}\left\|a_{j}\right\|^{2}+\frac{\tau-1}{\tau n(n-1)} \sum_{j \neq k}\left\langle a_{j}, a_{k}\right\rangle\\
		\notag	&=\frac{1}{\tau n} \sum_{j=1}^{n}\left\|a_{j}\right\|^{2}+\frac{\tau-1}{\tau n(n-1)}\left(\left\|\sum_{j=1}^{n} a_{j}\right\|^{2}-\sum_{j=1}^{n}\left\|a_{j}\right\|^{2}\right)\\
		&=\frac{n-\tau}{\tau(n-1)} \frac{1}{n} \sum_{j=1}^{n}\left\|a_{j}\right\|^{2}+\frac{n(\tau-1)}{\tau(n-1)}\left\|\frac{1}{n} \sum_{j=1}^{n} a_{j}\right\|^{2}.
	\end{align}
	Since $\widetilde{\phi}_j$ is convex and $L_j$-smooth, we know that
	\begin{equation}
		\left\|a_{j}\right\|^{2} =\left\|\nabla f_{j}(x_t)-\nabla f_{j}(x_\star)\right\|^{2} \leq 2 L_{j} D_{\widetilde{\phi}_{j}}(x_t, x_\star).
	\end{equation}
	Since $f$ is convex and $L$-smooth, we know that
	\begin{align}
		\left\|\frac{1}{n} \sum_{i=1}^{n} a_{i}\right\|^{2} =\|\nabla f(x_t)-\nabla f(x_\star)\|^{2} \leq 2 L D_{f}(x_t, x_\star).
	\end{align}
	Let us apply the bound $L_j \leq \max_j L_j$ and use the identity $D_f(x_t, x_\star) = \frac{1}{n}\sum_{j=1}^{n} D_{\widetilde{\phi}_j}(x_t, x_\star):$
	\begin{align}
		\notag	\Exp{	\left\|\frac{1}{\tau} \sum_{j \in \cS_t} \nabla \widetilde{\phi}_{j}(x)-\frac{1}{\tau} \sum_{j \in \cS_t} \nabla \widetilde{\phi}_{j}(x_\star)\right\|^{2}}&\leq \frac{n-\tau}{\tau(n-1)} \frac{1}{n} \sum_{j=1}^{n} 2 L_{j} D_{\widetilde{\phi}_{j}}(x_t, x_\star)\\
		\notag	&\quad +\frac{n(\tau-1)}{\tau(n-1)} 2 L D_{f}(x_t, x_\star)\\
		\notag	&\leq 2 \frac{n-\tau}{\tau(n-1)} \max _{j} L_{j} \frac{1}{n} \sum_{j=1}^{n} D_{f_{j}}(x_t, x_\star)\\
		\notag	&\quad +2 \frac{n(\tau-1)}{\tau(n-1)} L D_{f}(x, y)\\
		\notag	&=2 \frac{n-\tau}{\tau(n-1)} \max _{j} L_{j} D_{f}(x, y)\\
		\notag&\quad +2 \frac{n(\tau-1)}{\tau(n-1)} L D_{f}(x_t, x_\star)\\
		\notag	&=2\left(\frac{n-\tau}{\tau(n-1)} \max _{j} L_{j}+\frac{n(\tau-1)}{\tau(n-1)} L\right) D_{f}(x_t, x_\star).
	\end{align}	
\end{proof}

\subsection{Proof of Theorem~\ref{thm:QLSVRG}}
As in previous analysis we need to show that Assumption~\ref{sigma_t} is satisfied for the \algname{ProxSkip-HUB} method. 
%
\begin{proof}
	Let us consider the first inequality in Assumption~\ref{sigma_t} and show that it holds for the new gradient estimator $\hat{g}_t = \frac{1}{\tau}\sum_{j\in \cS_t} Q(\nabla \widetilde{\phi}_j(x_t) - \nabla \widetilde{\phi}_j(y_t)) + \nabla f(y) $:
	\begin{align}
		&\Exp{\|\hat{g}_t - \nabla f(x_\star) \|^2} = \Exp{\left\| \frac{1}{\tau}\sum_{j\in \cS_t} Q(\nabla \widetilde{\phi}_j(x_t) - \nabla \widetilde{\phi}_j(y_t)) + \nabla f(y_t) - \nabla f(x_\star) \right\|^2}.
	\end{align}
Let $\Delta_{t} =  \frac{1}{\tau}\sum_{j\in \cS_t} (\nabla \widetilde{\phi}_j(x_t) - \nabla \widetilde{\phi}_j(y_t))$ and $ \hat{\Delta}_{t} = \frac{1}{\tau}\sum_{j\in \cS_t} Q(\nabla \widetilde{\phi}_j(x_t) - \nabla \widetilde{\phi}_j(y_t))$. Using smart zero $0=\Delta_{t} -\Delta_{t} $ we have
\begin{eqnarray}
	\notag		\Exp{\|\hat{g}_t - \nabla f(x_\star) \|^2} & = & \Exp{\left\| \hat{\Delta}_{t}  -  \Delta_{t} + \Delta_{t} + \nabla f(y) - \nabla f(x_\star) \right\|^2}\\
\notag	&\stackrel{\eqref{youngs}}{\leq}& 2\Exp{\|\hat{\Delta}_t - \Delta_{t}\|^2} + 2\Exp{\|\Delta_{t}+\nabla f(y_t) - \nabla f(x_\star)\|^2}\\
\notag	&	\leq	&	2\Exp{\left\| \frac{1}{\tau}\sum_{j\in \cS_t} Q(\nabla \widetilde{\phi}_j(x_t) - \nabla \widetilde{\phi}_j(y_t)) -  \frac{1}{\tau}\sum_{j\in \cS_t} (\nabla \widetilde{\phi}_j(x_t) - \nabla \widetilde{\phi}_j(y_t))\right\|^2}\\
\label{eq:12121}
	&& \quad +2\Exp{\left\| \frac{1}{\tau}\sum_{j\in \cS_t} (\nabla \widetilde{\phi}_j(x_t) - \nabla \widetilde{\phi}_j(y_t))+ \nabla f(y) - \nabla f(x_\star) \right\|^2}.
\end{eqnarray}
Let us consider the first term~\eqref{eq:12121}, let us define $\theta^i_t = Q(\nabla \widetilde{\phi}_j(x_t) - \nabla \widetilde{\phi}_j(y_t)) - (\nabla \widetilde{\phi}_j(x_t) - \nabla \widetilde{\phi}_j(y_t))$:
\begin{align}
\notag\Exp{\|\hat{\Delta}_t - \Delta_{t}\|^2}  &= 	\Exp{\left\| \frac{1}{\tau}\sum_{j\in \cS_t} Q(\nabla \widetilde{\phi}_j(x_t) - \nabla \widetilde{\phi}_j(y_t)) - (\nabla \widetilde{\phi}_j(x_t) - \nabla \widetilde{\phi}_j(y_t)) \right\|^2}\\
\notag& = 	\Exp{\left\| \frac{1}{\tau}\sum_{j\in \cS_t}\theta_t^i \right\|^2}\\
\notag& = 	\Exp{ \frac{1}{\tau^2}\left( \sum_{j\in \cS_t}\left\|\theta_t^i \right\|^2 + \sum_{i\neq j} \left\langle \theta_t^i,\theta_t^j  \right\rangle \right)}\\
\notag& = 	\frac{1}{\tau^2}\left( \sum_{j\in \cS_t}\Exp{ \left\|\theta_t^i \right\|^2} + \sum_{i\neq j}\Exp{  \left\langle \theta_t^i,\theta_t^j  \right\rangle} \right).
\end{align}
Using independence and unbiasedness of compressors we have 
\begin{eqnarray}
\notag	\Exp{\|\hat{\Delta}_t - \Delta_{t}\|^2}  &=& \frac{1}{\tau^2}\left( \sum_{j\in \cS_t}\Exp{ \left\|\theta_t^i \right\|^2} + \sum_{i\neq j}\Exp{  \left\langle \theta_t^i,\theta_t^j  \right\rangle} \right)\\
\notag		&=&	 \frac{1}{\tau^2}\left( \sum_{j\in \cS_t}\Exp{ \left\|\theta_t^i \right\|^2} + \sum_{i\neq j}  \left\langle \Exp{\theta_t^i},\Exp{\theta_t^j}  \right\rangle \right)\\
\notag		&=&	 \frac{1}{\tau^2} \sum_{j\in \cS_t}\Exp{ \left\|\theta_t^i \right\|^2} \\
\notag		&=& \frac{1}{\tau^2} \sum_{j\in \cS_t} \Exp{\|Q(\nabla \widetilde{\phi}_j(x_t) - \nabla \widetilde{\phi}_j(y_t)) - (\nabla \widetilde{\phi}_j(x_t) - \nabla \widetilde{\phi}_j(y_t))\|^2}\\
	&\stackrel{\eqref{compress}}{\leq}& \frac{\omega}{\tau} \Exp{\frac{1}{\tau}\sum_{j\in \cS_t} \|\nabla \widetilde{\phi}_j(x_t) - \nabla \widetilde{\phi}_j(y_t)\|^2}.
\end{eqnarray}
Using Young's inequality and expectation of client sampling we get 
\begin{align}
		\label{part1}
\notag	\Exp{\|\hat{\Delta}_t - \Delta_{t}\|^2}  		&\stackrel{\eqref{youngs}}{\leq} \frac{2\omega}{\tau} \Exp{\frac{1}{\tau}\sum_{j\in \cS_t} \|\nabla \widetilde{\phi}_j(x_t) - \nabla \widetilde{\phi}_j(x_\star)\|^2}\\
\notag	&\quad +\frac{2\omega}{\tau} \Exp{\frac{1}{\tau}\sum_{j\in \cS_t} \|\nabla \widetilde{\phi}_j(y_t) - \nabla \widetilde{\phi}_j(x_\star)\|^2}\\
	\notag	&\stackrel{\eqref{youngs}}{\leq} \frac{2\omega}{\tau} \frac{1}{n}\sum_{j=1}^n  \|\nabla \widetilde{\phi}_j(x_t) - \nabla \widetilde{\phi}_j(x_\star)\|^2\\
\notag	&\quad +\frac{2\omega}{\tau} \frac{1}{n}\sum_{j=1}^n \|\nabla \widetilde{\phi}_j(y_t) - \nabla \widetilde{\phi}_j(x_\star)\|^2\\
	&\stackrel{\eqref{eq:smooth-grad}}{\leq} \frac{4\omega}{\tau}L_{\max}D_f(x_t,x_\star)+\frac{2\omega}{\tau} \frac{1}{n}\sum_{j=1}^n \|\nabla \widetilde{\phi}_j(y_t) - \nabla \widetilde{\phi}_j(x_\star)\|^2 .
\end{align}
Let us consider the second term in~\eqref{eq:12121}:
\begin{align}
	\label{eq:aqaqaq}
\notag	&\Exp{\left\| \frac{1}{\tau}\sum_{j\in \cS_t} (\nabla \widetilde{\phi}_j(x_t) - \nabla \widetilde{\phi}_j(y_t))+ \nabla f(y) - \nabla f(x_\star) \right\|^2}\\
\notag	& = 	\Exp{\left\| \frac{1}{\tau}\sum_{j\in \cS_t} (\nabla \widetilde{\phi}_j(x_t) - \nabla \widetilde{\phi}_j(y_t)+\nabla \widetilde{\phi}_j(x_\star)-\nabla \widetilde{\phi}_j(x_\star))+ \nabla f(y) - \nabla f(x_\star) \right\|^2} \\
\notag& \stackrel{\eqref{youngs}}{\leq} 2\Exp{\left\|\frac{1}{\tau}\sum_{j\in \cS_t} \nabla \widetilde{\phi}_j(x_t) - \frac{1}{\tau}\sum_{j\in \cS_t} \nabla \widetilde{\phi}_j(x_\star)  \right\|^2}\\
\notag & \quad + 2\Exp{\left\|\frac{1}{\tau}\sum_{j\in \cS_t} \nabla \widetilde{\phi}_j(x_\star) - \frac{1}{\tau}\sum_{j\in \cS_t} \nabla \widetilde{\phi}_j(y_t) - \left(\nabla f(x_\star)-\frac{1}{\tau}\sum_{j\in \cS_t} \nabla \widetilde{\phi}_j(y_t)  \right) \right\|^2} \\
& \leq 2\Exp{\left\|\frac{1}{\tau}\sum_{j\in \cS_t} \nabla \widetilde{\phi}_j(x_t) - \frac{1}{\tau}\sum_{j\in \cS_t} \nabla \widetilde{\phi}_j(x_\star)  \right\|^2} + 2\Exp{\left\|\frac{1}{\tau}\sum_{j\in \cS_t} \nabla \widetilde{\phi}_j(y_t) - \frac{1}{\tau}\sum_{j\in \cS_t} \nabla \widetilde{\phi}_j(x_\star)  \right\|^2}.
\end{align}
Using Lemma~\ref{tau-nice} and Jensen's inequality~\eqref{jensen} we have
\begin{align}
	\label{part2}
	\notag	&\Exp{\left\| \frac{1}{\tau}\sum_{j\in \cS_t} (\nabla \widetilde{\phi}_j(x_t) - \nabla \widetilde{\phi}_j(y_t))+ \nabla f(y) - \nabla f(x_\star) \right\|^2}\\
	 \notag& \leq 2\Exp{\left\|\frac{1}{\tau}\sum_{j\in \cS_t} \nabla \widetilde{\phi}_j(x_t) - \frac{1}{\tau}\sum_{j\in \cS_t} \nabla \widetilde{\phi}_j(x_\star)  \right\|^2} + 2\Exp{\left\|\frac{1}{\tau}\sum_{j\in \cS_t} \nabla \widetilde{\phi}_j(y_t) - \frac{1}{\tau}\sum_{j\in \cS_t} \nabla \widetilde{\phi}_j(x_\star)  \right\|^2}\\
	&\leq 4L(\tau)D_f(x_t,x_\star) + \frac{2}{n}\sum_{j=1}^n \left\|\nabla \widetilde{\phi}_j(y_t) -  \nabla \widetilde{\phi}_j(x_\star)  \right\|^2 .
\end{align}
Combining two parts \eqref{part1}, \eqref{part2} and plugging into~\eqref{eq:12121} we get
\begin{align}
		\label{eq:first_in}
\notag	\Exp{\|\hat{g}_t - \nabla f(x_\star) \|^2} &	\leq		2\Exp{\left\| \frac{1}{\tau}\sum_{j\in \cS_t} Q(\nabla \widetilde{\phi}_j(x_t) - \nabla \widetilde{\phi}_j(y_t)) -  \frac{1}{\tau}\sum_{j\in \cS_t} (\nabla \widetilde{\phi}_j(x_t) - \nabla \widetilde{\phi}_j(y_t))\right\|^2}\\
\notag&\quad +2\Exp{\left\| \frac{1}{\tau}\sum_{j\in \cS_t} (\nabla \widetilde{\phi}_j(x_t) - \nabla \widetilde{\phi}_j(y_t))+ \nabla f(y) - \nabla f(x_\star) \right\|^2}\\
\notag&\leq 8L(\tau)D_f(x_t,x_\star) + \frac{4}{n}\sum_{j=1}^n \left\|\nabla \widetilde{\phi}_j(y_t) -  \nabla \widetilde{\phi}_j(x_\star)  \right\|^2\\
\notag	& \quad + \frac{8\omega}{\tau}L_{\max}D_f(x_t,x_\star)+\frac{4\omega}{\tau} \frac{1}{n}\sum_{j=1}^n \|\nabla \widetilde{\phi}_j(y_t) - \nabla \widetilde{\phi}_j(x_\star)\|^2\\
&\leq 2\cdot 4\left( L(\tau) + \frac{\omega}{\tau}L_{\max} \right)D_f(x_t,x_\star) + 4\left(1+\frac{\omega}{\tau}\right)\sigma_{t},
\end{align}
where $\sigma_{t} =  \frac{1}{n}\sum_{j=1}^n \|\nabla \widetilde{\phi}_j(y_t) - \nabla \widetilde{\phi}_j(x_\star)\|^2$.
Let us consider update of control variable $y_t$:
\begin{align}
	y_{t+1}=\left\{\begin{array}{lll}
		x_{t} & \text { with probability } & q \\
		y_{t} & \text { with probability } & 1-q
	\end{array}\right. .
\end{align}
Let us show that second inequality in Assumption~\ref{sigma_t}:
\begin{align}
	\label{eq:second_in}
\notag	\Exp{ \sigma_{t+1} } &= \Exp{\frac{1}{n} \sum_{j=1}^{n}  \|\nabla \widetilde{\phi}_j(y_{t+1}) - \nabla \widetilde{\phi}_j(x_\star)\|^2 }\\
\notag	&=\left(1-q\right)\frac{1}{n} \sum_{j=1}^{n}  \|\nabla \widetilde{\phi}_j(y_{t}) - \nabla \widetilde{\phi}_j(x_\star)\|^2 + q \frac{1}{n} \sum_{j=1}^{n}  \|\nabla \widetilde{\phi}_j(x_{t}) - \nabla \widetilde{\phi}_j(x_\star)\|^2\\
	& = \left(1-q\right)\sigma_{t} + 2 qL_{\max} D_f(x_t,x_\star).
\end{align}
Using \eqref{eq:first_in} and \eqref{eq:second_in} bounds we can confirm that Assumption~\ref{sigma_t} is satisfied with the following constants:
\begin{align*}	
	A = 4\left(L(\tau)+\frac{\omega}{\tau}L_{\max}\right) , 
	\quad B = 4\left(1+\frac{\omega}{\tau}\right), \quad C = 0, \quad \tilde{A} = q L_{\max},\quad \tilde{B} = 1-q, \quad \tilde{C} = 0.
\end{align*}
Applying Theorem~\ref{thm:main} leads to the final result
	\begin{align}
		\squeeze 	\Exp{\Psi_{T}} \leq \max \left\{(1-\gamma \mu)^{T},(1-p^2)^T,\left(1-\nicefrac{q}{2}\right)^{T}\right\} \Psi_{0},
	\end{align}
	where the Lyapunov function is defined by $$	\Psi_{t} \eqdef \|x_{t} - x_{\star}\|^2 + \frac{\gamma^2}{p^2}\|h_t - h_{\star}\|^2 + \gamma^{2} \frac{8}{q}\left(1+\frac{\omega}{\tau}\right) \sigma_{t} .$$
	Let us set $\gamma = \frac{1}{A+\MM \tilde{A}}$, $p = \sqrt{\gamma\mu}$ and $q = 2\gamma\mu$. Using the same proof as for \algname{ProxSkip} in Section~\ref{sec:GD_est} and $L(\tau) \leq L_{\max}$ we get communication and iteration complexities:  
\begin{align*}
	T_{\text{comm.}}=\mathcal{O}\left(\sqrt{\frac{L_{\max}}{\mu}\left(1+\frac{\omega}{\tau}\right)}\log\frac{1}{\varepsilon}\right), \qquad 
		T_{\text{iter.}} = \mathcal{O}\left(\frac{L_{\max}}{\mu}\left(1+\frac{\omega}{\tau}\right)\log\frac{1}{\varepsilon}\right).
\end{align*}

\end{proof}
If we use full participation $\tau = n$ and $q = \frac{1}{\omega+1}$ and $r(x) \equiv 0$ then we get the same rate as for \algname{DIANA}~\citep{DIANA,DIANA2} and \algname{RAND-DIANA}~\citep{Shifted}.

\section{Analysis of ProxSkip-LSVRG}
The analysis of \algname{ProxSkip-LSVRG} is almost the same to the analysis of \algname{ProxSkip-HUB}, with one exception. We use a different sigma component: 
\begin{align}
	\sigma_{t} = \Exp{\left\| \frac{1}{\tau} \sum_{j\in \cS_t}(\nabla \widetilde{\phi}_j(y_t) - \nabla \widetilde{\phi}_j(x_\star)) \right\|^2}.
\end{align}
Let us consider $\Exp{\| \hat{g}_t - \nabla f(x_\star) \|^2}$:
\begin{align}
	\Exp{\| \hat{g}_t - \nabla f(x_\star) \|^2} = \Exp{\left\|\frac{1}{\tau}\sum_{j \in S_t} \left( \nabla \widetilde{\phi}_j(x_t) - \nabla \widetilde{\phi}_j(y_t) \right) + \nabla f(y)   - \nabla f(x_\star) \right\|^2} 
\end{align}
Using~\eqref{eq:aqaqaq} we have 
\begin{align}
\notag		\Exp{\| \hat{g}_t - \nabla f(x_\star) \|^2} &\leq 2\Exp{\left\|\frac{1}{\tau}\sum_{j\in \cS_t} \nabla \widetilde{\phi}_j(x_t) - \frac{1}{\tau}\sum_{j\in \cS_t} \nabla \widetilde{\phi}_j(x_\star)  \right\|^2}\\
\notag			& + 2\Exp{\left\|\frac{1}{\tau}\sum_{j\in \cS_t} \nabla \widetilde{\phi}_j(y_t) - \frac{1}{\tau}\sum_{j\in \cS_t} \nabla \widetilde{\phi}_j(x_\star)  \right\|^2}\\
		&\leq 4L(\tau) D_f(x_t,x_\star) +2\sigma_{t} .
\end{align}
Let us show that second inequality in Assumption~\ref{sigma_t}, using Lemma~\ref{tau-nice} we get
\begin{align}
	\label{eq:second_in}
	\notag	\Exp{ \sigma_{t+1} } &=  \Exp{\left\| \frac{1}{\tau} \sum_{j\in \cS_t}(\nabla \widetilde{\phi}_j(y_{t+1}) - \nabla\widetilde{\phi}_j(x_\star)) \right\|^2}\\
	\notag	&=\left(1-q\right) \Exp{\left\| \frac{1}{\tau} \sum_{j\in \cS_t}(\nabla\widetilde{\phi}_j(y_t) - \nabla\widetilde{\phi}_j(x_\star)) \right\|^2} + q  \Exp{\left\| \frac{1}{\tau} \sum_{j\in \cS_t}(\nabla \widetilde{\phi}_j(x_t) - \nabla \widetilde{\phi}_j(x_\star)) \right\|^2}\\
	& = \left(1-q\right)\sigma_{t} + 2 qL(\tau) D_f(x_t,x_\star).
\end{align}
We showed that Assumption~\ref{sigma_t} is satisfied with following constants:
\begin{align}
		A = 2 L(\tau) , 
	\quad B = 2, \quad C = 0, \quad \tilde{A} = q L(\tau),\quad \tilde{B} = 1-q, \quad \tilde{C} = 0.
\end{align}
Applying Theorem~\ref{thm:main} with $\gamma = \frac{1}{6L(\tau)}$  we get final bound:
\begin{align*}
	\squeeze 	\Exp{\Psi_{T}} \leq \max \left\{(1-\gamma \mu)^{T},(1-p^2)^T,\left(1-\frac{q}{2}\right)^{T}\right\} \Psi_{0},
\end{align*}
where the Lyapunov function is defined as $$	\Psi_{t} \eqdef \|x_{t} - x_{\star}\|^2 + \frac{\gamma^2}{p^2}\|h_t - h_{\star}\|^2 + \gamma^{2} \frac{4}{q}\sigma_{t}.$$

Using the same argument as for \algname{ProxSkip} and setting $\frac{q}{2} = \gamma\mu$, we get 
\begin{align}
T_{\text{comms}} =	\mathcal{O}\left(\sqrt{\frac{L(\tau)}{\mu}}\log\frac{1}{\varepsilon}\right), \qquad T_{\text{iters}} =\mathcal{O}\left(\frac{L(\tau)}{\mu}\log\frac{1}{\varepsilon}\right).
\end{align}
If $r(x) \equiv 0$, this recovers results of \citet{L-SVRG}.

\end{document}